\crefname{assumption}{Assumption}{Assumptions}
\crefname{equation}{Eq.}{Eqs.}
\crefname{figure}{Fig.}{Figs.}
\crefname{table}{Table}{Tables}
\crefname{section}{Sec.}{Secs.}
\crefname{algorithm}{Algorithm}{Algorithms}
\crefname{theorem}{Thm.}{Thms.}
\crefname{lemma}{Lemma}{Lemmas}
\crefname{proposition}{Prop.}{Propositions}
\crefname{corollary}{Cor.}{Cors.}
\crefname{example}{Example}{Examples}
\crefname{appendix}{Appendix}{Appendixes}
\crefname{remark}{Remark}{Remark}
\newcounter{remark}[section]
\newcommand{\MM}{\operatorname{MM}}
\newcommand{\M}{\operatorname{M}}
\newcommand{\RM}{\operatorname{RM}}
\newcommand{\F}{\operatorname{F}}
\newcommand{\rank}{\operatorname{rank}}
\newcommand{\calA}{\mathcal{A}}
\newcommand{\calV}{\mathcal{V}}
\newcommand{\calX}{\mathcal{X}}
\newcommand{\calY}{\mathcal{Y}}
\newcommand{\calZ}{\mathcal{Z}}
\newcommand{\calQ}{\mathcal{Q}}
\newcommand{\calE}{\mathcal{E}}
\newcommand{\calP}{\mathcal{P}}
\newcommand{\calG}{\mathcal{G}}
\newcommand{\calC}{\mathcal{C}}
\newcommand{\calN}{\mathcal{N}}
\newcommand{\I}{\operatorname{I}}
\newcommand{\II}{\operatorname{II}}
\newcommand{\III}{\operatorname{III}}
\newcommand{\IV}{\operatorname{IV}}
\DeclareMathAlphabet{\mathsfsl}{OT1}{cmss}{m}{sl}
\renewcommand{\phi}{\varphi}
\newcommand{\Rspace}[1]{\mathbb{R}^{#1}}
\newcommand{\argmin}{\operatorname*{arg\; min}}
\newcommand{\argmax}{\operatorname*{arg\; max}}
\newcommand{\Expect}{\operatorname{\mathbb{E}}}
\theoremstyle{plain}  
\newtheorem{theorem}{Theorem}[section]
\newtheorem{definition}[theorem]{Definition}
\newtheorem{lemma}[theorem]{Lemma}
\newtheorem{proposition}[theorem]{Proposition}
\newtheorem{corollary}[theorem]{Corollary}
\begin{document}

%

%

\twocolumn[

\aistatstitle{On the Consistency of Max-Margin Losses}

\aistatsauthor{ Alex Nowak-Vila \And Alessandro Rudi \And  Francis Bach }

\aistatsaddress{ ENS-INRIA-PSL \\ Paris, France \And  ENS-INRIA-PSL \\ Paris, France \And ENS-INRIA-PSL \\ Paris, France } ]

\begin{abstract}
The foundational concept of Max-Margin in machine learning is ill-posed for output spaces with more than two labels such as in structured prediction. In this paper, we show that the Max-Margin loss can only be consistent to the classification task under highly restrictive assumptions on the discrete loss measuring the error between outputs. These conditions are satisfied by distances defined in tree graphs, for which we prove consistency, thus being the first losses shown to be consistent for Max-Margin beyond the binary setting. We finally address these limitations by correcting the concept of Max-Margin and introducing the Restricted-Max-Margin, where the maximization of the loss-augmented scores is maintained, but performed over a subset of the original domain. The resulting loss is also a generalization of the binary support vector machine and it is consistent under milder conditions on the discrete loss.
\end{abstract}

\section{INTRODUCTION} \label{sec:introduction}
One of the first binary classification methods learned in a machine learning course is the support vector machine (SVM) \citep{boser1992training, cortes1995support} and it is introduced using the principle of maximum margin: assuming the data are linearly separable, the classification hyperplane must maximize the separation to the observed examples. Having this intuition in mind, the same principle has been used to extend this notion to larger output spaces $\calY$, such as multi-class classification \citep{crammer2001algorithmic} and structured prediction \citep{taskar2004max,tsochantaridis2005large}, where the separation to the observed examples is controlled by a discrete loss~$L(y, y')$ measuring the error between outputs $y$ and~$y'$. The resulting method generalizes the binary SVM and corresponds to minimizing the so-called Max-Margin loss
\begin{equation}\label{eq:maxmarginloss}
    S_{\M}(v, y) = \max_{y'\in\calY}~L(y, y') + v_{y'} - v_y,
\end{equation}
where $v\in\Rspace{|\calY|}$ is a vector with coordinate $v_y$ encoding the score for output $y$.
Unfortunately, this method may not be \emph{consistent}, i.e., minimizing the Max-Margin loss \eqref{eq:maxmarginloss} may not lead to a minimization of the discrete loss~$L$ of interest. In particular, it is known that the Max-Margin loss is only consistent for the 0-1 loss under the dominant label condition, i.e., when for every input there exists an output element with probability larger than $1/2$~\citep{liu2007fisher}, which is always satisfied in the binary case. However, far less is known for other tasks. Indeed, the Max-Margin loss is widely used for structured output spaces where the discrete loss $L$ defining the task is different than the 0-1 loss, under the name of Structural SVM (SSVM) \citep{taskar2005discriminative, caetano2009learning, smith2011linguistic} or Max-Margin Markov Networks ($\operatorname{M^3N}$)~\citep{taskar2005discriminative}. In this general setting, the following questions remain unanswered:
\begin{itemize}
    \item[(i)] \emph{Does there exist a necessary condition on $L$ for consistency to hold? Does it exist a space of losses for which consistency holds? Can we generalize the consistency result under the dominant label assumption beyond the 0-1 loss?}
    \item[(ii)] \emph{Can we correct the Max-Margin loss to make it consistent by maintaining the additive and maximization structure of the Max-Margin loss?}
\end{itemize}
We answer these questions in this paper. In particular, we make the following contributions:
\begin{itemize}
    \item[-] We prove that the Max-Margin loss can only be consistent under a restrictive necessary condition on the structure of the loss $L$, indeed, the loss $L$ has to be a distance and satisfy the triangle inequality as an \emph{equality} for several groups of outputs. As a positive result, we show that a distance defined in a tree graph, such as the absolute deviation loss used in ordinal regression, satisfies this condition and it is consistent, thus providing the first set of losses for which consistency holds beyond the binary setting. We also extend the existing \emph{partial} consistency result of the 0-1 loss by extending the result under the dominant label condition to all losses that are distances.
    \item[-] As a secondary contribution, we introduce the Restricted-Max-Margin loss, where the maximization of the loss-augmented scores defining the Max-Margin loss is restricted to a subset of the simplex. The resulting loss also generalizes the binary SVM and it is consistent under milder assumptions on $L$. Moreover, we show the connections between these losses and the Max-Min-Margin loss \citep{fathony2016adversarial, duchi2018multiclass, nowak2020consistent}, where consistency always holds independently of the discrete loss $L$.
\end{itemize}

\section{MAX-MARGIN AND MAIN RESULTS}\label{sec:2} 

In this section we introduce the concept of Max-Margin learning and its consistency from its origins in binary classification to the structured output setting. This is followed by the presentation of the main results of this paper and its implications are discussed.

\subsection{Max-Margin Learning}
\paragraph{Binary output.}
Let~$(x_1, y_1), \ldots, (x_n, y_n)$ be $n$ examples of input-output pairs sampled from an unknown distribution $\rho$ defined in~$\calX\times\calY$. Let us first assume that the input space $\calX$ is a vector space and~$\calY=\{-1, 1\}$ represents binary labels. The goal is to construct a binary-valued function~$f:\calX\xrightarrow[]{}\calY$ minimizing the expected classification error 
\begin{equation}\label{eq:discreterisk}
    \calE(f) = \Expect_{(x,y)\sim\rho}L(f(x), y),
\end{equation}
where $L(y, y') = 1(y\neq y')$ is the binary 0-1 loss. The concept of max-margin was initially defined in this setting to construct a predictor of the form $\operatorname{sign}(g(x))$ where~$g(x) = w^\top x + b$ is an affine function defining a hyperplane with maximum separation to the examples assuming linearly separable data \citep{boser1992training}. In this setting, an example~$x_i$ is correctly classified if $(w^\top x_i + b_i)y_i > 0$ and misclassified otherwise. The max-margin hyperplane can be found by minimizing $\|w\|_2^2$ under the constraint~$(w^\top x_i + b)y_i \geq 1$ for all $n$ examples. When the data are not linearly separable, some examples are allowed to be misclassified by introducing some non-negative \emph{slack variables} $\xi_i$ and solving the optimization problem known as the {\em support vector machine (SVM)}~\citep{cortes1995support}:
\begin{equation*}
\left\{\begin{array}{ll}
     \underset{w,b,\xi}{\min} & \frac{1}{n}\sum_{i=1}^n\xi_i + \frac{\lambda}{2}\|w\|_2^2  \\
     \text{s.t.} & (w^\top x_i + b)y_i \geq 1 - \xi_i, \hspace{0.4cm} \xi_i\geq 0,
        ~\forall i\in[n],
\end{array}\right.
\end{equation*}
where $\lambda>0$ is a parameter used to balance the first term with the second. We can re-write the constraints as~$\xi_i\geq 1 - y_ig(x_i)$ for non-negative $\xi_i$'s and extend the affine hypothesis space to a generic functional space $\calG$ with associated norm $\|\cdot\|_{\calG}$ to allow for non-linear predictors, such as reproducing kernel Hilbert spaces (RKHS)~\citep{aronszajn1950theory}. Then, the problem above can be written as a convex regularized empirical risk minimization (ERM)~\citep{vapnik1992principles} problem
\begin{equation}\label{eq:maxmarginERM}
    \min_{g\in\calG}~\frac{1}{n}\sum_{i=1}^nS_{\M}(g(x_i), y_i) + \frac{\lambda}{2}\|g\|_{\calG}^2,
\end{equation}
where $S_{\M}(v, y) = \max(1-yv, 0)$ is the binary Max-Margin loss (also called SVM loss), and now~$\lambda$ can be interpreted as the regularization parameter. An important property of the classification method is that the estimated predictor solving \eqref{eq:maxmarginERM} over all measurable functions converges to the predictor $f^\star$ minimizing the expected classification error~\eqref{eq:discreterisk} in the infinite data regime ($n\to\infty$ and $\lambda\to 0$) \citep{vapnik2013nature}. More concretely, the minimizer $g^\star$ of the expected risk $\Expect_{(x,y)\sim\rho}S_{\M}(g(x), y)$ must satisfy $f^\star = \operatorname{sign}(g^\star)$. This property is called {\em Fisher consistency} \citep{bartlett2006convexity} (or simply consistency) and can be studied in terms of the conditional expectation~$q(x) := \rho(1|x)$, as~$f^\star(x)$ and~$g^\star(x)$ can be characterized in terms of this quantity \footnote{This is because $f^\star$ and $g^\star$ are minimizers over all measurable functions of an expectation over~$\calX$.}. Note that in the rest of the paper we will drop the dependence in~$x$ from the function~$q$: a statement~$P(q)$ for all $q\in[0,1]$ must then be read as~$P(q(x))$ for all~$x \in\calX$. Let $v_{\M}^\star(q)\subseteq\Rspace{}$ and $y^\star(q)\subseteq\calY$ be the minimizers of the conditional risks
$\Expect_{y'\sim q} S_{\M}(v, y')$ and
$\Expect_{y'\sim q}L(y, y')$, respectively. Then, Fisher consistency is equivalent to say that if
$v\in v_{\M}^\star(q)$, then $\operatorname{sign}(v)\in y^\star(q)$ for all~$q\in[0, 1]$~\citep{devroye2013probabilistic}.

\begin{figure*}[ht!]
    \centering
    \resizebox{1.018\textwidth}{!}{
    \includegraphics[width=0.3\textwidth]{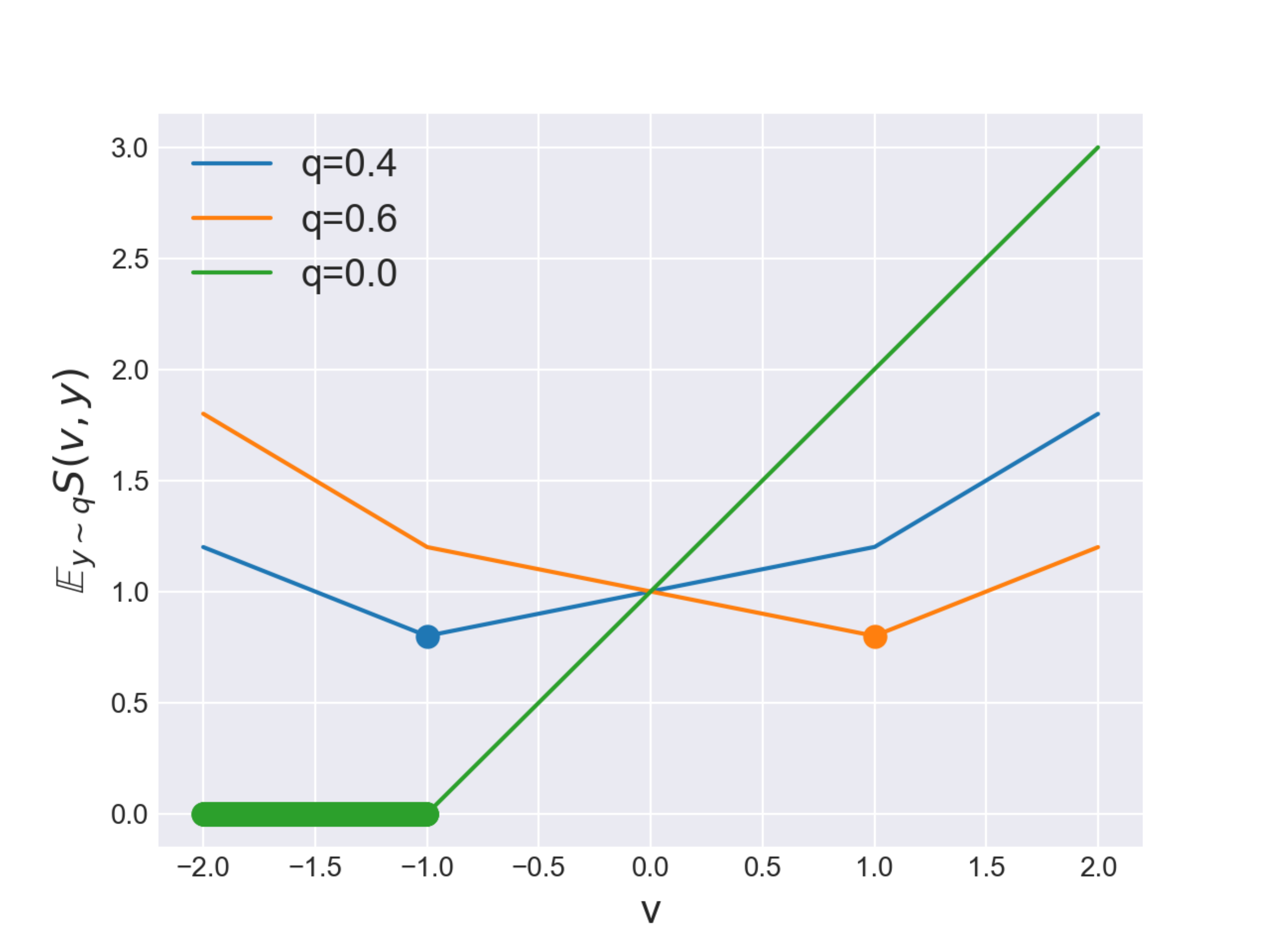}
    \includegraphics[width=0.3\textwidth]{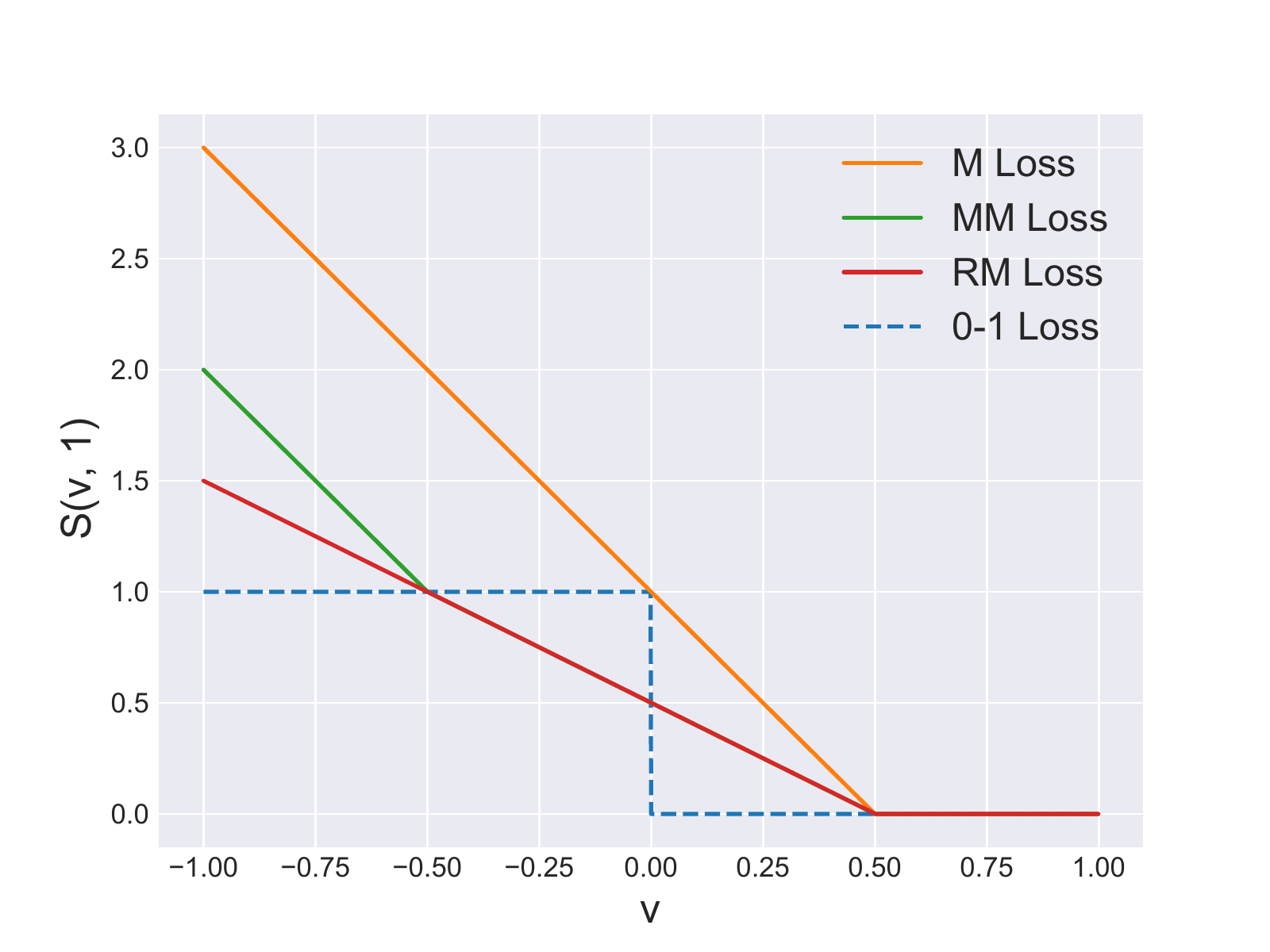}
    \includegraphics[width=0.3\textwidth]{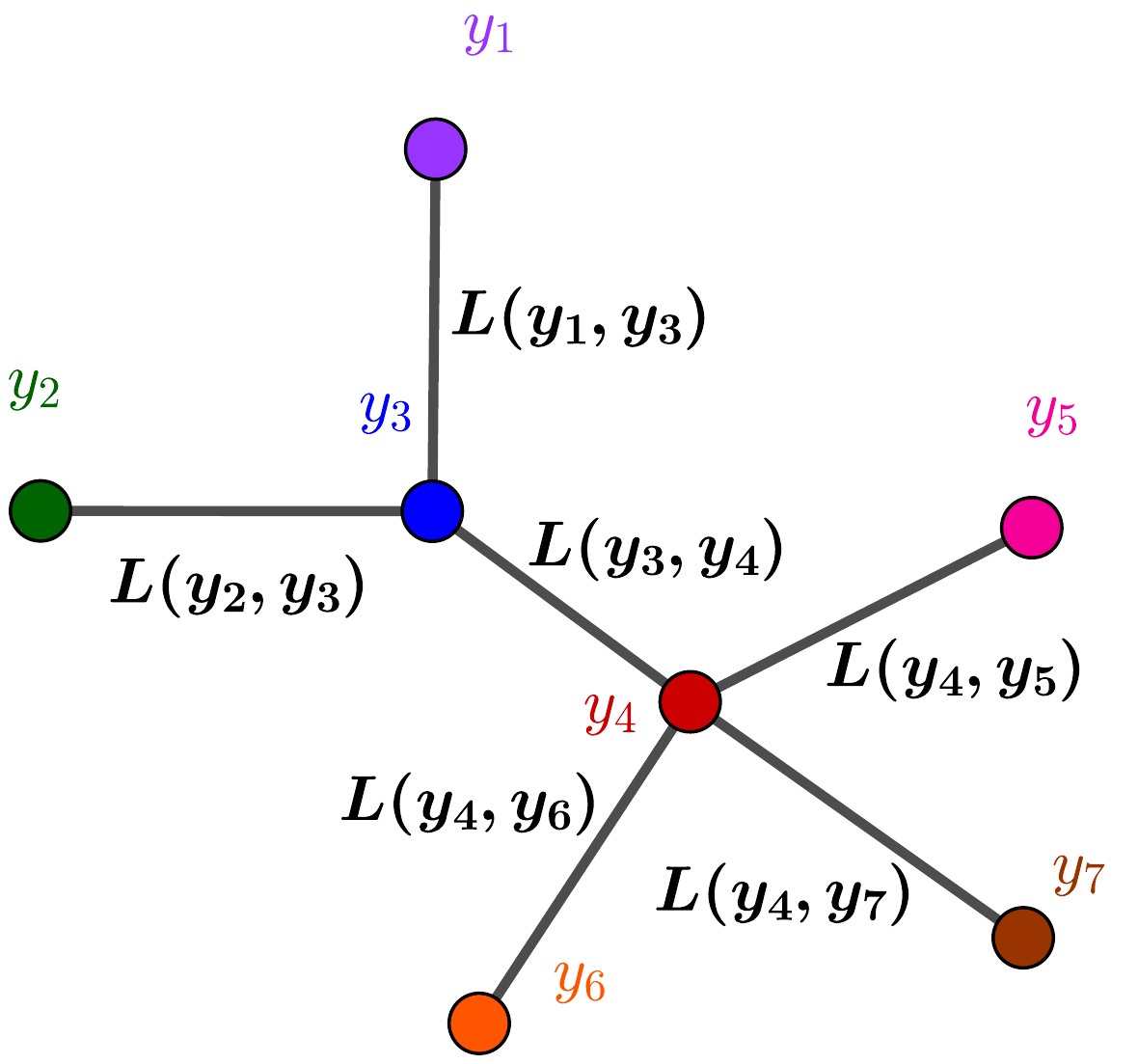}
    }
    \caption{
    \textbf{Left:} Plots of the conditional risks of the Max-Margin loss for $q=0.4, q=0.6$ and $q=0$, respectively. The conditional risk in the set of minimizers $v_{\M}^\star(q)$ is plotted with a thick point / line.
    \textbf{Middle:} Plot of $S_{\M}(v, 1), S_{\MM}(v, 1)$ and $S_{\RM}(v, 1)$ in the binary setting with $v=v_1=-v_{-1}$. In this case, $S_{\M} = 2S_{\RM}$ (so both losses generalize the binary SVM up to a factor of 2) and $S_{\M}$ is the only one upper-bounding the 0-1 loss. Moreover, the three losses are consistent with the 0-1 loss. \textbf{Right:} Distance defined in a tree: the distance/loss between two nodes is the sum of the distances between adjacent nodes of the path between them. For every triplet of outputs $y,y',y''\in\calY$, either they are aligned in a path, or there exists $z\in\calY$ belonging to the shortest path between all pairs.}
    \label{fig:condsvm}
\end{figure*}

This property is satisfied as (see also left \cref{fig:condsvm}):
\begin{align*}
    y^\star(q) &= \left\{\begin{array}{lr}
    \{1\} & q\in(1/2,1] \\
    \{-1,1\} & q = 1/2 \\
    \{-1\} & q\in[0, 1/2),
\end{array}
\right., \\
v_{\M}^\star(q) &= 
    \left\{\begin{array}{lr}
        ~[1, \infty) & q=1 \\
        ~\{1\} & q \in (1/2, 1) \\
        ~[-1, 1] & q = 1/2 \\
        ~\{-1\} & q \in (0, 1/2) \\
        ~(-\infty, -1] & q = 0. \\
    \end{array}\right. .
\end{align*}

\paragraph{Structured prediction.}
In the structured prediction setting, we have $k=|\calY|$ possible outputs and the goal is to estimate a discrete-valued function $f$ minimizing~\eqref{eq:discreterisk} where now~$L:\calY\times\calY\xrightarrow[]{}\Rspace{}$ is a generic non-negative discrete loss function between output pairs defining the task at hand. We construct predictors of the form $\argmax_{y\in\calY}~g_y(x)$, where $g:\calX\xrightarrow[]{}\Rspace{k}$ is a vector-valued function assigning scores to each of the $k$ possible outputs. The maximum margin principle from binary classification is generalized as follows. 
For every example $(x_i, y_i)$, the method minimizes the squared norm $\|g\|_{\calG}^2$ under the constraints
\begin{equation}\label{eq:lossaugmentedscores}
    g_{y_i}(x_i) \geq \underbrace{L(y, y_i) + g_{y}(x_i)}_{\text{loss-augmented scores}},
\end{equation}
for all possible outputs $y$. By writing the above constraint as $g_{y_i}(x_i) - g_{y}(x_i)\geq L(y, y_i)$, we observe that this generalizes the condition $y_ig(x_i)\geq 1$ from binary classification when~$g = g_{1} = - g_{-1}$ so that the argmax corresponds to the sign and~$L$ is the binary 0-1 loss. As in the binary case, introducing slack variables and turning it into a regularized ERM problem of the form \eqref{eq:maxmarginERM} we obtain the Max-Margin loss $S_{\M}(v, y) = \max_{y'\in\calY}~L(y, y') + v_{y'} - v_y$, 
which is constructed as a maximization of the loss-augmented scores defined in \eqref{eq:lossaugmentedscores}. To ease notation, the dependence of $S$ on the loss $L$ is deduced from the context. The Max-Margin loss \eqref{eq:maxmarginloss} is known as the Crammer-Singer SVM \citep{crammer2001algorithmic} when~$L$ is the 0-1 loss, and it is also widely used in structured prediction settings with exponentially large output spaces under the name of Structural SVM \citep{joachims2006training} or Max-Margin Markov Networks ($\operatorname{M^3N}$) \citep{taskar2004max} by using losses between structured outputs such as sequences, permutations, graphs, etc \citep{bakir2007predicting}. An interesting property of this loss is that it upper-bounds the discrete loss as $L(\argmax_{y'\in\calY}~v_{y'}, y) \leq S_{\M}(v, y)$, for all $v\in\Rspace{k}$ and $y\in\calY$, which can guide us to think that minimizing $\Expect_{(x,y)\sim \rho}S_{\M}(g(x), y)$ leads to minimizing $\Expect_{(x,y)\sim \rho}L(\argmax_{y'\in\calY}~g_{y'}(x), y)$. Unfortunately this intuition is misleading, as this bound is in general far from tight. Analogously to the binary case, let $q:\calX \to \Delta$ be the conditional distribution where~$\Delta$ is the simplex over $\calY$ (we again drop the dependence on $x$, since each statement must be read as holding for every $x \in\calX$). Moreover, we define for every $q$ the set of minimizers of the conditional risks as
\begin{align*}
    y^\star(q) &= \argmin_{y\in\calY}~L_y^\top q \subseteq\calY, \\
    v_{\M}^\star(q) &= \argmin_{v\in\Rspace{k}}~S_{\M}(v)^\top q\subseteq\Rspace{k},
\end{align*}
where $L_y = (L(y,y'))_{y'\in\calY} \in\Rspace{k}$ is the $y$-th row of the loss matrix and $S_{\M}(v) = (S_{\M}(v, y'))_{y'\in\calY}\in\Rspace{k}$. We say that~$S_{\M}$ is Fisher consistent to $L$ if for all $q\in\Delta$
\begin{equation}\label{eq:decodingconsistencymax}
     v\in v_{\M}^\star(q) \implies \underset{y\in\calY}{\argmax}~v_y\in y^\star(q). 
\end{equation}

\textbf{Related works on consistency of Max-Margin. } The Max-Margin loss is only consistent to the 0-1 loss under the dominant label assumption~$\max_{y\in\calY}q_y\geq 1/2$~\citep{liu2007fisher}. \cite{ramaswamy2018consistent}  show that it is consistent to the ``abstain'' loss, but in this case the loss appearing in the definition \eqref{eq:maxmarginloss} is not the same as the classification loss $L$. \cite{mcallester2007generalization} studies consistency of non-convex versions of max-margin methods on linear hypothesis spaces. There exist several generalizations of the binary SVM to larger output spaces other than Max-Margin \citep{dogan2016unified} such as Weston-Watkins (WW-SVM) \citep{weston1999support},  Lee-Lin-Wahba (LLW-SVM) \citep{lee2004multicategory}, Simplex-Coding (SC-SVM) \citep{mroueh2012multiclass}, with the last two being consistent and defined as sums. However, the only loss with a max-structure is~\eqref{eq:maxmarginloss}, which makes it computationally feasible to work in structured spaces of exponential size such as sequences or permutations. The Max-Min-Margin loss \citep{fathony2016adversarial, duchi2018multiclass, nowak2020consistent} (defined below in \cref{eq:maxminmargin}) is always consistent, it has a max-min structure and can be used in structured prediction settings. However, it does not correspond to the SVM in the binary setting, so it cannot be considered a generalization of the binary SVM.

\subsection{Main Results} \label{sec:mainresults}
We assume that $L$ is symmetric and that $L(y,y') = 0$ if and only if $y=y'$. Symmetry of $L$ is assumed for the sake of exposition, but it is only required for the results on Max-Margin.

\paragraph{Main Results on Max-Margin.}
The following \cref{th:mainresultmaxmargin} is our main negative result.
\begin{theorem}[Necessary condition for consistency $S_{\M}$]\label{th:mainresultmaxmargin}
Let $k=|\calY|>2$. If the Max-Margin loss is consistent to $L$, then $L$ is a distance and for every three outputs $y_1,y_2,y_3\in\calY$, there exists $z\in\calY$ for which the following three identities hold:
\begin{align*}
      L(y_1, y_2) &= L(y_1, z) + L(z, y_2), \\
    L(y_1, y_3) &= L(y_1, z) + L(z, y_3), \\
    L(y_2, y_3) &= L(y_2, z) + L(z, y_3).
\end{align*}
\end{theorem}

If $z=y_2$ in \cref{th:mainresultmaxmargin}, then the only informative condition is $L(y_1, y_3) = L(y_1, y_2) + L(y_2, y_3)$ as $L$ is assumed to be symmetric, which means that the outputs $y_1,y_2,y_3$ are `aligned' in the output space (analogously for $z=y_2,y_3$). On the other hand, if $z\neq y_1,y_2,y_3$, then the three equations are informative and all distances between the pairs can be decomposed into distances to $z$. The following discrete losses do not satisfy the above necessary condition (see Section 2 of Appendix):
\begin{itemize}
    \item[-] Losses which are not distances (such as the squared discrete loss $(y-y')^2$).
    \item[-] Losses with full rank loss matrix with existing $q\in\operatorname{int}(\Delta)$ for which all outputs are optimal, i.e.,~$y^\star(q)=\calY$ (such as the 0-1 loss).
    \item[-] Hamming losses $L(y,y')=\frac{1}{M}\sum_{m=1}^ML_m(y_m, y_m')$ with $y,y' \in\Pi_{m=1}^M\calY_m$ where $L_m$ does not satisfy the necessary conditions for some $m=1,\ldots,M$.
    \item[-] Hamming loss on permutations $L(\sigma,\sigma')=\frac{1}{M}\sum_{m=1}^M1(\sigma(m)\neq\sigma'(m))$ with $\sigma,\sigma'$ permutations of size $M$, used for graph matching \citep{petterson2009exponential, caetano2009learning}.
\end{itemize}

It is an open question whether the necessary condition of \cref{th:mainresultmaxmargin} is also sufficient. The following \cref{th:sufficientcondition} shows that distances defined in a tree, which always satisfy this condition (see right \cref{fig:condsvm}), are indeed consistent. 

\begin{theorem}[Sufficient condition for consistency $S_{\M}$]\label{th:sufficientcondition}
If $L$ is a distance defined in a tree, then the Max-Margin loss is consistent to $L$.
\end{theorem}

An important example of these losses is the \emph{absolute deviation} loss used in ordinal regression,
\begin{equation*}
L(y,y') = |\gamma_{y} - \gamma_{y'}|, \hspace{0.5cm}\gamma\in\Rspace{k},    
\end{equation*}

\begin{figure*}[ht!]
    \centering
    \hspace{-0.01\textwidth}
    \resizebox{1.018\textwidth}{!}{
    \includegraphics[width=0.04\textwidth]{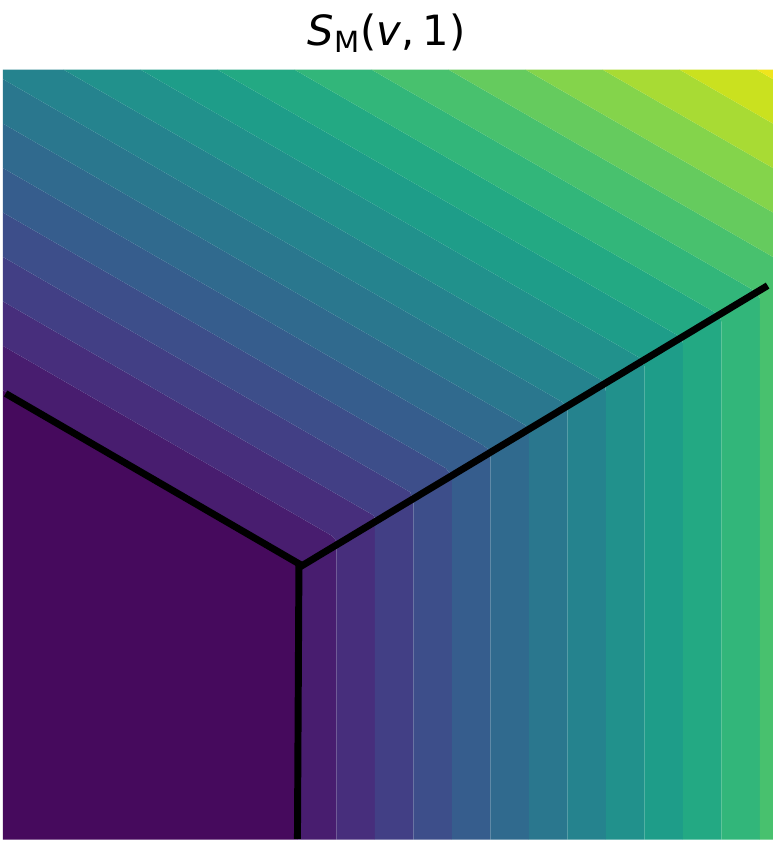}
    \includegraphics[width=0.04\textwidth]{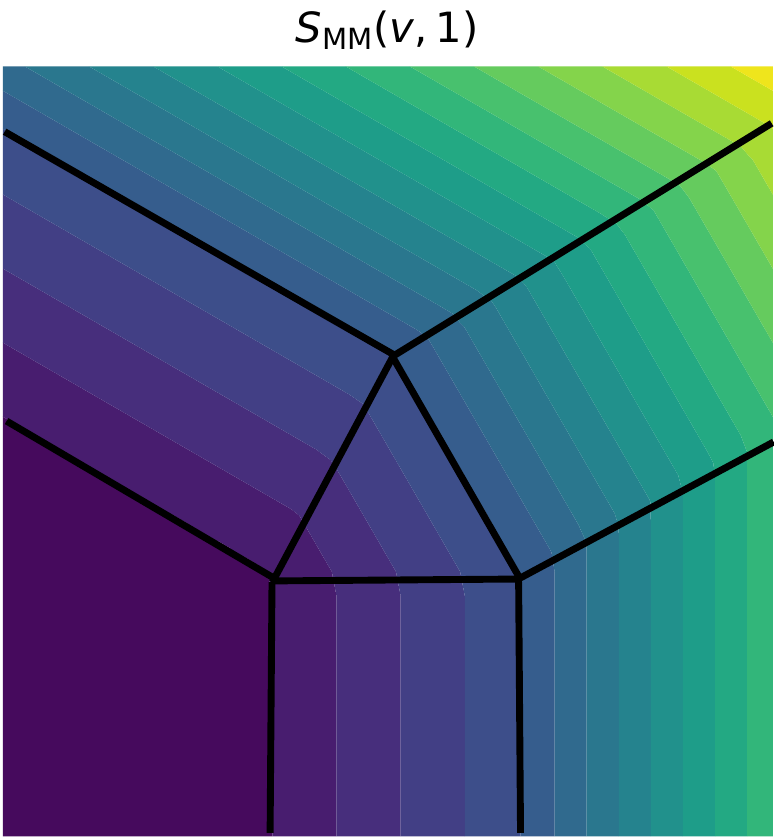}
    \includegraphics[width=0.04\textwidth]{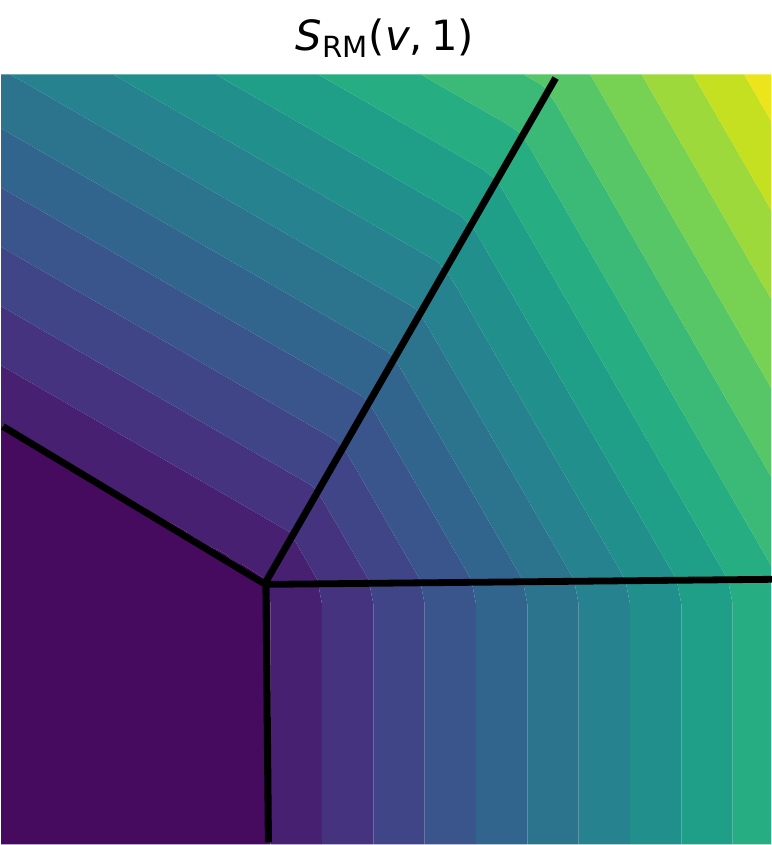}
    }
    \vspace*{-.25cm}
    \caption{From left to right: plots of $S_{\M}(v, 1), S_{\MM}(v, 1)$ and $S_{\RM}(v, 1)$ in the three-label setting with $v^\top 1 = 0$ for the 0-1 loss (note that $S(v+c1,y)=S(v,y)$ for all three losses). The Max-Min-Margin and the Restrictive-Max-Margin loss coincide in the bottom-left region, but the Max-Min-Margin loss has three extra activated faces in the top-right region of the plot which are in general unnecessary for consistency, while the Restricted-Max-Margin uses just the necessary ones. The Max-Margin in the left plot uses just three faces, being insufficient for consistency.}
    \label{fig:geometries}
\end{figure*}

for which the associated tree is a chain. Note that these losses are not the only ones satisfying the necessary condition given by \cref{th:mainresultmaxmargin}. Indeed, the Hamming loss with $M=2$, $\calY_1=\calY_2=\{-1,1\}$ and $L_1,L_2$ the 0-1 loss is \emph{not} a distance in a tree, satisfies the necessary condition, and consistency can be proven to hold (see Section 2 of Appendix). The following \cref{prop:partialconsistency} gives a much milder sufficient condition to ensure \emph{partial} consistency under the dominant label assumption, thus generalizing the well-known results from \cite{liu2007fisher}.

\begin{proposition}[Sufficient condition for \emph{partial} consistency $S_{\M}$]\label{prop:partialconsistency} If $L$ is a distance, then the Max-Margin loss is consistent to $L$ under the dominant label assumption, i.e., $\max_{y\in\calY}~q_y\geq 1/2$.
\end{proposition}

In other words, if the learning task is defined by a distance and it is close to deterministic, then the Max-Margin loss is consistent to the task.

\paragraph{Beyond Max-Margin.}
To overcome the limitations imposed by the maximum margin, but retaining the maximization structure of the loss, we propose a novel generalization of the binary SVM to structured prediction by restricting the maximization of the loss-augmented scores in~\eqref{eq:maxmarginloss}. First, note that the Max-Margin loss can be written as a maximization over the simplex ${\color{red} \Delta}$ over $\calY$ as 
\begin{equation}\label{eq:mm}
    S_{\M}(v, y) = \max_{q\in{\color{red}\Delta}}~L_y^\top q + v^\top q - v_y.
\end{equation}
We restrict the maximization to the so-called \emph{prediction set}~${\color{blue}\Delta(y)} = \{q\in\Delta~|~y\in y^\star(q)\}$, defined as the set of probabilities for which $y$ is optimal. In binary classification the sets are $\Delta(-1)=[0, 1/2]$ and $\Delta(1)=[1/2,1]$. The resulting \emph{Restricted-Max-Margin} loss reads
\begin{equation}\label{eq:rmm}
    S_{\RM}(v, y) = \max_{q\in{\color{blue}\Delta(y)}}~L_y^\top q + v^\top q - v_y.
\end{equation}
This loss satisfies $2S_{\RM} = S_{\M}$ in the binary setting (see middle \cref{fig:condsvm}), thus, it corresponds to the binary SVM up to a scaling with a factor of two. The following \cref{th:rmmconsistency} states that consistency of $S_{\M}$ implies consistency of $S_{\RM}$ and provides a sufficient condition for consistency of~$S_{\RM}$.

\begin{theorem}[Sufficient condition for consistency $S_{\RM}$]\label{th:rmmconsistency} The Restricted-Max-Margin loss is consistent to $L$ whenever the Max-Margin is consistent. Moreover, if $L$ satisfies $q_y > 0$ for every~$y$ optimal for $q\in\Delta$, i.e., $q\in\Delta(y)$, then the Restricted-Max-Margin is also consistent to $L$.
\end{theorem}

In other words, if the output $y$ is optimal for $q$, then the probability of this label has to be strictly greater than zero $q_y>0$. The 0-1 loss, which does not satisfy the necessary condition of \cref{th:mainresultmaxmargin}, satisfies the sufficient condition for the Restricted-Max-Margin, as $\min_{q\in\Delta(y)}q_y = 1/k$ for all $y\in\calY$. However, there are still losses for which \eqref{eq:rmm} is not consistent to, such as the squared discrete loss~$(z-y)^2$ (see right of \cref{fig:predictionsets}). The remaining inconsistencies can be resolved by going beyond the maximization structure into a max-min structure. The resulting loss is the so-called \emph{Max-Min-Margin loss} \citep{fathony2016adversarial, duchi2018multiclass, nowak2020consistent} defined as
\begin{equation}\label{eq:maxminmargin}
    S_{\MM}(v, y) = \max_{q\in\Delta}\boldsymbol{{\color{ForestGreen}\min_{z\in\calY}}}~L_{\boldsymbol{\color{ForestGreen}z}}^\top q + v^\top q - v_y.
\end{equation}
It is known \citep{nowak2020consistent} that the loss \eqref{eq:maxminmargin} is \emph{always} consistent to $L$. As shown in \cref{fig:condsvm} (middle), this loss does not correspond to the SVM in the binary setting because it has two symmetric kinks instead of one. See also \cref{fig:geometries} to compare the shape of the different losses for $k=3$. Hence, while the structure of the loss gets more computationally involved from the Max-Margin loss \eqref{eq:mm} to the Max-Min-Margin loss \eqref{eq:maxminmargin}, passing by the Restricted-Max-Margin loss \eqref{eq:rmm}, the consistency properties of these losses improve from one to the next.

\section{BACKGROUND AND PRELIMINARY RESULTS}\label{sec:3}
\subsection{Background on Polyhedral Losses}
\paragraph{Fisher consistency.}
Let us now consider a generic loss $S:\Rspace{k}\times\calY\rightarrow\Rspace{}$ and let's generalize the argmax computing the prediction from the scores in the previous section to a generic \emph{decoding} function~$d:\Rspace{k}\rightarrow\calY$. The set
of minimizers~$v^\star(q)\subseteq\Rspace{k}$ of the conditional risk $S(v)^\top q$ is also defined as before. We say that $S$ is \emph{Fisher consistent} to $L$ \citep{tewari2007consistency} under the decoding~$d:\Rspace{k}\rightarrow\calY$ if
\begin{equation}\label{eq:decodingconsistency}
    v\in v^\star(q) \implies d(v)\in y^\star(q), 
\end{equation}
for all $q\in\Delta$. If the decoding is not specified, it means that there exists a decoding satisfying this property.
An important quantity throughout the paper is the \emph{Bayes risk}, a \emph{concave} function defined as the minimum of the conditional expected loss respectively for $L$ and $S$:
\begin{equation*}
    H_L(q) = \min_{y\in\calY}~L_y^\top q, \hspace{0.8cm} H_S(q) = \min_{v\in\Rspace{k}}~S(v)^\top q.
\end{equation*}

\paragraph{Embedding of discrete losses.} Fisher consistency in~\cref{eq:decodingconsistency} states that every minimizer~$v\in v^\star(q)$ can be assigned to a solution of the discrete task using the decoding $d$. For the analysis of this paper, it will be useful to work using the concept of embeddability between losses \citep{finocchiaro2019embedding}, a stronger notion than Fisher consistency.

\begin{definition}[Embeddability]\label{def:embeddability} $S$ \emph{embeds} $L$ if there exists an embedding $\psi:\calY\rightarrow\Rspace{k}$
such that: (i) $y\in y^\star(q) \iff \psi(y) \in v^\star(q), ~\forall q\in\Delta$, and (ii) $S(\psi(y)) = L_{y}, ~\forall y\in\calY$.
\end{definition}

Condition (i) states that every solution of the discrete problem corresponds to a solution of the problem in $S$ and vice versa. In particular, this rules out many smooth plug-in classifiers such as the squared loss or logistic regression, because they predict the vector of probabilities~$q$ which cannot be recovered from the discrete predictor~$y^\star(q)$. 
It is known \citep{finocchiaro2019embedding} that the existence of an embedding~$\psi$ satisfying (i) implies the existence of a decoding~$d$ satisfying \cref{eq:decodingconsistency}, so it is already a sufficient condition for Fisher consistency. Note that both Eq.~\eqref{eq:decodingconsistency} and condition (i) are assumptions on the predictors~$y^\star$ and~$v^\star$, but there exist several losses~$L$ with the same set of minimizers~$y^\star(q)$ of the conditional risk~$L_y^\top q$. The same can be said for the objects~$v^\star$ and~$S$. Condition (ii) restricts the relationship between pairs of losses by assuming that~$L$ can be recovered from~$S$ using the embedding~$\psi$. The following \cref{prop:bayesrisksequal} shows that~$S$ embedding~$L$ is equivalent to having the same Bayes risks.
\begin{proposition}[\cite{finocchiaro2019embedding}]\label{prop:bayesrisksequal}
$S$ embeds $L$ if and only if $H_L = H_S$.
\end{proposition}
Moreover, it is known that any discrete loss $L$ is embedded by at least one loss $S$ (Theorem 2 in \cite{finocchiaro2019embedding}), which corresponds precisely to the  Max-Min-Margin loss $S_{\MM}$ defined in \cref{eq:maxminmargin}. Indeed, $S_{\MM}$ and~$L$ have the same Bayes risk as
\begin{align*}
    H_{\MM}(q) &= \min_{v\in\Rspace{k}}\big(\max_{p\in\Delta}\min_{y\in\calY}~L_{y}^\top p + v^\top p\big) - v^\top q \\
    &= \min_{v\in\Rspace{k}}(-H_{L})^*(v) - v^\top q = H_L(q),
\end{align*}
where $h^*(u) = \sup_{s\in\Rspace{k}}~u^\top s - h(s)$ is the Fenchel conjugate of $h$. It can be checked \citep{nowak2020consistent} that the embedding is~$\psi(y) = -L_y$ and it is always Fisher consistent to $L$ under the argmax decoding. 

\subsection{Preliminary Results}
\paragraph{Relationship between losses and Bayes risks.} What makes the Max-Min-Margin loss simple to analyze is its Fenchel-Young structure \citep{blondel2019learning}, i.e., it can be written in the form $S(v, y) = \Omega^*(v) - v_y$, for a certain convex function $\Omega$ defined in the simplex. We extend this notion by allowing the convex function~$\Omega$ to depend on the label $y$ as
\begin{equation}\label{eq:FYform}
    S(v, y) = (\Omega^y)^*(v) - v_y.
\end{equation}
The losses~$S_{\M}, S_{\RM}$ and $S_{\MM}$ can be written in this form with
\begin{align*}
    \Omega_{\MM}^y(q) &= -\min_{y'\in\calY}L_{y'}^\top q + i_{\Delta}(q), \\
    \Omega_{\M}^{y}(q) &= -L_y^\top q + i_{\Delta}(q), \\
    \Omega_{\RM}^{y}(q) &= -L_y^\top q + i_{\Delta(y)}(q),
\end{align*}
where $i_{U}(u) = 0$ if $u\in U$ and $\infty$ otherwise. The first equation is the only one independent of $y$ and we remove its dependence by simply writing $\Omega_{\MM}$. The following \cref{prop:lossrelations} relates the functions $\Omega_{\MM}, \Omega_{\M}^y$ and $\Omega_{\RM}^y$.

\begin{proposition}\label{prop:lossrelations}
The following holds: $\Omega_{\MM} = \max_{y\in\calY}~\Omega_{\M}^{y}$ and $(\Omega_{\MM})^* = \max_{y\in\calY}~(\Omega_{\RM}^{y})^*$.
\end{proposition}
\begin{proof}
The first identity is trivial from the definition. For the second identity, note that by construction the prediction sets necessarily cover the simplex as~$\Delta = \cup_{y'\in\calY}~\Delta(y')$. Hence,
\begin{align*}
    (\Omega_{\MM})^*(v) 
    &= \max_{y\in\calY}\big(\max_{q\in\Delta(y)}\min_{y'\in\calY}~L_{y'}^\top q + v^\top q\big) \\ &=\max_{y\in\calY}\big(\max_{q\in\Delta(y)}~L_{y}^\top q + v^\top q\big) \\
    &=  \max_{y\in\calY}~(\Omega_{\RM}^{y})^*(v),
\end{align*}
where we have used the fact that $\min_{y'\in\calY}L_{y'}^\top q=L_y^\top q$ whenever $q\in\Delta(y)$ by construction.
\end{proof}
Moreover, it can be readily seen that $S_{\RM}(v, y) \leq S_{\MM}(v, y) \leq S_{\M}(v, y)$, 
for all $v\in\Rspace{k}$ and $y\in\calY$. See \cref{fig:condsvm} (left and middle) and \cref{fig:geometries} for the shape of these losses when $L$ is the 0-1 loss for two and three dimensions, respectively. 
Our main quantity of interest is the Bayes risk~$H_S$, because by comparing it with~$H_L$ we are able to tell whether~$L$ is embedded by $S$ using \cref{prop:bayesrisksequal}, thus proving consistency. The following \cref{prop:bayesrisks} gives the form of the Bayes risks.
\begin{proposition}[Bayes risks]\label{prop:bayesrisks}
For all $q\in\Delta$, the Bayes risks read
\begin{align*}
    H_{\MM}(q) &= H_L(q), \\
    H_{\M}(q) &= \underset{Q\in U(q, q)}{\max}\langle L, Q\rangle_{\F}, \\
    H_{\RM}(q) &= \underset{Q\in U(q, q)\cap \calC_L}{\max}\langle L, Q\rangle_{\F},
\end{align*}
where $U(q, q) = \{Q\in\Rspace{k\times k}~|~Q1=q, Q^\top 1=q, Q\succeq 0\}$, ~$\calC_L = \{Q\in\Rspace{k\times k}~|~(1L_y^\top - L)Q_y\preceq 0,~\forall y\in\calY\}$ and~$\langle \cdot,\cdot\rangle_{\F}$ denotes the Frobenius inner product. Moreover, we have that $H_{\RM} \leq H_{\MM} \leq H_{\M}$, 
and there exists~$L$ for which~$H_L \neq H_{\M}$ and/or $H_{\RM} \neq H_{L}$.
\end{proposition}

The proof can be found in Section 1 of Appendix. As a corollary, the only one of these losses always embedding~$L$ is the Max-Min-Margin loss. Our sufficient conditions for consistency will correspond to the conditions on $L$ for which the Bayes risks $H_{\M}$ and/or $H_{\RM}$ are equal (or proportional) to $H_L$, thus implying consistency.
In the next section, we use the expressions of the Bayes risks given by \cref{prop:bayesrisks} as a basis to prove the consistency results.

\section{FISHER CONSISTENCY ANALYSIS}\label{sec:4}
\subsection{Analysis of Max-Margin Loss}
In this section we want to provide a necessary condition for consistency of the Max-Margin loss (\cref{th:mainresultmaxmargin}). Note that it is not enough to provide a condition for which $H_S\neq H_L$, as $S$ consistent to $L$ does not imply $S$ embedding $L$. The following \cref{prop:necessaryextremes} gives a necessary condition for consistency in terms of the extreme points of the prediction sets and the Bayes risk of $S$.

\begin{proposition}\label{prop:necessaryextremes}
Assume $S$ is Fisher consistent to $L$. Then, for any extreme point $q\in\Delta$ of a prediction set $\Delta(y),~y\in\calY$, we necessarily have $\{q\} = \partial(-H_{S})^*(v)$ for some $v\in\Rspace{k}$.
\end{proposition}

The set $\partial h(x)$ denotes the subgradient of the function $h$.
This result is proven in Section 2 of Appendix. To use this necessary condition, we first have to compute the Fenchel conjugate $(-H_{\M})^*$ of the Max-Margin loss. This is given by the following \cref{prop:computationsCS}.

\begin{proposition}\label{prop:computationsCS}
If $L$ is symmetric, then 
\begin{equation*}
    (-H_{\M})^*(v) = \max_{y,y'\in\calY}~L(y,y') + \frac{v_{y} + v_{y'}}{2},
\end{equation*}
for all $v\in\Rspace{k}$.
\end{proposition}

As a corollary, we obtain the specific form of the images of the sub-gradient mapping~$\partial(-H_{\M})^*$ at the differentiable points, i.e., when the sub-gradient is a singleton.

\begin{corollary}\label{cor:imagesubgradients}
The $0$-dimensional images of $\partial(-H_{\M})^*$ are of the form $q = \frac{1}{2}(e_y + e_{y'}), y,y'\in\calY$.
\end{corollary}

\begin{figure*}[ht!]
    \centering
    \resizebox{1.018\textwidth}{!}{
    \includegraphics[width=0.25\textwidth]{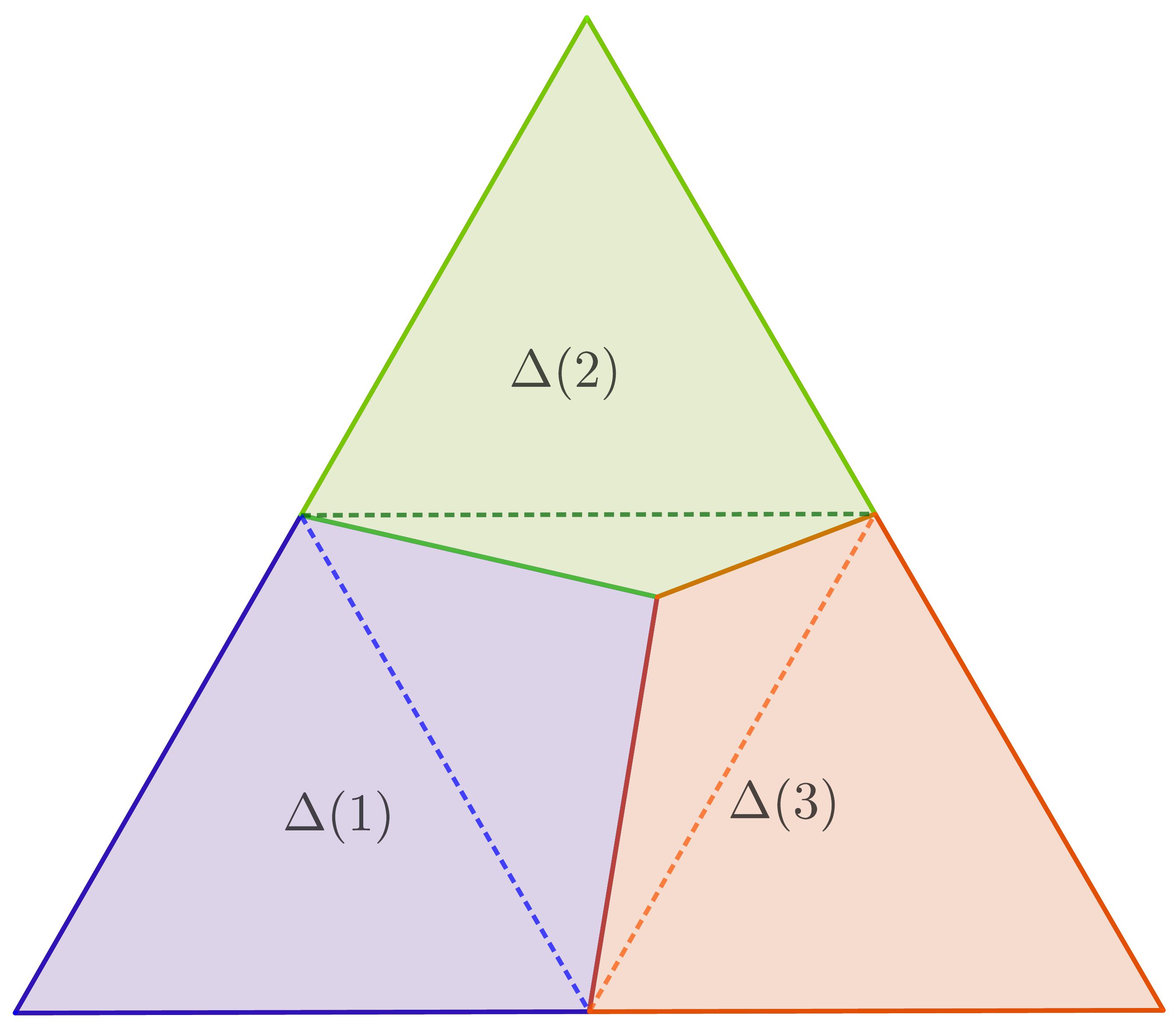} \hspace{.6cm}
    \includegraphics[width=0.25\textwidth]{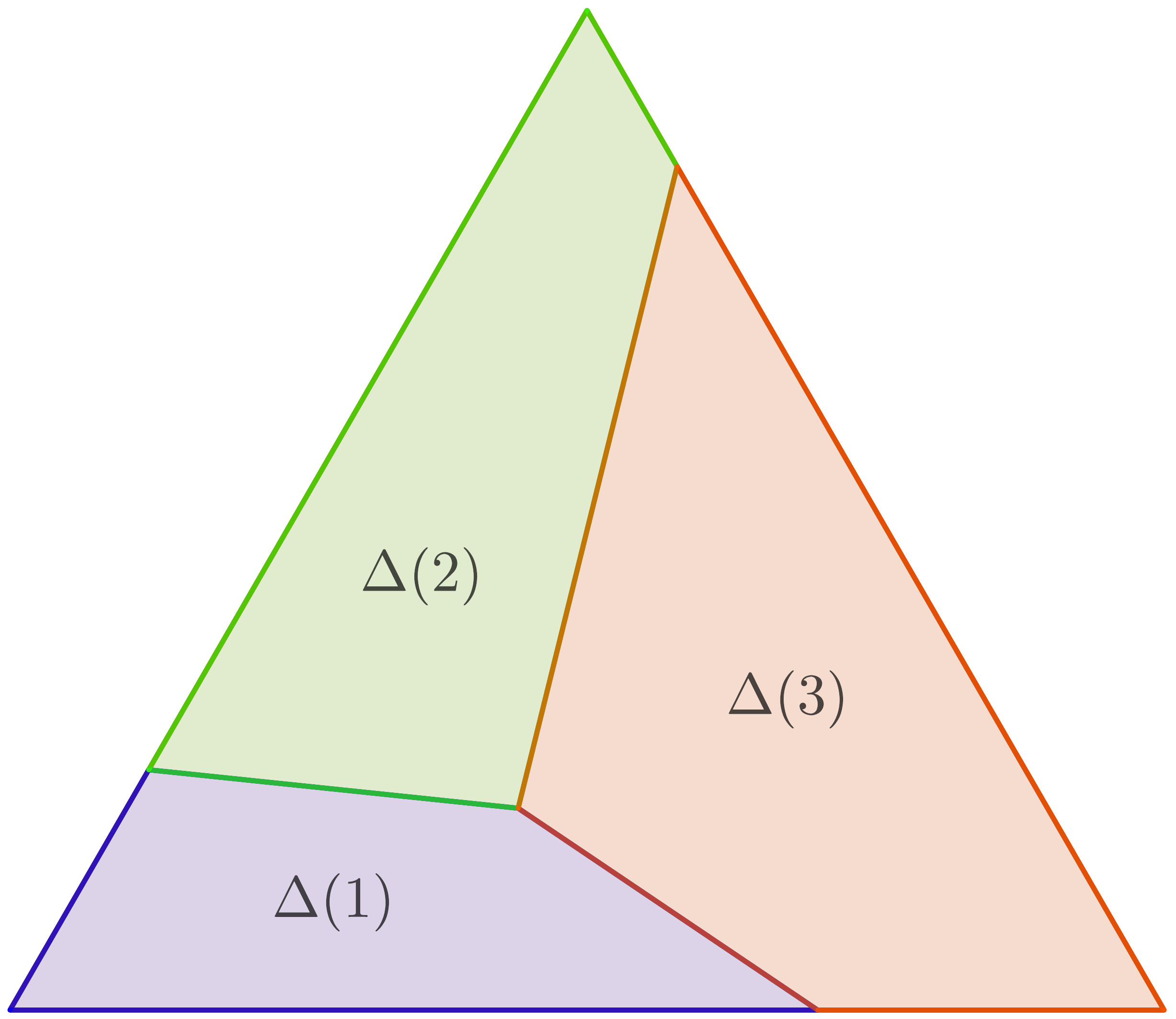}
    \hspace{.6cm}
    \includegraphics[width=0.25\textwidth]{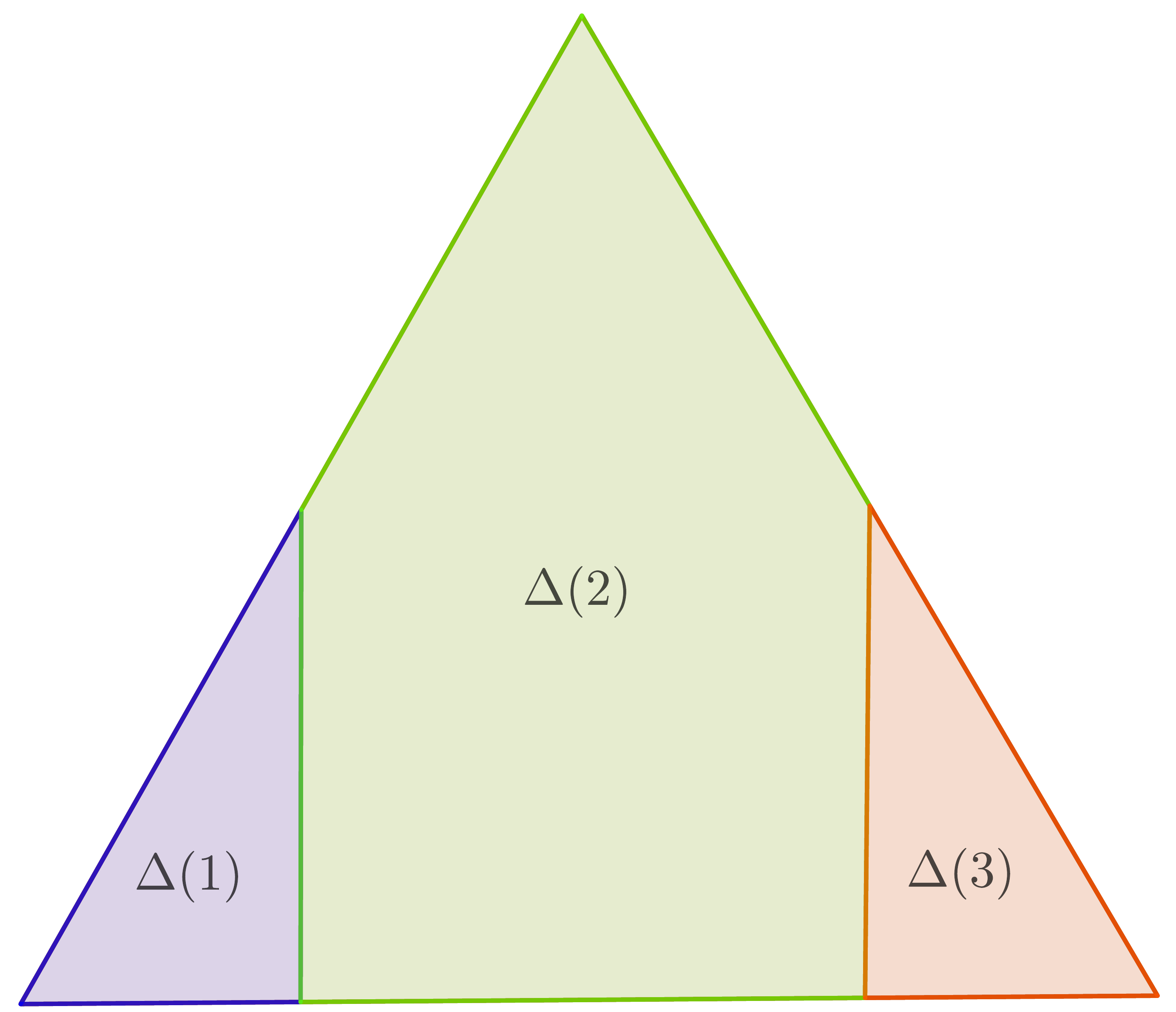}
    }
    \caption{\textbf{Left:} Prediction sets $\Delta(y)$ for a symmetric loss. The points given by \cref{cor:imagesubgradients} are the 3 extremes of the simplex and the middle points in the faces of the simplex. The interior point is not of the form $\frac{1}{2}(e_y + e_{y'})$ for any $y,y'\in\{1, 2, 3\}$. \textbf{Middle:} The sufficient condition from \cref{prop:extremepointsrm} is satisfied for any point in the simplex. \textbf{Right:} The sufficient condition from \cref{prop:extremepointsrm} is not satisfied as the prediction set $\Delta(2)$ (in green) intersects the line~$q_2 = 0$. Moreover, inconsistency can be shown in this case by comparing the affine regions of the Bayes risk $H_{\RM}$ with these predictions sets.}
    \label{fig:predictionsets}
\end{figure*}

Note that the points $\{q\} \in \operatorname{Im}\partial(-H_{\M})^*$ are independent of the discrete loss $L$. In \cref{fig:predictionsets} (left), we show that this is not the case for the extreme points of the prediction sets $\Delta(y)$'s of a generic~$L$. In this example, the points $\frac{1}{2}(e_{y} + e_{y'})$'s are extreme points of the prediction sets because $L$ is symmetric, but the extreme point in the interior is not of this form. By moving this point to $\frac{1}{2}(e_{1} + e_{3})$, we enforce $L(1, 2) + L(2, 3) = L(1, 3)$, which corresponds precisely to the necessary condition given by \cref{th:mainresultmaxmargin} when $z=y_2$. The generic necessary condition is obtained by extending this argument to consider all possibilities in larger output spaces. The full proof can be found in Section 2 of Appendix.

\paragraph{Consistency of distances defined in trees.} This result is proved by showing ~$H_{\M} = 2H_L$ (~$H_S\propto H_L$ also implies consistency \citep{finocchiaro2019embedding}) whenever $L$ is a distance defined in a tree, and it is done by proving equality of the Fenchel conjugates $(-H_{\M})^* = (-2H_L)^*$. The proof is based on the analysis of the extreme points of a certain polytope defined in terms of $L$, and can be found in Section 2 of Appendix.

\paragraph{Partial positive results on consistency.} In this section, we generalize the well-known result by~\cite{liu2007fisher} which states that the Max-Margin loss is consistent to the 0-1 loss under the dominant label condition. More specifically, we provide a generalization of this result to all distance losses, i.e., symmetric and satisfying the triangle inequality.

\begin{proposition}\label{prop:partialconsistency2} If $L$ is a distance, then we have $H_{\M} \leq 2H_{L}$ and $H_{\M}(q) = 2H_L(q)$ ($S$ embeds $2L$, thus consistent), for all $q\in\Delta$ such that $\max_{y\in\calY}q_y \geq 1/2$.
\end{proposition}

Some intuition can be obtained for $k=3$. The interior of the set delimited by the dashed lines in the left image from~\cref{fig:predictionsets} shows precisely the points where~$\max_{y\in\calY}~q_y\leq 1/2$. If $L$ is a distance, then the interior extreme point can only move inside this region, and hence, it remains consistent at the exterior of this set, which is precisely where the dominant label condition is satisfied. The proof of this result can be found in~Section 2 of Appendix.

\subsection{Analysis of Restricted-Max-Margin Loss}

In this section we sketch the proof of the consistency of the Restricted-Max-Margin loss (\cref{th:rmmconsistency}). We prove that~$S_{\RM}$ embeds~$L$ by showing equal Bayes risks using the expressions from~\cref{prop:bayesrisks}.
The proof is done in two steps. In the first one (\cref{prop:rmfirststep}), we show that under a condition on the extreme points of the prediction sets both Bayes risks are equal, while in the second we obtain that this condition is satisfied whenever the hypothesis from \cref{th:rmmconsistency} holds.
\begin{proposition}\label{prop:rmfirststep} If $qq^\top \in\calC_L = \{Q\in\Rspace{k\times k}~|~(1L_y^\top - L)Q_y\preceq 0,~\forall y\in\calY\}$ for all
extreme points $q\in\Delta$ of the prediction sets $\Delta(y)$'s, then $H_{\RM} = H_L$.
\end{proposition}
\begin{proof} By \cref{prop:bayesrisks}, we already know that $H_{\RM}\leq H_L$, so we only need to show that $H_{\RM}\geq H_L$. If $q$ is an extreme point of the prediction set $\Delta(y)$, then $H_{\RM}(q)\geq H_L(q)$ by taking the matrix~$qq^\top \in U(q, q)\cap\calC_L$, which satisfies $\langle L, qq^\top\rangle_{\F} = \sum_{y'}q_{y'}L_{y'}^\top q \geq \sum_{y'}q_{y'}L_{y}^\top q = L_y^\top q = H_L(q)$,
where we have used that $L_{y'}^\top q \geq L_y^\top q$ for all $y'\in\calY$ as~$q\in\Delta(y)$. If $q$ is not an extreme point, then it can be written as a convex combination of extreme points~$q_i$ of~$\Delta(y)$ as~$q = \sum_{i=1}^m\alpha_iq_i$ with~$\alpha^\top 1 = 1, \alpha\succeq 0$. Then, it is straightforward to show that the matrix $Q = \sum_{i=1}^m\alpha_iq_iq_i^\top$ is in~$U(q,q)\cap\calC_L$, and satisfies $\langle L, Q \rangle_{\F}\geq L_y^\top q = H_L(q)$. More details in Section 3 of Appendix.
\end{proof}

\begin{proposition}\label{prop:extremepointsrm}
Assume that for every $q\in\Delta$, if $y$ is optimal for $q$ (i.e., $q\in\Delta(y)$), then~$q_y > 0$.  In this case, all extreme points of the prediction sets satisfy $qq^\top\in\calC_L$.
\end{proposition}

The proof of this result and the fact that consistency of $S_{\M}$ implies consistency of $S_{\RM}$ can be found in Section 3 of Appendix. In \cref{fig:predictionsets} (middle and right), we show geometrically in dimension $k=3$ what this condition means. 

\subsection{Argmax Decoding}

The positive consistency results of both $S_{\M}$ and $S_{\RM}$ do not specify whether the argmax decoding~$d(v) = \argmax_{y\in\calY}~v_y$ is consistent, but just that there exists a decoding for which consistency holds. From \cite{nowak2020consistent}, we know that the set of minimizers of the Max-Min-Margin loss $S_{\MM}$ is
\begin{equation}\label{eq:minimizersmm}
    v_{\MM}^\star(q) = -\operatorname{hull}(L_y)_{y\in y^\star(q)} + \calN_{\Delta}(q),
\end{equation}
for all $q\in\Delta$, where $\operatorname{hull}$ denotes the convex hull and $\calN_{\Delta}(q) = \{u\in\Rspace{k}~|~u^\top(p - q)\leq 0, \forall p\in\Delta\}$ is the normal cone of the simplex at the point~$q$. In this case, the argmax decoding is consistent because~$\argmax_{y}v_y\in y^\star(q)$ whenever $v\in v_{\MM}^\star(q)$ if \cref{eq:minimizersmm} is satisfied. The following \cref{prop:argmax} shows that the same holds true for the other losses whenever they embed $L$ (or $2L$).

\begin{theorem}\label{prop:argmax}
$S_{\M}$ and $S_{\RM}$ are consistent to $L$ under the argmax decoding whenever $H_{\M}=2H_L$ ($S_{\M}$ embeds $2L$) and $H_{\RM}=H_L$ ($S_{\RM}$ embeds $L$), respectively.
\end{theorem}

\section{CONCLUSION AND FUTURE DIRECTIONS}\label{sec:6}

In this work, we have analyzed the consistency properties of the well-known Max-Margin loss for general classification tasks. We show that a restrictive condition on the task, which is not satisfied by most of the used losses in practice, is necessary for Fisher consistency. We also show consistency for the absolute deviation loss used in ordinal regression, and more generally to any distance defined in a tree, showing that the necessary condition is meaningful. Moreover, we have overcome this limitation and introduced a novel generalization of the binary SVM loss called Restricted-Max-Margin loss, which maintains the maximization over the loss-augmented scores and it is consistent under milder conditions on the task at hand. Two important questions remain unanswered: first, whether the proposed necessary condition for the Max-Margin is sufficient or not, and secondly, whether there exist tractable linear maximization oracles over the prediction sets $\Delta(y)$ for specific structured prediction problems, which would make the Restricted-Max-Margin loss more attractive than the Max-Min-Margin loss in practice.

\subsubsection*{Acknowledgements}
This work was funded in part by the French government under management of Agence Nationale de la Recherche as part of the ‘Investissements d’avenir’ program, reference ANR-19-P3IA-0001 (PRAIRIE 3IA Institute). We acknowledge support of the European Research Council (grant SEQUOIA 724063). We also acknowledge support of the European Research Council (grant REAL 947908).

\bibliography{references}
\bibliographystyle{unsrtnat}


\clearpage
\appendix

\thispagestyle{empty}

\onecolumn \makesupplementtitle

\textbf{Outline.} The supplementary material is organized as follows. In \cref{secapp:A}, we prove general results on embeddings of losses, we compute the Bayes risks for each of the losses and we provide an algebraic characterization of the extreme points of the prediction sets. In~\cref{secapp:B} and \cref{secapp:C}, we provide the main results of the Max-Margin loss and the Restricted-Max-Margin loss, respectively.

\section{PRELIMINARY RESULTS}\label{secapp:A}

\subsection{Results on Embeddability of Losses}

\begin{proposition}\label{prop:bayesriskembedding}
Let $\psi:\calY\xrightarrow[]{}\Rspace{k}$ be an embedding of the output space. If $H_S = H_L$ and~$S(\psi(y)) = L_y$ for all $y\in\calY$, then $S$ embeds $L$ with embedding $\psi$.
\end{proposition}
\begin{proof}
To prove that $S$ embeds $L$ with embedding $\psi(y) = -L_y$, we need to show that 
\begin{equation*}
y\in y^\star(q) \iff \phi(y) \in v^\star(q).    
\end{equation*}
If $y\in y^\star(q)$, then
\begin{equation*}
    H_L(q) = ~L_y^\top q = S(\phi(y))^\top q = H(q) = \min_{v\in\Rspace{k}}~S(v)^\top q.
\end{equation*}
Thus, $S(\phi(y))^\top q = \min_{v\in\Rspace{k}}~S(v)^\top q$ implies that necessarily $\phi(y)\in v^\star(q)$. Similarly, if $\phi(y)\in v^\star(q)$, then $\min_{z}~L_z^\top q = L_y^\top q$ which implies $y\in y^\star(q)$.
\end{proof}

\subsection{Bayes risk identities}

The following \cref{lem:bayesrisks} provides an identity which will be useful to provide the forms of the Bayes risk for $S_{\M}$ and $S_{\RM}$.

\begin{lemma}\label{lem:bayesrisks} Let $\calC_y\subseteq\Delta$, $\Omega^y(q) = -L_y^\top q + i_{\calC_y}(q)$ and $S(v, y) = (\Omega^y)^*(v) - v_y$ for every $y\in\calY$. Then,
\begin{equation}\label{eq:lembayesrisks}
    H_S(q) = \max_{\substack{\sum_{y}q_y\nu_y = q \\ \nu_y\in\calC_y}}~\sum_{y} q_y L_y^\top \nu_y.
\end{equation}
\end{lemma}
\begin{proof}
Recall the definition of the Bayes risk $H(q) = \min_{v\in\Rspace{k}}~S(v)^\top q$. Using the structural assumption on $S$, we can re-write it as
\begin{equation*}
    H(q) = \min_{v\in\Rspace{k}}~\sum_{y\in\calY}q_y(\Omega^y)^*(v) - v^\top q = -\max_{v\in\Rspace{k}}~v^\top q - \sum_{y\in\calY}q_y(\Omega^y)^*(v) = -\Big(\sum_{y\in\calY}q_y(\Omega^y)^*\Big)^*(q).
\end{equation*}
Recall that if the functions $h_i$ are convex, then the conjugate of the sum is the infimum convolution of the individual conjugates \citep{rockafellar1997convex} as
\begin{equation*}
    \Big(\sum_ih_i\Big)^*(t) = \min_{\sum_ix_i = t}\sum_ih_i^*(x_i).
\end{equation*}
If we apply this property to the functions $h_i = q_i(\Omega^y)^*$, we obtain:
\begin{align*}
    -\Big(\sum_{y\in\calY}q_y(\Omega^y)^*\Big)^*(q) &= 
    -\min_{\sum_{y\in\calY}\nu_y = q}~\sum_{y\in\calY}(q_y(\Omega^y)^*)^*(\nu_y) \\
    &= -\min_{\sum_{y\in\calY}\nu_y = q}~\sum_{y\in\calY}q_y\Omega^y(\nu_y/q_y) && (ah)^*(x) = ah^*(x/a) \\
    &= -\min_{\substack{\sum_{y\in\calY}\nu_y = q \\ \nu_y/q_y\in\calC_y,~\forall y\in\calY}}~-\sum_{y\in\calY}L_y^\top \nu_y && \Omega^y(q) = -L_y^\top q + i_{\calC_y}(q) \\
    &= \max_{\substack{\sum_{y\in\calY}\nu_y = q \\ \nu_y/q_y\in\calC_y, ~\forall y\in\calY}}~\sum_{y\in\calY}L_y^\top \nu_y && \text{redefine }\nu_y\text{ as }\nu_y/q_y \\
    &= \max_{\substack{\sum_{y\in\calY}q_y\nu_y = q \\ \nu_y\in\calC_y, ~\forall y\in\calY}}~\sum_{y\in\calY} q_y L_y^\top \nu_y.
\end{align*}
\end{proof}
The following \cref{propapp:bayesrisks} provides us with the first part of Proposition 3.4.
\begin{proposition}[Bayes risks]\label{propapp:bayesrisks}
For all $q\in\Delta$, the Bayes risks read
\begin{align*}
    H_{\MM}(q) &= \min_{y\in\calY}~L_y^\top q = H_L(q) \\
    H_{\M}(q) &= \underset{Q\in U(q, q)}{\max}\langle L, Q\rangle_{\F} \\
    H_{\RM}(q) &= \underset{Q\in U(q, q)\cap \calC_L}{\max}\langle L, Q\rangle_{\F},
\end{align*}
where 
\begin{equation*}
    U(q, q) = \{Q\in\Rspace{k\times k}_{+}~|~Q1=q, Q^\top 1=q\}, \hspace{0.3cm}\text{and}\hspace{0.3cm}\calC_L = \{Q\in\Rspace{k\times k}~|~(1L_y^\top - L)Q_y\preceq 0,~\forall y\in\calY\}.
\end{equation*}
\end{proposition}
\begin{proof}
The first identity is trivial and has already been derived in the main body of the paper. We use the above \cref{lem:bayesrisks} to obtain the identities corresponding to $S_{\M}$ and $S_{\RM}$.
\begin{enumerate}
    \item \emph{Bayes risk of Max-Margin:} In this case $\calC_y = \Delta$. If we define $\Gamma\in\Rspace{k\times k}$ as the matrix whose rows are $\nu_y$, the maximization reads
    \begin{equation*}
        \max_{\substack{\Gamma^\top q = q \\ \Gamma 1 = 1 \\ \Gamma\succeq 0}}~\sum_{y\in\calY} q_yL_y^\top \Gamma_{y}
    \end{equation*}
    If we now define $Q\in\Rspace{k\times k}$ as 
    $Q = \operatorname{diag}(q)\Gamma$, i.e., $Q_y=q_y\Gamma_y$, the objective can be re-written as a matrix scalar product as
    \begin{equation*}
    \sum_{y\in\calY}q_yL_y^\top \Gamma_{y} = \sum_{y\in\calY}L_y^\top(q_y\Gamma_y) = \sum_{y\in\calY}L_y^\top Q_y = \langle L, Q\rangle_{\F}.
    \end{equation*}
    Whenever $q_y>0$, the change of variables $Q_y=q_y\Gamma_y$ is invertible and the constraints satisfy
    \begin{align}
        (Q^\top 1)_y = q_y &\iff (\Gamma^\top q)_y = q_y \\
        (Q1)_y = q_y &\iff (\Gamma 1)_y = 1 \\
        Q_y \succeq 0 &\iff \Gamma_y \succeq 0.
    \end{align}
    On the other hand, if $q_y=0$ then $Q_y = 0$ but the objective is not affected as it is independent of $\Gamma_y$.
    \item \emph{Bayes risk of Restricted-Max-Margin:} In this case $\calC_y = \Delta(y) = \{q\in\Delta~|~(L_y-L_{y'})^\top q \preceq 0, \forall y'\in\calY\}$. The maximization now reads
    \begin{equation*}
        \max_{\substack{\Gamma^\top q = q \\ \Gamma 1 = 1 \\ \Gamma\succeq 0 \\
        (1L_y^\top - L)\Gamma_y\preceq 0, \forall y }}~\sum_{y\in\calY} q_yL_y^\top \Gamma_{y}.
    \end{equation*}
    The result follows as $(1L_y^\top - L)\Gamma_y\preceq 0$ if and only if $(1L_y^\top - L)Q_y\preceq 0$ whenever $q_y>0$.
\end{enumerate}
\end{proof}

\subsection{Extreme points of a polytope}
\label{sec:extremepoints}
We will need to analyse the extreme points of the polytope $\calP = \{(q, u)\in\Rspace{k + 1}~|~q\in\Delta, L_y^\top q \geq u, \forall y\in\calY\}\subseteq\Rspace{k+1}$ in the proof of the sufficient condition for consistency of Max-Margin in \cref{sec:sufficientcondition}.

\textbf{Algebraic characterization of extreme points of a polyhedron. } The following \cref{prop:characterizationextreme} provides us with an algebraic characterization of the extreme points of a polyhedron $\calQ=\{x\in\Rspace{n}~|~Ax\succeq b\}$.
\begin{proposition}[Theorem 3.17 in \cite{andreasson2020introduction}]\label{prop:characterizationextreme} Let $x\in\calQ = \{x\in\Rspace{n}~|~Ax\succeq b\}$, where $A\in\Rspace{m\times n}$ has $\rank(A) = n$ and $b\in\Rspace{m}$.
Let $I\subseteq[m]$ be a set of indexes for which the subsystem is an equality, i.e., $A_Ix_I = b_I$ with~$Ax_I\succeq b$. Then $x_I$ is an extreme point of $\calQ$ if and only if $\rank(A_S) = n$.
\end{proposition}

Let $\calP\subseteq\Rspace{k+1}$ be the polyhedron defined as 
\begin{equation*}
    \calP = \{(q, u)\in\Rspace{k + 1}~|~q\in\Delta, L_y^\top q \geq u, \forall y\in\calY\}\subseteq\Rspace{k+1}.
\end{equation*}
The polyhedron $\calP$ can be written as $\calP = \{x=(q, u)\in\Rspace{k+1}~|~Ax\succeq b\}$ where
\begin{equation}\label{eq:inequalitiespolyhedron}
\begin{array}{l}
     \begin{array}{c}
         \\ \\ S \\ \\
     \end{array}  \\
     \begin{array}{c}
         \\ \\ T \\ \\
     \end{array} \\
     \begin{array}{c}
         \\ \\
     \end{array}
\end{array}
\begin{array}{l}
    \left\{
     \begin{array}{c}
         \\ \\ \\ \\
     \end{array}\right.  \\
     \left\{
     \begin{array}{c}
         \\ \\ \\ \\
     \end{array}\right. \\
     \begin{array}{c}
         \\ \\
     \end{array}
\end{array}
    \underbrace{\left(
    \begin{array}{c | c}
        \begin{array}{ccc}
         & &  \\
         & \text{\Huge{L}} & \\ 
         & &  \\
    \end{array} & \begin{array}{c}
         \vdots  \\
         -1 \\ 
         \vdots \\
    \end{array}  \\ \hline
    \begin{array}{ccc}
         & &  \\
         & \text{\Huge{Id}} & \\
         & & \\
    \end{array} & \begin{array}{c}
         \vdots  \\
         0 \\ 
         \vdots \\
    \end{array} \\ \hline 
        \begin{array}{ccc}
            \cdots & 1 & \cdots  \\
        \end{array} & 0 \\
        \begin{array}{ccc}
            \cdots & -1 & \cdots  \\
        \end{array} & 0 \\
    \end{array}
    \right)}_{A}
    \left(\begin{array}{c}
        \\
         \\
         q  \\
         \\ 
         \\ \hline 
         u
    \end{array}\right) \succeq
    \underbrace{\left(\begin{array}{c}
          \vdots\\
          0 \\
          \vdots \\ \hline
          \vdots \\
          0 \\
          \vdots
          \\ \hline 
         1 \\
         -1 \
    \end{array}\right)}_{b},
\end{equation}
with $A\in\Rspace{(2k+2)\times (k+1)}$ and $b\in\Rspace{k+1}$.
Note that $\operatorname{rank}(A) = k + 1$.
Given $x=(q, u)$, define~$S,T\subseteq\calY$ as the subsets of outputs such that 
\begin{equation}\label{eq:definitionSandT}
    y\in S \iff L_y^\top q = u, \hspace{0.5cm} y\in T \iff q_y = 0,
\end{equation}
i.e., $S$ and $T$ correspond to the indexes of the first and second block of the matrix $A$ for which the inequality holds as an equality, respectively. More concretely, if $I$ are the indices of $A$ for which~$A_Ix_I = b_I$, we have that 
$I = S \cup( k + T )\cup \{2k+1\}\cup\{2k + 1\}$, because the last two inequalities must be an equality as $q\in\Delta$. Moreover, the sets $S$ have the following properties:
\begin{itemize}
    \item[-] We necessarily have $|S|\geq 1$: if $S = \emptyset$, then $\operatorname{rank}(A_I) = k$ and so the rank is not maximal, thus $x = (q,u)$ cannot be an extreme point. 
    \item[-] We necessarily have $|S| + |T| \geq k$ (using the fact that $\operatorname{rank}(A) = k + 1$).
\end{itemize}

\newpage
\section{RESULTS ON MAX-MARGIN LOSS}\label{secapp:B}
\subsection{Bayes Risk of Max-Margin for Symmetric Losses}
The following \cref{propapp:maxmarginbayesrisks} gives another expression of the Bayes risk of~$S_{\M}$ and its Fenchel conjugate assuming the loss~$L$ is symmetric.

\begin{proposition}\label{propapp:maxmarginbayesrisks}
Let $H_{\M}(q) = \max_{Q\in U(q, q)}~\langle L, Q\rangle_{\F}$ and assume $L$ symmetric. Then, the following identities hold:
\begin{align}
    H_{\M}(q) &= \min_{\frac{1}{2}(a_y+a_{y'})\geq L(y, y')}~a^\top q, \hspace{0.5cm}\forall q\in\Delta, \\
    (-H_{\M})^*(v) &= \max_{y, y'\in\calY}~L(y, y') + \frac{v_{y} + v_{y'}}{2}, \hspace{0.5cm} \forall v\in\Rspace{k}.
\end{align}
\end{proposition}
\begin{proof} The first part corresponds to the dual of the maximization problem defining the Bayes risk $H_{\M}$ when $L$ is symmetric:
\begin{align*}
    -\min_{Q\in U(q, q)}-\langle L, Q\rangle_{\F} &= \min_{Q\succeq 0}\max_{a,b\in\Rspace{k}}~a^\top (Q1 - q) + b^\top(Q^\top 1 - q) - \langle L, Q\rangle_{\F} \\
    &= -\max_{a,b\in\Rspace{k}}-a^\top q - b^\top q + \min_{Q\succeq 0}~a^\top Q 1 + b^\top Q^\top 1 - \langle L, Q\rangle_{\F}
\end{align*}
We can now re-write the minimization objective as a matrix scalar product with $Q$ as $a^\top Q 1 = \operatorname{Tr}(a^\top Q 1) = \operatorname{Tr(Q1 a^\top)} = \langle Q, a1^\top\rangle_{\F}$ and analogously $b^\top Q 1 = \langle Q, 1b^\top\rangle_{\F}$. Hence, the objective of the minimum becomes $\langle a1^\top + 1b^\top - L, Q\rangle_{\F}$, which gives
\begin{equation*}
    \min_{Q\succeq 0}~\langle a1^\top + 1b^\top - L, Q\rangle_{\F} = \left\{\begin{array}{ll}
        0 & \text{if}~ a1^\top + 1b^\top - L \succeq 0 \\
        -\infty &  \text{otherwise}
    \end{array}\right. .
\end{equation*}
We obtain the following minimization problem in $a,b\in\Rspace{k}$
\begin{equation*}
    -\max_{a1^\top + 1b^\top\succeq L}-(a+b)^\top q = \min_{a1^\top + 1b^\top\succeq L}~(a+b)^\top q.
\end{equation*}
Using that $L$ is symmetric, we can add the constraint $a = b$. In order to see this, let $(a^\star, b^\star)$ be a solution of the linear problem. If $L$ is symmetric, then $(b^\star, a^\star)$ is also a solution, which implies that $\frac{1}{2}(a^\star + b^\star, a^\star + b^\star)$ too. Hence, we can assume $a = b$ and we obtain the desired result.

For the second part, note that if $L$ is symmetric, the matrix $Q$ can be assumed also symmetric. To see this, let $Q^\star = \argmax_{Q\in U(q, q)}\langle L, Q\rangle_{\F}$. Then if $L$ symmetric $(Q^\star)^\top$ is also a solution, which means that $\frac{1}{2}(Q^\star + (Q^\star)^\top)$ too, which is symmetric. Hence, we can write 
\begin{align*}
    H_{\M}(q) &= \max_{\substack{Q=Q^\top \\ Q1=q \\ Q\succeq 0}}~\langle L, Q\rangle_{\F} \\
            &= \min_{v\in\Rspace{k}}\max_{\substack{Q = Q^\top \\ Q\in \operatorname{Prob}(\calY\times\calY)}}~\langle L, Q\rangle_{\F} - v^\top (Q1 - q) \\
            &= \min_{v\in\Rspace{k}}\Big\{\underbrace{\max_{\substack{ Q = Q^\top \\ Q\in \operatorname{Prob}(\calY\times\calY) }}~\langle L + v1^\top, Q\rangle_{\F}}_{(-H_{\M})^*(v)}\Big\} - v^\top q,
\end{align*}
where at the last step we have used that $q\in\Delta$ and so $1^\top Q1=1$, which together with $Q\succeq 0$ implies $Q\in \operatorname{Prob}(\calY\times\calY)$. The extreme points of the problem domain $\{Q\in\operatorname{Prob}(\calY\times\calY)\}$ where the maximization of the linear objective is achieved are precisely the points $\{\frac{1}{2}(e_y + e_{y'})\}_{y,y'\in\calY}$.
\end{proof}

\subsection{Necessary Conditions for Consistency}






Recall that $S$ is consistent to $L$ if there exists a decoding $d:\Rspace{k}\xrightarrow[]{}\calY$ such that if $v\in v^\star(q)$, then necessarily~$d(v)\in y^\star(q)$ for all $q\in\Delta$. A necessary condition for this to hold is that every level set of $v^\star$ must be included in a level set of $y^\star$, which are precisely the prediction sets. 

\begin{lemma}
If $S$ is consistent to $L$, then for every $v\in\Rspace{k}$ there must exist a $y\in\calY$ such that
\begin{equation}\label{eq:01}
    (v^\star)^{-1}(v) \subseteq (y^\star)^{-1}(y) = \Delta(y).
\end{equation}
\end{lemma}
\begin{proof}
If \eqref{eq:01} does not hold, then there exists $q_1,q_2\in (v^\star)^{-1}(v)$ with $y^\star(q_1)\cap y^\star(q_2) = \emptyset$. However, Fisher consistency means
that $v\in v^\star(q_1)$ implies $d(v)\in y^\star(q_1)$ and $v\in v^\star(q_2)$ implies $d(v)\in y^\star(q_2)$, which is not possible because~$y^\star(q_1)\cap y^\star(q_2) = \emptyset$.
\end{proof}

The following \cref{prop:necessaryconsistency} re-writes \eqref{eq:01} in terms of the Bayes risk $H_S$.

\begin{proposition}\label{prop:vstarlevelsets}
The level sets of $v^\star$ are the image of $-\partial(-H_S)^*:\Rspace{k}\xrightarrow[]{}2^{\Delta}$, i.e., 
\begin{equation}
    \operatorname{Im}((v^\star)^{-1}) = \operatorname{Im}(-\partial(-H_S)^*).
\end{equation}
\end{proposition}
\begin{proof}
First of all, note that $-\partial(-H_S)^* = (\partial H_{\M})^{-1}$ \citep{rockafellar1997convex}. We have
\begin{equation*}
H_S(q) = \min_{v\in\Rspace{k}}~S(v)^\top q + i_{\Delta}(q), \hspace{0.5cm} \partial H_S(q) = S(\bar{v}) + \langle 1\rangle, ~\bar{v}\in v^\star(q).
\end{equation*}
Let's now prove the two inclusions.

($\subseteq$): Let $Q\in \operatorname{Im}((v^\star)^{-1})$. This means that there exists $V\in\Rspace{k}$ such that $V = \argmin_{v\in\Rspace{k}}~S(v)^\top q$ for all $q\in Q$. If we define $T = S(V) + \langle 1\rangle$, then $T = \partial H_{\M}(q)$ for all $q\in Q$.

($\supseteq$): Let $Q\in \operatorname{Im}((\partial H_{\M})^{-1})$. This means that there exists $T$ such that $T = \partial H_{\M}(q)$ for all $q\in Q$. For every $q\in Q$, the set $T$ can be written as $T = S(v^\star(q)) + \langle 1\rangle$. To show that $v^\star(q) = v^\star(q')$ for all $q, q'\in Q$, we need to show that if $S(v) = S(v') + c1, v\in v^\star(q),v'\in v^\star(q')$ for some $q,q'\in Q$, then necessarily $c=0$. This is because $S(v(q))^\top q'\geq S(v(q'))^\top q'\implies c\geq 0$ and $S(v(q))^\top q\leq S(v(q'))^\top q\implies c \leq 0$.



\end{proof}
\begin{corollary}[Necessary condition for consistency]\label{prop:necessaryconsistency}
If $S$ is Fisher consistent to $L$, then for every~$v\in\Rspace{k}$, there exists~$y\in\calY$ such that
\begin{equation}\label{eq:bayesrisksconsistency}
    -\partial(-H_{S})^*(v)\subseteq\Delta(y).
\end{equation}
\end{corollary}
\begin{proof}
This follows directly from \cref{eq:01} and \cref{prop:vstarlevelsets}.
\end{proof}

\begin{proposition}[Weaker necessary condition for consistency]\label{prop:weakerconsistency} If $S$ is consistent to $L$, then every extreme point of $\Delta(y)$ for some $y\in\calY$ must be a 0-dimensional image of~$-\partial(-H_{S})^*$.
\end{proposition}
\begin{proof}
Let $\Delta_S(v) = -\partial(-H_{S})^*(v)$. There exists a finite set $\calV\subseteq\Rspace{k}$ such that $\bigcup_{v\in\calV}\Delta_S(v) = \Delta(y)$. In particular, if $q$ is an extreme point of $\Delta(y)$, then there exists $v\in\calV$ such that $q\in \Delta_S(v)$. We need to show that $q$ is also an extreme point of $\Delta_S(v)$. Indeed, if $\Delta_S(v)\subseteq\Delta(y)$ are polyhedrons and $q\in \Delta_S(v), \Delta(y)$ is an extreme point of $\Delta(y)$, then it is also necessarily an extreme point of~$\Delta_S(v)$.
\end{proof}

\begin{theorem}\label{th:mainresultmaxmarginapp}
Let $L$ be a symmetric loss with $k>2$. If the Max-Margin loss is consistent to $L$, then $L$ is a distance, and for every three outputs $y_1,y_2,y_3\in\calY$, there exists $z\in\calY$ for which these the following three identities are satisfied:
\begin{align*}
    L(y_1, y_2) &= L(y_1, z) + L(z, y_2), \\
    L(y_1, y_3) &= L(y_1, z) + L(z, y_3), \\
    L(y_2, y_3) &= L(y_2, z) + L(z, y_3).
\end{align*}
\end{theorem}

\begin{proof}
\begin{figure}[ht!]
    \centering
    \includegraphics[width=0.21\textwidth]{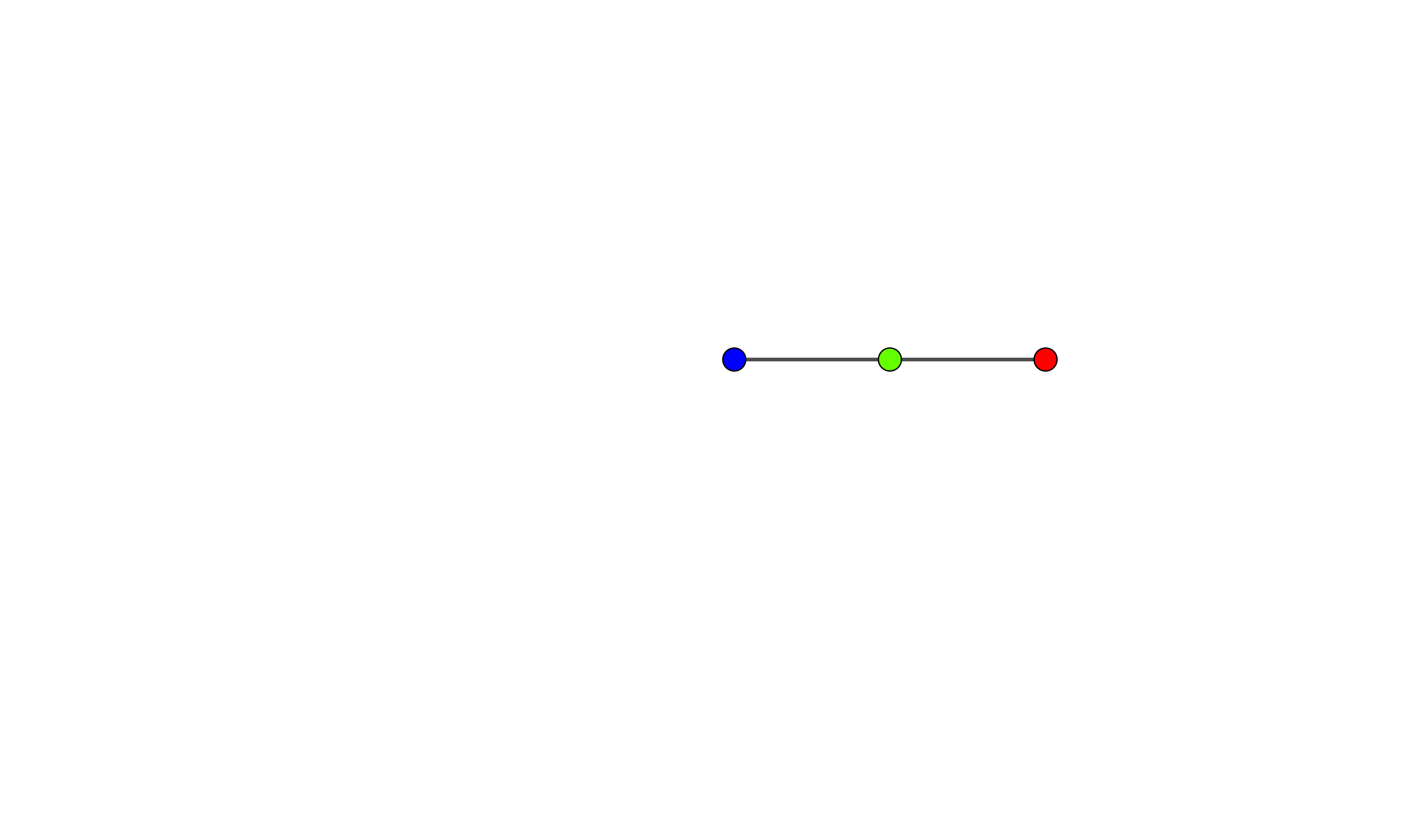}
    \includegraphics[width=0.24\textwidth]{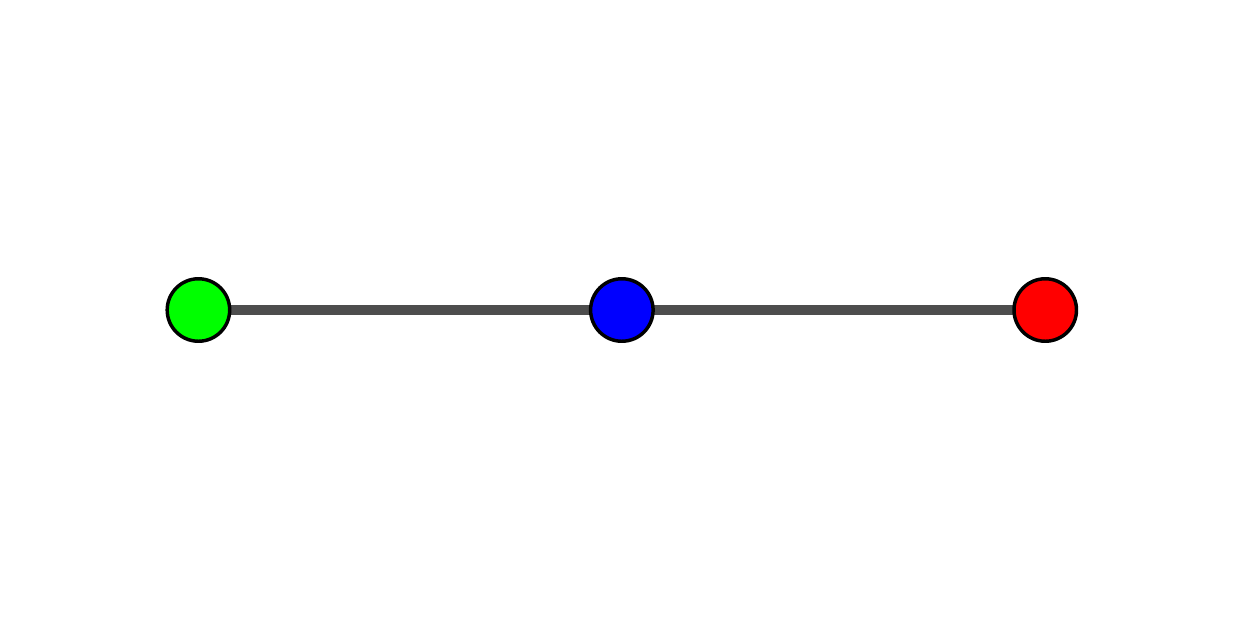}
    \includegraphics[width=0.24\textwidth]{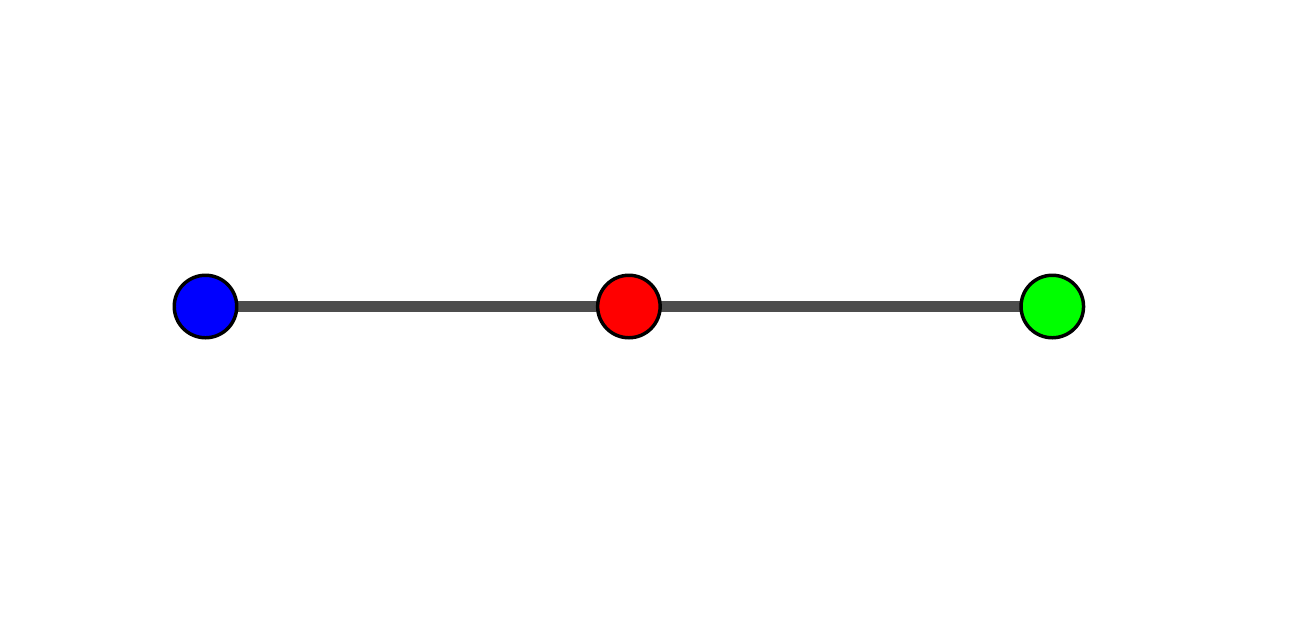}
    \includegraphics[width=0.24\textwidth]{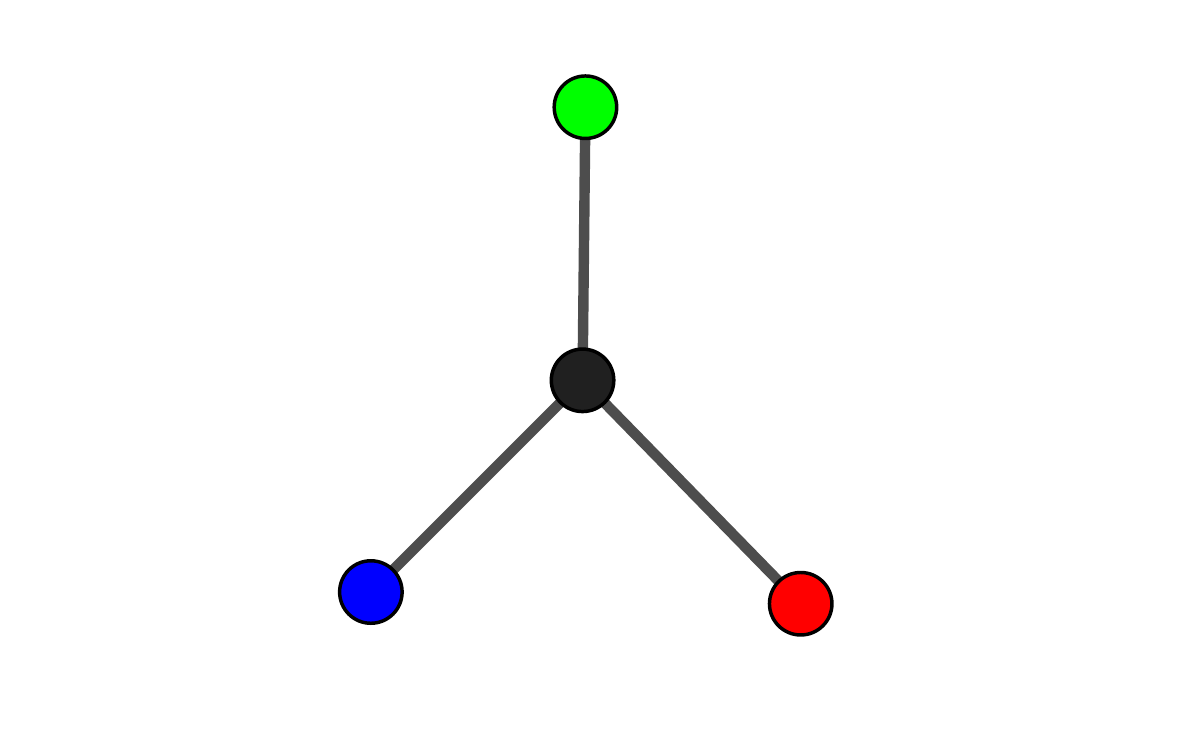}
    \includegraphics[width=0.24\textwidth]{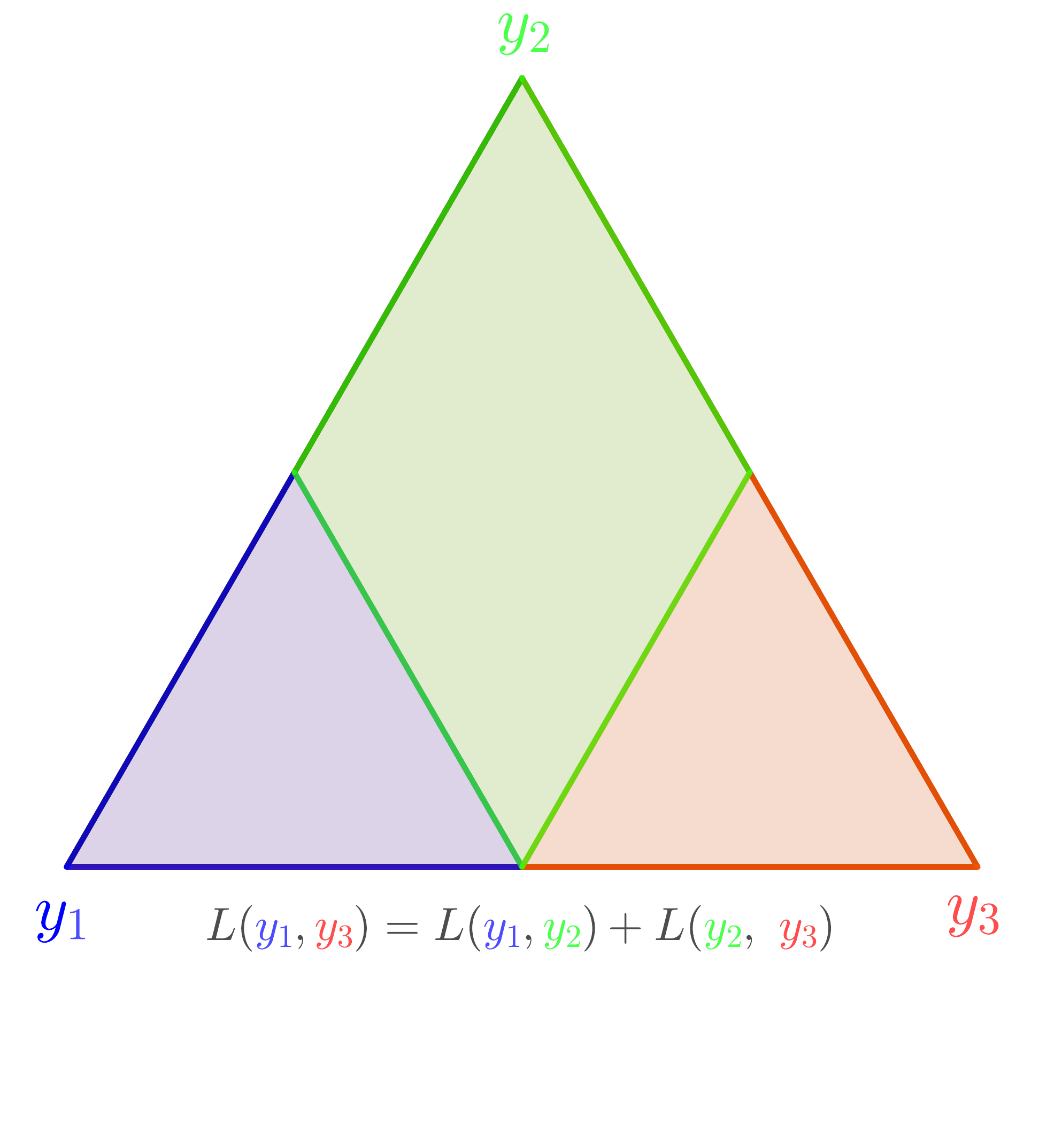}
    \includegraphics[width=0.24\textwidth]{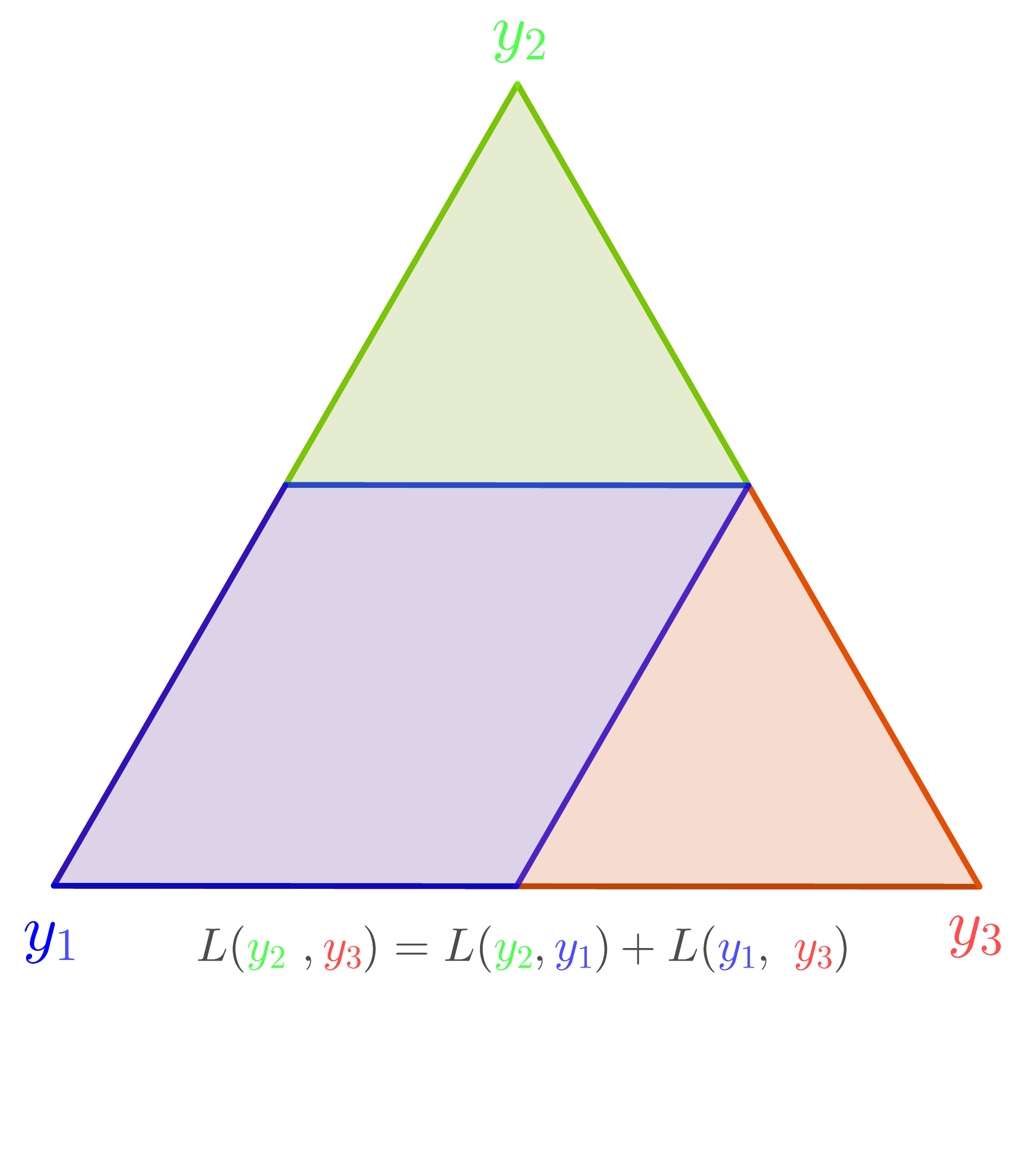}
    \includegraphics[width=0.24\textwidth]{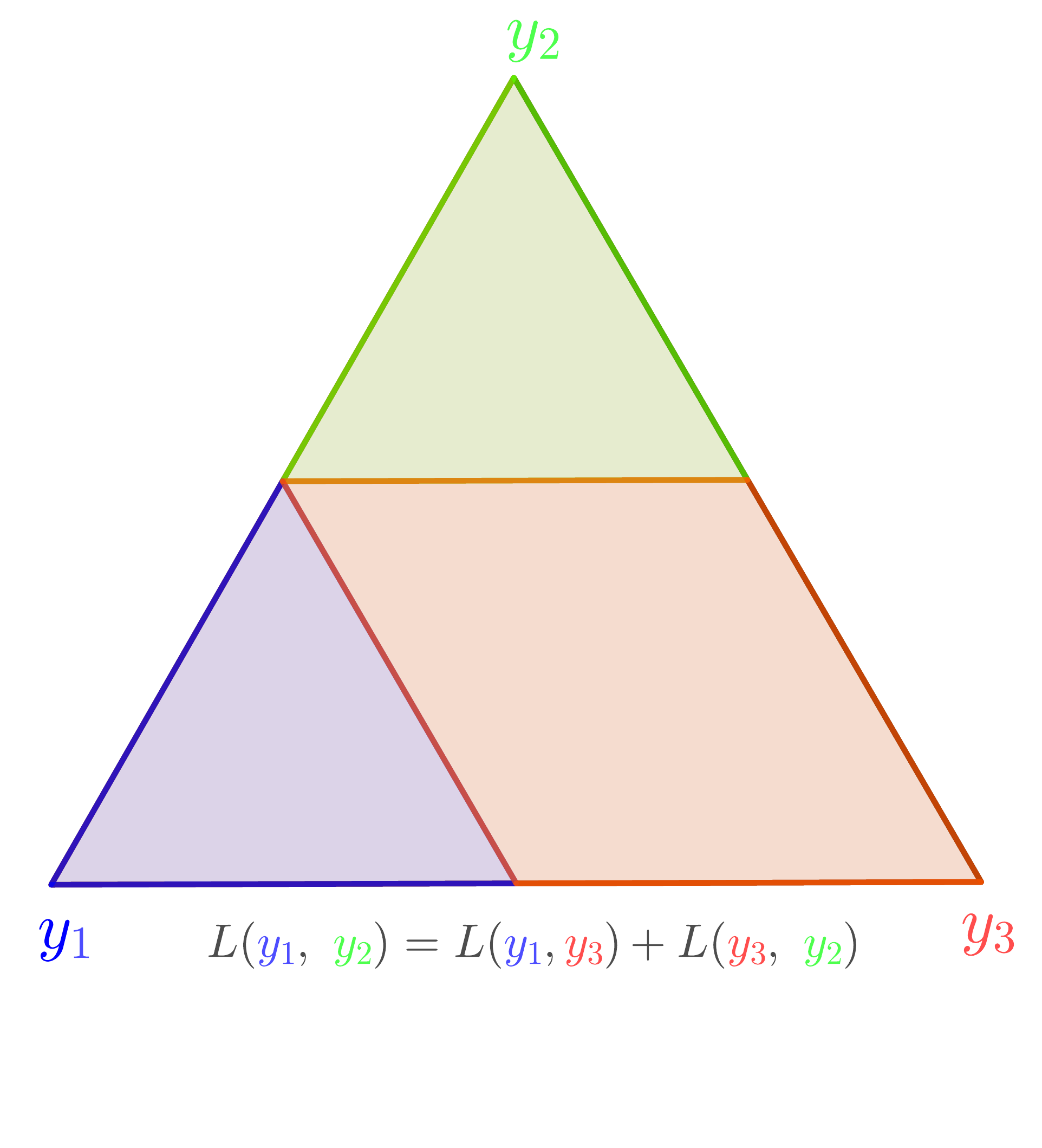}
    \includegraphics[width=0.26\textwidth]{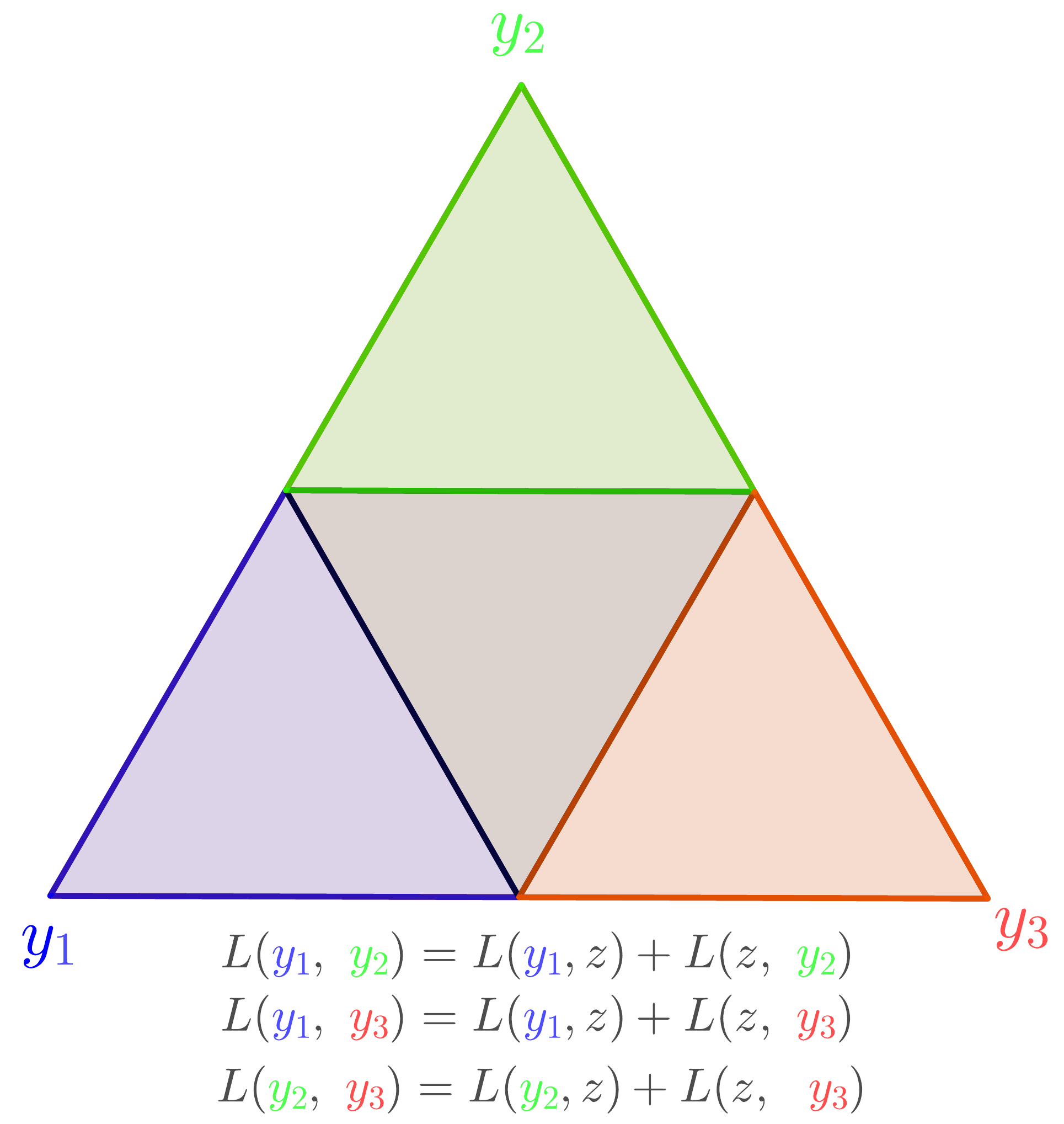}
    \caption{These are the only possible possibilities of the prediction sets in a three-dimensional face of the simplex. The equations associated to each configuration is written below the corresponding simplex and all together can be compactly written as the necessary condition given by the theorem. An edge from a corner of the simplex to the middle point of the opposite side is not possible as $L(y,y')=0$ if and only if $y=y'$ by assumption.}
    \label{fig:facetpossibilities}
\end{figure}

From \cref{propapp:maxmarginbayesrisks} and \cref{prop:weakerconsistency}, we obtain that if the Max-Margin loss is consistent to $L$, then the extreme points of the prediction sets $\Delta(y)$'s have to be of the form $1/2(e_y + e_{y'})$. Hence, the projection of the sets $\Delta(y)$'s into a three-dimensional simplex can only be of the form depicted in \cref{fig:facetpossibilities}. The necessary condition follows directly from these possibilities (see caption of \cref{fig:facetpossibilities}). Moreover, note that if the three identities of the theorem hold, then $L$ is a distance. To see that, note that the triangle inequality holds for any triplet $y_1, y_2, y_3\in\calY$ as:
\begin{align*}
    L(y_1,y_2) &= L(y_1, z) + L(z, y_2) \\
    &= L(y_1, y_3) - L(z, y_3) + L(y_3, y_2) - L(y_3, z) \\
    &= L(y_1, y_3) + L(y_3, y_1) - 2L(y_3, z) \\
    &\leq L(y_1, y_3) + L(y_3, y_1).
\end{align*}
\end{proof}

\paragraph{Examples of losses not satisfying the necessary condition. } We now show that the examples exposed in Section 2 do not satisfy the necessary condition of \cref{th:mainresultmaxmarginapp}.

\begin{lemma}
Let $L$ such with full rank loss matrix $L$ and existing $q\in\operatorname{int}(\Delta)$ for which all outputs optimal, i.e., $y^\star(q)=\calY$. Then, the Max-Margin loss is \emph{not} consistent to $L$.
\end{lemma}
\begin{proof}
The point $q$, which is not of the form $1/2(e_y+e_{y'})$ for some $y,y'\in\calY$, is an extreme point of the polytope $\Delta(y) = \{q\in\Delta~|~L_z^\top q \geq L_y^\top q, \forall z\in\calY\}$ for every $y\in\calY$. This is because $q\in\Delta$ is the unique solution of $Lq=u$ with $u=L_y^\top q$ for all $y\in\calY$. Hence, by \cref{prop:weakerconsistency}, the Max-Margin loss is not consistent to $L$.
\end{proof}

\begin{lemma}
The Max-Margin loss is not consistent to the the Hamming loss on permutations $L(\sigma,\sigma')=\frac{1}{M}\sum_{m=1}^M1(\sigma(m)\neq\sigma'(m))$ where $\sigma,\sigma'$ permutations of size $M$.
\end{lemma}
\begin{proof}
Take the transpositions $\sigma_1 = (3, 2), \sigma_2=(2, 1), \sigma_3= (3, 1)$. We have that $L(\sigma_i, \sigma_j) = 2/M$ for $i\neq j$ and $L(\sigma, \sigma')>\frac{2}{M}$ for all permutations $\sigma\neq \sigma'$. Hence, the necessary condition can't be satisfied. 
\end{proof}

\paragraph{The Hamming loss with $\boldsymbol{M=k=2}$ is consistent and it is not defined in a tree.} The Hamming loss $L(y,y') = \frac{1}{2}(1(y_1\neq y_1') + 1(y_2\neq y_2'))$ is consistent as it decomposes additively and each term is consistent as it is the binary 0-1 loss. However, it can't be described as the shortest path distance in a tree, but rather the shortest path distance in a cycle of size four with all weights equal to $1/2$.

\begin{figure}[ht!]
    \centering
    \includegraphics[width=0.45\textwidth]{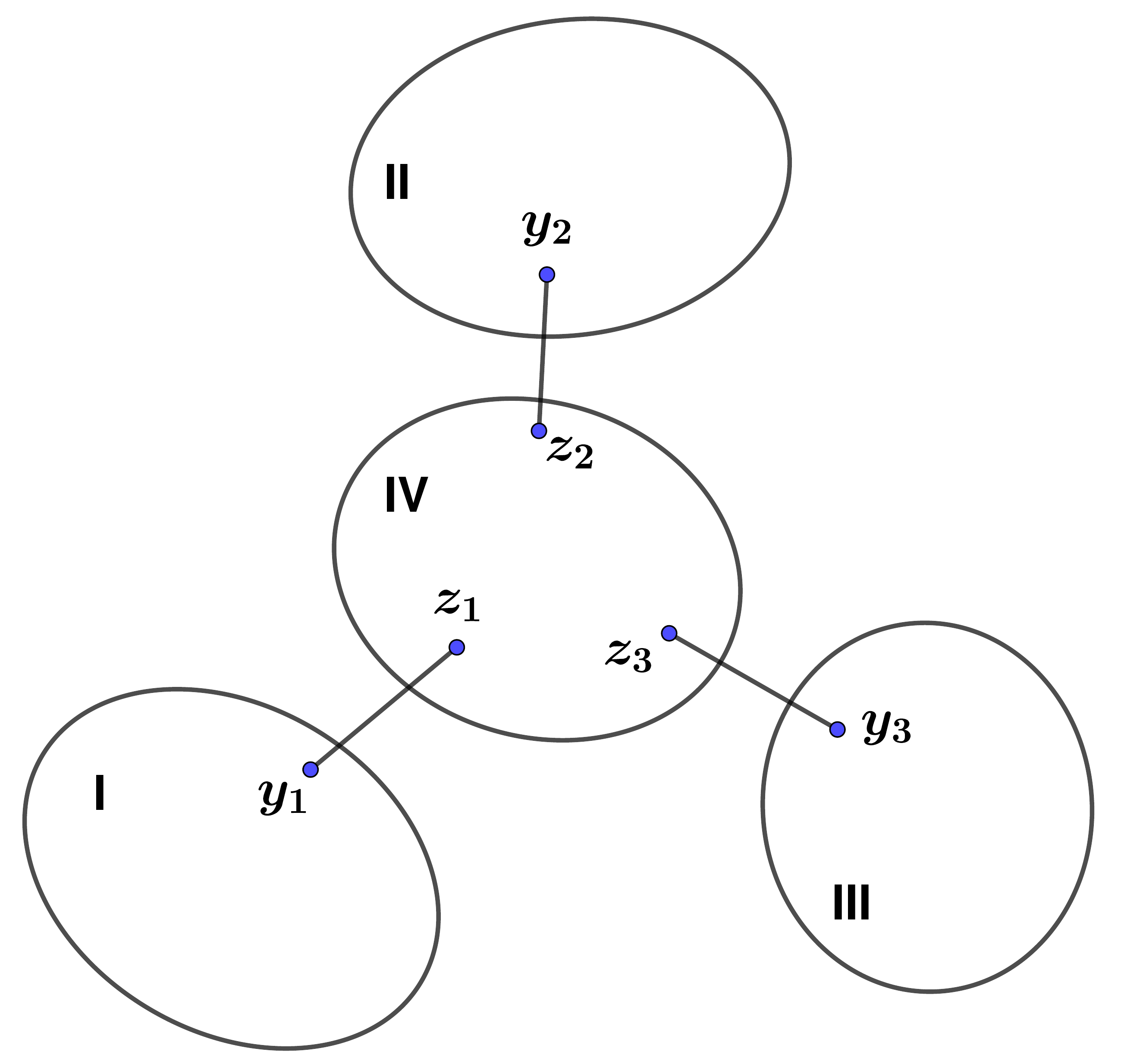}
    \includegraphics[width=0.5\textwidth]{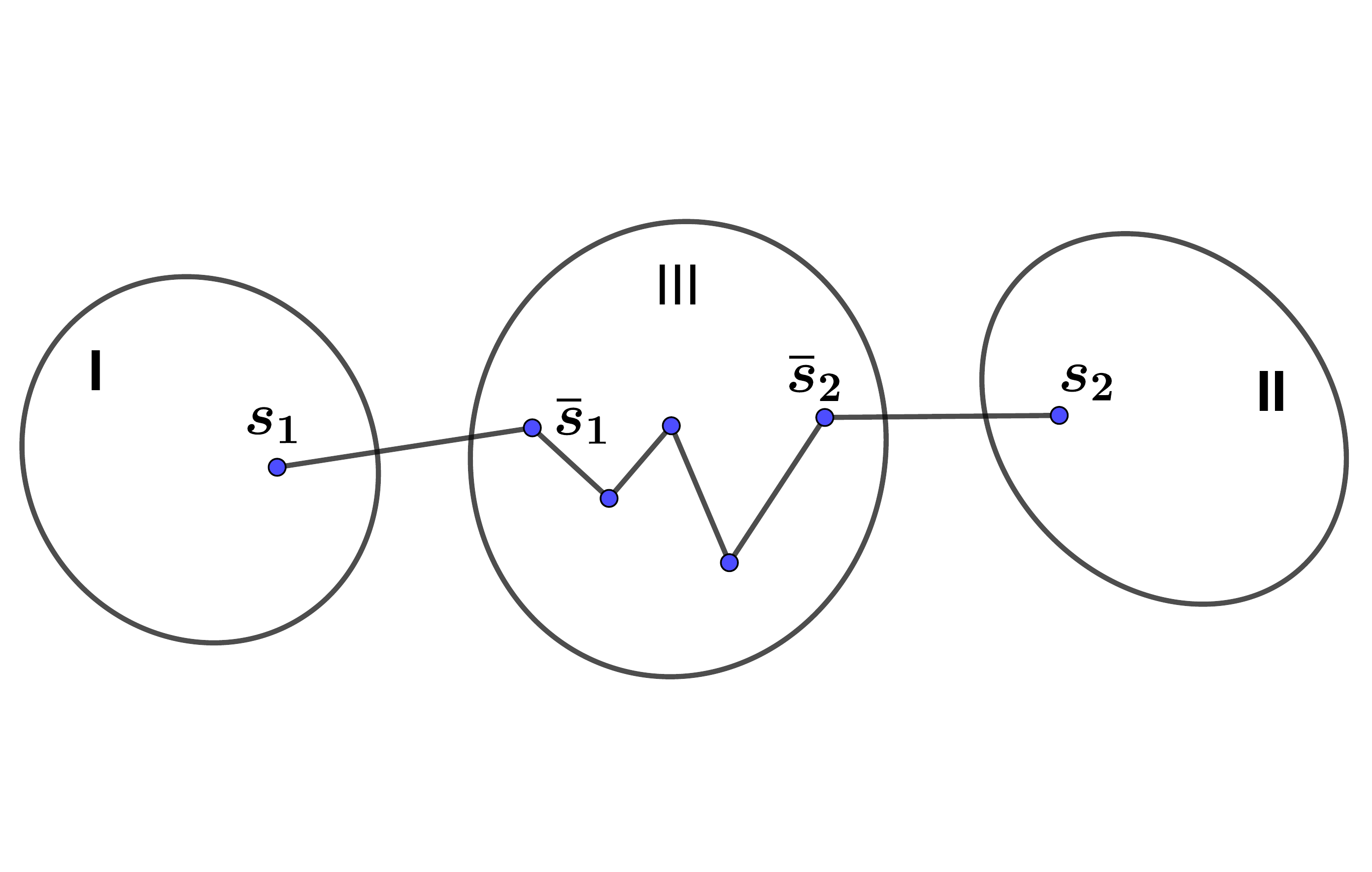}
    \caption{\textbf{Left:} The 4-partition of the output space where $y_1,y_2,y_3\in S$. The different sets only communicate through the edges $z_i-y_i$ for $i=1,2,3$ respectively. \textbf{Right:} The 3-partition of the output space where $s_1,s_2\in S$ and the depicted path is the path between these two points.}
    \label{fig:treeproof}
\end{figure}

\subsection{Sufficient condition on the discrete loss L}
\label{sec:sufficientcondition}

\begin{theorem}
If $L$ is a distance defined in a tree, then the Max-Margin loss $S_{\M}$ embeds $L$ with embedding $\psi(y) = -L_y$ and it is consistent under the argmax decoding. 
\end{theorem}
\begin{proof}

We just have to prove that the extremes points of the polytope $\calP$ defined as
\begin{equation*}
    \calP = \{(q, u)\in\Rspace{k + 1}~|~q\in\Delta, L_y^\top q \geq u, \forall y\in\calY\}\subseteq\Rspace{k+1}
\end{equation*}

are of the form $(\frac{1}{2}(e_y + e_{y'}), \frac{1}{2}L(y, y'))$, where $y,y'\in\calY$. Using this, we obtain
\begin{align*}
    (-H_{2L})^*(v) &= \max_{q\in\Delta}\min_{z\in\calY}~2L_z^\top q + v^\top q \\
    &= \max_{(q, u)\in\calP}~2u + v^\top q \\
    &= \max_{(q, u)\in\operatorname{ext}(\calP)}~2u + v^\top q \\
    &= \max_{y,y'\in\calY}~L(y,y') + \frac{v_y + v_{y'}}{2} = (-H_{\M})^*(v),
\end{align*}
for all $v\in\Rspace{k}$. This implies $H_{2L} = 2H_L = H_{\M}$. We will use the algebraic framework introduced in \cref{sec:extremepoints}.

Let $x\in\operatorname{ext}(\calP)$ and $S$ are $T$ the sets of indices for which $L_y^\top q=u$ and $q_{y'} = 0$ for $y\in S$ and $y'\in T$ (as defined in \eqref{eq:definitionSandT}).

If $|S|=1$, then we necessarily. have $|T| = k-1$ because $|T|\geq k-|S| = k-1$ and $|T|=k$ is not possible because $q$ is in the simplex. In this case, the extreme point is equal to $q = (e_y, 0)$, which is of the desired form.

\textbf{First part of the proof. } If $|S|\geq 2$, let's prove that the elements in $S$ must be necessarily aligned, i.e., contained in a chain (always true for $|S|=2$). If we denote by $\operatorname{SP}(s, s')\subseteq\calY$ the elements in the shortest path between $s, s'$, this means that there exists $s_1, s_2\in S$ such that $s\in \operatorname{SP}(s_1, s_2)$ for all $s\in S$. 

If the elements in $S$ are not aligned, then there exist pairwise different elements $y_1,y_2,y_3\in S$ and $z_1,z_2,z_3\in\calY$ (possibly repeated) such that the tree defining the loss $L$ can be partitioned into four sets $\I, \II, \III, \IV$ of the form depicted in the left \cref{fig:treeproof}, where the edges $y_i-z_i$ belong to the tree for $i=1,2,3$.

\begin{align*}
    L_{z_1}^\top q &= \sum_{y\in\I}~L(z_1, y)q_y + \sum_{\substack{y\notin\I \\y\neq z_1}}~L(z_1, y)q_y \\
    &= \sum_{y\in\I}~(L(z_1, y_1) + L(y_1, y))q_y + \sum_{\substack{y\notin\I \\y\neq z_1}}~(L(y_1, y) - L(z_1, y_1))q_y \\
    &= \sum_{y\neq z_1}~L(y_1, y)q_y + \Big(\sum_{y\in\I}q_i - \sum_{\substack{y\notin\I \\i\neq z_1}}q_y\Big)L(z_1, y_1) \\
    &= \sum_{y\in\calY}~L(y_1, y)q_y + \Big(\sum_{y\in\I}q_i - \sum_{y\notin\I}q_y\Big)L(z_1, y_1)\\
    &= u + \Big(\sum_{y\in\I}q_y - \sum_{y\notin\I}q_y\Big)L(z_1, y_1).
\end{align*}

If we repeat the same procedure for $\II$ and $\III$, we obtain that 
\begin{equation*}
    \left\{
    \begin{array}{ll}
        L_{z_1}^\top q &= u + \Big(\sum_{y\in\I}q_y - \sum_{y\notin\I}q_y\Big)L(z_1, y_1) \\
        L_{z_2}^\top q &= u + \Big(\sum_{y\in\II}q_y - \sum_{y\notin\II}q_y\Big)L(z_2, y_2) \\
        L_{z_3}^\top q &= u + \Big(\sum_{y\in\III}q_y - \sum_{y\notin\III}q_y\Big)L(z_3, y_3)
    \end{array}\right.
\end{equation*}
However, note that $\sum_{y\in\I}q_y +\sum_{y\in\II}q_y +\sum_{y\in\III}q_y +\sum_{y\in\IV}q_y =1$, which implies that 
\begin{equation*}
    \min_{\calA\in\{\I,\II,\III\}}~\Big(\sum_{y\in \calA}q_y - \sum_{y\notin\calA}q_y\Big) < 0.
\end{equation*}
Hence, there exists $i\in\{1,2,3\}$ for which $L_{z_i}^\top q < u$, which leads to a contradiction as $L_y^\top q \geq u$ for all $y\in\calY$.

\textbf{Second part of the proof.} Now that we have that the elements in $S$ must be aligned, let's proceed with the proof by analyzing separately particular cases:
\begin{itemize}
    \item[-] ($\boldsymbol{S\cap T=\emptyset}$): This means that $q_s>0$ for all $s\in S$. Let $x=(\frac{1}{2}(e_{s_1} + e_{s_2}), \frac{1}{2}L(s_1, s_2))$, where $S\subseteq\operatorname{SP}(s_1, s_2)$. Then, it satisfies the equality constraints as $L_sq = 1/2(L(s_1, s) + L(s, s_2)) = 1/2L(s_1, s_2)$ because $s\in\operatorname{SP}(s_1,s_2)$ for all $s\in S$. Hence, it has to be equal to the unique solution of the linear system of equations.
    \item[-] ($\boldsymbol{S\cap T\neq\emptyset}$): Let's separate into two more cases.
    \begin{itemize}
        \item ($\boldsymbol{\exists r_1,r_2\in[k]\setminus T}$ \textbf{such that} $\boldsymbol{S\subseteq\operatorname{SP}(r_1,r_2)}$): Let $x=(\frac{1}{2}(e_{r_1} + e_{r_2}), \frac{1}{2}L(r_1, r_2))$. Then, it satisfies the equality constraints as $L_sq = 1/2(L(r_1, s) + L(s, r_2)) = 1/2L(r_1, r_2)$ because $s\in\operatorname{SP}(r_1,r_2)$ for all $s\in S$. Hence, it has to be equal to the unique solution of the linear system of equations.
        \item ($\boldsymbol{\nexists r_1,r_2\in[k]\setminus T}$ \textbf{such that} $\boldsymbol{S\subseteq\operatorname{SP}(r_1,r_2)}$): We will show that this case is not possible. Consider the shortest path between $s_1$ and $s_2$ in $S$ and the partition of the vertices of the tree into the sets $\I,\II,\III$ depicted in the right \cref{fig:treeproof}. We know that 
        \begin{equation*}
            \{\I\cap([k]\setminus T)=\emptyset\}\vee\{\II\cap([k]\setminus T)=\emptyset\}.
        \end{equation*}
        If this is not true, then taking $r_1\in\I$ and $r_2\in\II$ we obtain $S\subseteq\operatorname{SP}(r_1, r_2)$. Assume that $\I\cap([k]\setminus T)=\emptyset$. We have that $L(s_1, y) > L(\bar{s}_1, y)$ for all $y\in[k]\setminus T$, which means that \begin{equation*}
            L_{s_1}^\top q > L_{\bar{s}_1}^\top q.
        \end{equation*}
        This is a contradiction because $L_{y}^\top q \geq u = L_{s_1}^\top q$ as $s_1\in S$. The case $\II\cap([k]\setminus T)=\emptyset$ can be done analogously. 
    \end{itemize}
\end{itemize}

\textbf{Third part of the proof.} By \cref{prop:bayesriskembedding}, to prove that $S_{\M}$ embeds $2L$ with embedding $\psi(y) = -L_y$, we only need to show that $S_{\M}(-L_y) = 2L_y$. For every $z\in\calY$, we have
\begin{align*}
    S_{\M}(-L_y, z) &= \max_{y'\in\calY}~L(z, y') + (-L_y)^\top e_{y'} - (-L_y)^\top e_z \\
    &= \max_{y'\in\calY}~\{L(z, y') - L(y, y')\} + L(y, z) \\
    &= 2L(y, z),
\end{align*}
where at the last step we have used $L(z, y') - L(y, y') \leq L(y, y')$ as $L$ is a distance and so the maximization is achieved at $y'=y$. 

Finally, the argmax decoding is consistent as it is an inverse of the embedding $\psi(y) = -L_y$ as 
\begin{equation*}
d(\psi(y)) = \argmax_{y'\in\calY}~-L(y, y') = \argmin_{y'\in\calY}~L(y, y') = y.
\end{equation*}
\end{proof}

\subsection{Partial Consistency through dominant label condition}

\begin{lemma}\label{lemapp:distancecorners}
Let $q\in\Delta$ such that $q_1\geq 1/2\geq p_y$ for all $y\neq 1$. If $L$ is a distance, then $L_1^\top q \leq L_{y}^\top q$ for all $y\in\calY$.
\end{lemma}
\begin{proof}
\begin{align*}
    L_z^\top q &= q_1L_{z1} + \sum_{y\neq 1, z}L_{zy}q_y \\
    &\geq \frac{1}{2}L_{z1} + \sum_{y\neq 1, z}L_{zy}q_y \\
    &= \Big(\frac{1}{2} - \sum_{y\neq 1, z}q_y\Big)L_{z1} + \sum_{y\neq 1, z}(L_{z1} + L_{zy})q_y \\
    &\geq \Big(\frac{1}{2} - \sum_{y\neq 1, z}q_y\Big)L_{z1} + \sum_{y\neq 1, z}L_{1y}q_y \\
    &\geq L_{z1}q_z + \sum_{y\neq 1, z}L_{1y}q_y \\
    &= \sum_{y\neq 1}L_{1y}q_y = L_1^\top q.
\end{align*}
\end{proof}

\begin{lemma}\label{lemapp:ineqbayesrisks}
If $L$ is a distance, then $H_{\M} \leq 2H_{L}$.
\end{lemma}
\begin{proof}
In particular, $L$ is also symmetric. Recall that 
\begin{align}
    H_L(q) &= \min_{y\in\calY}L_y^\top q = \min_{\substack{a\succeq L p, \\ p\in\Delta.}}~a^\top q = \min_{a\in\calP_{L}}~a^\top q \\
    \frac{1}{2}H_{\M}(q) &= \min_{1a^\top + a1^\top \succeq L}~a^\top q = \min_{a\in\calP_{L}^{\M}}~a^\top q,
\end{align}
where the second expression is given by \cref{propapp:bayesrisks}. To show that $2H_{\M}\leq H_{L}$ we will show that $\calP_L\subseteq\calP_L^{\M}$. If $a\in\calP_L$, then there exists $p\in\Delta$ such that $a\succeq Lp$.
Moreover, if $L$ is a distance, it means that the triangle inequality $L(y, y') \leq L(y, z) + L(z, y')$, which is equivalent to $L \preceq 1L_z^\top + L_z1^\top$ for all $z\in\calZ$. This can also be written as
\begin{equation}
    L\preceq 1p^\top L + Lp1^\top, \hspace{0.5cm} \forall p\in\Delta.
\end{equation}
Finally, note that if $Lp\preceq a$, then $Lp1^\top \preceq a1^\top$, and the same holds for its transpose $1p^\top L \preceq 1a^\top$. Hence, we obtain that~$L\preceq 1a^\top + a1^\top$, which is equivalent to $a\in\calP_L^{\M}$.
\end{proof}

\begin{proposition}
If $L$ is a distance, then $\frac{1}{2}H_{\M}(q) = H_L(q)$, for all $q\in\Delta$ such that $\|q\|_{\infty} \geq 1/2$. Moreover, under this condition on $q$, it is calibrated with the argmax decoding.
\end{proposition}
\begin{proof}
Combining \cref{lemapp:distancecorners} and \cref{lemapp:ineqbayesrisks} gives 
\begin{equation*}
    H_{\M}(q) \leq 2H_{L}(q) = 2L_y^\top q,
\end{equation*}
for all $q\in\Delta$ such that $p_y\geq 1/2 \geq p_{z}$ for all $z\neq y$. Hence, in order to prove the equality at these dominant label points, we just need to find a matrix $Q\in U(q, q)$ such that $\langle L, Q\rangle_{\F} = 2L_y^\top q$.
We define the matrix $Q$ as
\begin{equation*}
    Q_{ij} = \left\{\begin{array}{lr}
        q_i & \text{if } (i=y)\wedge (i\neq j) \\
        q_j & \text{if }(j=y)\wedge (j\neq i) \\
        2q_{y}-1 & \text{if }i=j=y \\
        0 & \text{otherwise.}
    \end{array}\right. .
\end{equation*}
The matrix $Q$ has $q$ at the $y$-th row and $y$-th column and $2q_y - 1$ at the crossing point, instead of $2q_y$. The matrix is in $U(q, q)$ as the sum of the rows and columns gives $q$ and it is non-negative because $2q_y-1\geq 0$ by assumption. Moreover, the objective satisfies
\begin{align*}
    \langle L, Q\rangle_{\F} = \sum_{y\in\calY}~L_{y'}^\top Q_{y'} &= L_y^\top Q_y + \sum_{y'\neq y}L_{y'}Q_{y'} \\
    &= L_y^\top q + \sum_{y'\neq y}q_{y'}L_{y'}e_{y} \\ 
    &= L_y^\top q + \sum_{y'\neq y}q_{y'}L(y, y') = 2L_y^\top q.
\end{align*}
The first part of the result follows. For the second part, we show that $v_{\M}^\star(q) = -L_{y}$ at the points $q$ satisfying the assumption. Note that we have $H_{\M}(q) = 2L_y^\top q = S(v^\star(q))^\top q$, so we only need to show $S_{\M}(-L_{y}) = 2L_y$. For every $z\in\calY$, we have
\begin{align*}
    S_{\M}(-L_y, z) &= \max_{y'\in\calY}~L(z, y') + (-L_y)^\top e_{y'} - (-L_y)^\top e_z \\
    &= \max_{y'\in\calY}~\{L(z, y') - L(y, y')\} + L(y, z) \\
    &= 2L(y, z),
\end{align*}
where at the last step we have used $L(z, y') - L(y, y') \leq L(y, y')$ as $L$ is a distance and so the maximization is achieved at $y'=y$.
\end{proof}


\newpage
\section{PROOFS ON RESTRICTED-MAX-MARGIN LOSS}\label{secapp:C}
The following assumption \textbf{A1} will be key to prove our consistency results. \\
\textbf{Assumption A1:} If $q$ is an extreme point of $\Delta(y')$ for some $y'\in\calY$, then 
\begin{equation}
    \{q\in\Delta(y)\}\vee\{q_y = 0\}, \hspace{0.5cm} \forall y\in\calY.
\end{equation}
The following \cref{lem:intersection} will be useful for the results below.
\begin{lemma}\label{lem:intersection}
If \textbf{A1} is satisfied, then $\Delta(y)\cap\Delta(y')\neq\emptyset$ for all $y,y'\in\calY$.
\end{lemma}
\begin{proof}
If $\Delta(y)\cap\Delta(y')=\emptyset$ and \textbf{A1} is satisfied, then for every $q$ extreme point of $\Delta(y')$ we have that $q_y=0$. Hence, the prediction set $\Delta(y)$ is included in the non full-dimensional polyhedron~$\Delta\cap\{e_y=0\}$. As $\Delta = \cup_{y\in\calY}~\Delta(y)$, this implies that the point $e_{y'}\in\Delta(y')$ must be necessarily included in another $\Delta(z)$, which can only be possible if $L(z, y) = 0$. However, by assumption $L(z, y) = 0$ if and only if $z=y$.
\end{proof}

The following \cref{lem-th-5.1} shows that under \textbf{A1}, the Restricted-Max-Margin loss embeds the loss $L$, which in turn implies consistency. 
\begin{lemma}\label{lem-th-5.1} Assume \textbf{A1}. If $q$ is an extreme point of $\Delta(y)$ for some $y\in\calY$, then
\begin{equation*}
    qq^\top \in \underset{Q\in U(q, q)\cap\calC_L}{\argmax}\langle L, Q\rangle_{\F} \hspace{0.5cm}\text{and}\hspace{0.5cm}\Omega_{\MM}(q) = \Omega_{\RM}(q).
\end{equation*}
\end{lemma}
\begin{proof}
The matrix $qq^\top$ belongs to $U(q, q)$ as $qq^\top 1 = q$ and $qq^\top\succeq 0$ and to $\calC_L$ by assumption. Let $z\in y^\star(q)$. We have that 
\begin{align*}
    -\Omega_{\RM}(q) &= \underset{Q\in U(q, q)\cap\calC_L}{\max}\langle L, Q\rangle_{\F} \\
    &\geq \langle L, qq^\top\rangle_{\F} = \sum_{y}q_yL_y^\top q = \sum_yq_yL_z^\top q \\
    &= L_z^\top(1^\top qq^\top) = L_z^\top q = -\Omega_{\MM}(q)
\end{align*}
We have shown that $\Omega_{\RM}(q)\leq \Omega_{\MM}(q)$. Combining with Proposition 3.4 that states $\Omega_{\RM}\geq \Omega_{\MM}$, we obtain that~$\Omega_{\RM}(q) = \Omega_{\MM}(q)$.
\end{proof}
\begin{theorem}
If \textbf{A1} is satisfied, the Restricted-Max-Margin loss embeds~$L$ with embedding~$\psi(y) = -L_y$ and the loss is consistent to $L$ under the argmax decoding.
\end{theorem}
\begin{proof}
We split the proof into two parts.

\paragraph{First part: $\boldsymbol{S_{\RM}}$ embeds $\boldsymbol{L}$.} Let $z\in y^\star(q)$, so that $q\in\Delta(z)$. We can write $q$ as a convex combination of extreme points of the polytope $\Delta(z)$ as
\begin{equation*}
    q = \sum_{i=1}^m\alpha_iq_i,
\end{equation*}
where $\alpha\in\Delta_m$ and $q_i$ is an extreme point of $\Delta(z)$. The matrix $Q = \sum_{i=1}^m\alpha_iq_iq_i^\top$ belongs to $U(q, q)\cap \calC_L$ as:
\begin{itemize}
    \item $Q\in U(q, q)$: We have $Q1 = \sum_{i=1}^m\alpha_iq_iq_i^\top 1 = \sum_{i=1}^m\alpha_iq_i = q$, the same holds for $Q^\top$ and $\sum_{i=1}^m\alpha_iq_iq_i^\top \succeq 0$.
    \item $Q\in\calC_L$: For all $y\in\calY$, we have
    $(1L_y^\top - L)Q_y = \sum_{i=1}^m \alpha_i\underbrace{q_{i,y}(1L_y^\top - L)q_i}_{\preceq 0} \preceq 0$.
\end{itemize}
Moreover, we obtain:
\begin{align*}
    -\Omega_{\RM}(q) &= \underset{Q\in U(q, q)\cap\calC_L}{\max}\langle L, Q\rangle_{\F} \\
    &\geq \langle L, Q\rangle_{\F} = \sum_{i=1}^m\alpha_i\langle L, q_iq_i^\top\rangle_{\F} \\
    &= -\sum_{i=1}^m\alpha_i\Omega_{\RM}(q_i) =-\sum_{i=1}^m\alpha_i\Omega_{\MM}(q_i) \\
    &= \sum_{i=1}^m\alpha_iL_z^\top q_i = L_z^\top q = -\Omega_{\MM}(q)
\end{align*}
We have shown $\Omega_{\RM}\leq \Omega_{\MM}$. Combining with Proposition 3.4 that states $\Omega_{\RM}\geq \Omega_{\MM}$, we obtain $\Omega_{\RM} = \Omega_{\MM}$.

\paragraph{Second part: the embedding is $\boldsymbol{\psi(y) = -L_y}$.} By \cref{prop:bayesriskembedding}, we only need to show that $\psi(z) = -L_z$, i.e.,
\begin{equation*}
    S_{\RM}(-L_z, y) = \sup_{q\in\Delta(y)}~L_y^\top q + (-L_z)^\top q - (-L_z)^\top e_y = L(y, z),
\end{equation*}
which holds whenever
\begin{equation}
    \max_{q\in\Delta(y)}~(L_y - L_z)^\top q = 0,
\end{equation}
for all $y,z\in\calY$. Note that by construction $(L_y - L_z)^\top q \leq 0$ for all $q\in\Delta(y)$. Moreover, by \cref{lem:intersection} we have that $\Delta(z)\cap\Delta(y)\neq\emptyset$, so there exists $q\in\Delta(y)$ with $L_y^\top q = L_z^\top q$.

Finally, the argmax decoding is consistent as it is an inverse of the embedding $\psi(y) = -L_y$ as 
\begin{equation*}
d(\psi(y)) = \argmax_{y'\in\calY}~-L(y, y') = \argmin_{y'\in\calY}~L(y, y') = y.
\end{equation*}
\end{proof}

\begin{proposition}\label{prop:sufficientsimple} Assume that $q\in\Delta(y) \implies q_y > 0$ for all $q\in\Delta$. Then \textbf{A1} is satisfied. 
\end{proposition}
\begin{proof}
We will prove that if the Assumption is not satisfied then it exists a vertex of $\Delta(y)$ for some $y\in\calY$ such that $S\cap T\neq\emptyset$.
If the Assumption is not satisfied at vertex $q$, then $\{q\notin \Delta(y)\}\wedge\{q_y>0\}$, which means in particular that $S\cup T \subsetneq[k]$. This necessarily means that $S\cap T\neq\emptyset$ because we must have $|S|+|T|\geq k$ to have maximal rank as $q$ is a vertex. 
\end{proof}

\begin{proposition}
Consistency of the Max-Margin implies consistency of Restricted-Max-Margin. 
\end{proposition}
\begin{proof}
From \cref{propapp:maxmarginbayesrisks} and \cref{prop:weakerconsistency}, we know that if the Max-Margin loss is consistent to $L$, then the extreme points of the prediction sets $\Delta(y)$'s have to be of the form $1/2(e_y + e_{y'})$. We will see that in this case \textbf{(A1)} is always satisfied. Indeed, if $q$ is an extreme point of a prediction set, then is of the form $q=1/2(e_y + e_{y'})$, which satisfies $\{q\in\Delta(z)\}\vee\{q_z = 0\}$ for all $z\in\calY$, because $q\in\Delta(z)$ if $z\in\{y,y'\}$ and $q_z=0$ otherwise. 
\end{proof}

\end{document}


%

%

\onecolumn
\aistatstitle{On the Consistency of Max-Margin Losses: \\
Supplementary Materials}

\vspace{-17cm}
\textbf{Outline.} The supplementary material is organized as follows. In \cref{secapp:A}, we prove general results on embeddings of losses, we compute the Bayes risks for each of the losses and we provide an algebraic characterization of the extreme points of the prediction sets. In~\cref{secapp:B} and \cref{secapp:C}, we provide the main results of the Max-Margin loss and the Restricted-Max-Margin loss, respectively.
\newpage

\section{PRELIMINARY RESULTS}\label{secapp:A}
    


\subsection{Results on Embeddability of Losses}

\begin{proposition}\label{prop:bayesriskembedding}
Let $\psi:\calY\xrightarrow[]{}\Rspace{k}$ be an embedding of the output space. If $H_S = H_L$ and~$S(\psi(y)) = L_y$ for all $y\in\calY$, then $S$ embeds $L$ with embedding $\psi$.
\end{proposition}
\begin{proof}
To prove that $S$ embeds $L$ with embedding $\psi(y) = -L_y$, we need to show that 
\begin{equation*}
y\in y^\star(q) \iff \phi(y) \in v^\star(q).    
\end{equation*}
If $y\in y^\star(q)$, then
\begin{equation*}
    H_L(q) = ~L_y^\top q = S(\phi(y))^\top q = H(q) = \min_{v\in\Rspace{k}}~S(v)^\top q.
\end{equation*}
Thus, $S(\phi(y))^\top q = \min_{v\in\Rspace{k}}~S(v)^\top q$ implies that necessarily $\phi(y)\in v^\star(q)$. Similarly, if $\phi(y)\in v^\star(q)$, then $\min_{z}~L_z^\top q = L_y^\top q$ which implies $y\in y^\star(q)$.
\end{proof}

\subsection{Bayes risk identities}

The following \cref{lem:bayesrisks} provides an identity which will be useful to provide the forms of the Bayes risk for $S_{\M}$ and $S_{\RM}$.

\begin{lemma}\label{lem:bayesrisks} Let $\calC_y\subseteq\Delta$, $\Omega^y(q) = -L_y^\top q + i_{\calC_y}(q)$ and $S(v, y) = (\Omega^y)^*(v) - v_y$ for every $y\in\calY$. Then,
\begin{equation}\label{eq:lembayesrisks}
    H_S(q) = \max_{\substack{\sum_{y}q_y\nu_y = q \\ \nu_y\in\calC_y}}~\sum_{y} q_y L_y^\top \nu_y.
\end{equation}
\end{lemma}
\begin{proof}
Recall the definition of the Bayes risk $H(q) = \min_{v\in\Rspace{k}}~S(v)^\top q$. Using the structural assumption on $S$, we can re-write it as
\begin{equation*}
    H(q) = \min_{v\in\Rspace{k}}~\sum_{y\in\calY}q_y(\Omega^y)^*(v) - v^\top q = -\max_{v\in\Rspace{k}}~v^\top q - \sum_{y\in\calY}q_y(\Omega^y)^*(v) = -\Big(\sum_{y\in\calY}q_y(\Omega^y)^*\Big)^*(q).
\end{equation*}
Recall that if the functions $h_i$ are convex, then the conjugate of the sum is the infimum convolution of the individual conjugates \cite{rockafellar1997convex} as
\begin{equation*}
    \Big(\sum_ih_i\Big)^*(t) = \min_{\sum_ix_i = t}\sum_ih_i^*(x_i).
\end{equation*}
If we apply this property to the functions $h_i = q_i(\Omega^y)^*$, we obtain:
\begin{align*}
    -\Big(\sum_{y\in\calY}q_y(\Omega^y)^*\Big)^*(q) &= 
    -\min_{\sum_{y\in\calY}\nu_y = q}~\sum_{y\in\calY}(q_y(\Omega^y)^*)^*(\nu_y) \\
    &= -\min_{\sum_{y\in\calY}\nu_y = q}~\sum_{y\in\calY}q_y\Omega^y(\nu_y/q_y) && (ah)^*(x) = ah^*(x/a) \\
    &= -\min_{\substack{\sum_{y\in\calY}\nu_y = q \\ \nu_y/q_y\in\calC_y,~\forall y\in\calY}}~-\sum_{y\in\calY}L_y^\top \nu_y && \Omega^y(q) = -L_y^\top q + i_{\calC_y}(q) \\
    &= \max_{\substack{\sum_{y\in\calY}\nu_y = q \\ \nu_y/q_y\in\calC_y, ~\forall y\in\calY}}~\sum_{y\in\calY}L_y^\top \nu_y && \text{redefine }\nu_y\text{ as }\nu_y/q_y \\
    &= \max_{\substack{\sum_{y\in\calY}q_y\nu_y = q \\ \nu_y\in\calC_y, ~\forall y\in\calY}}~\sum_{y\in\calY} q_y L_y^\top \nu_y.
\end{align*}
\end{proof}
The following \cref{propapp:bayesrisks} provides us with the first part of Proposition 3.4.
\begin{proposition}[Bayes risks]\label{propapp:bayesrisks}
For all $q\in\Delta$, the Bayes risks read
\begin{align*}
    H_{\MM}(q) &= \min_{y\in\calY}~L_y^\top q = H_L(q) \\
    H_{\M}(q) &= \underset{Q\in U(q, q)}{\max}\langle L, Q\rangle_{\F} \\
    H_{\RM}(q) &= \underset{Q\in U(q, q)\cap \calC_L}{\max}\langle L, Q\rangle_{\F},
\end{align*}
where 
\begin{equation*}
    U(q, q) = \{Q\in\Rspace{k\times k}_{+}~|~Q1=q, Q^\top 1=q\}, \hspace{0.3cm}\text{and}\hspace{0.3cm}\calC_L = \{Q\in\Rspace{k\times k}~|~(1L_y^\top - L)Q_y\preceq 0,~\forall y\in\calY\}.
\end{equation*}
\end{proposition}
\begin{proof}
The first identity is trivial and has already been derived in the main body of the paper. We use the above \cref{lem:bayesrisks} to obtain the identities corresponding to $S_{\M}$ and $S_{\RM}$.
\begin{enumerate}
    \item \emph{Bayes risk of Max-Margin:} In this case $\calC_y = \Delta$. If we define $\Gamma\in\Rspace{k\times k}$ as the matrix whose rows are $\nu_y$, the maximization reads
    \begin{equation*}
        \max_{\substack{\Gamma^\top q = q \\ \Gamma 1 = 1 \\ \Gamma\succeq 0}}~\sum_{y\in\calY} q_yL_y^\top \Gamma_{y}
    \end{equation*}
    If we now define $Q\in\Rspace{k\times k}$ as 
    $Q = \operatorname{diag}(q)\Gamma$, i.e., $Q_y=q_y\Gamma_y$, the objective can be re-written as a matrix scalar product as
    \begin{equation*}
    \sum_{y\in\calY}q_yL_y^\top \Gamma_{y} = \sum_{y\in\calY}L_y^\top(q_y\Gamma_y) = \sum_{y\in\calY}L_y^\top Q_y = \langle L, Q\rangle_{\F}.
    \end{equation*}
    Whenever $q_y>0$, the change of variables $Q_y=q_y\Gamma_y$ is invertible and the constraints satisfy
    \begin{align}
        (Q^\top 1)_y = q_y &\iff (\Gamma^\top q)_y = q_y \\
        (Q1)_y = q_y &\iff (\Gamma 1)_y = 1 \\
        Q_y \succeq 0 &\iff \Gamma_y \succeq 0.
    \end{align}
    On the other hand, if $q_y=0$ then $Q_y = 0$ but the objective is not affected as it is independent of $\Gamma_y$.
    \item \emph{Bayes risk of Restricted-Max-Margin:} In this case $\calC_y = \Delta(y) = \{q\in\Delta~|~(L_y-L_{y'})^\top q \preceq 0, \forall y'\in\calY\}$. The maximization now reads
    \begin{equation*}
        \max_{\substack{\Gamma^\top q = q \\ \Gamma 1 = 1 \\ \Gamma\succeq 0 \\
        (1L_y^\top - L)\Gamma_y\preceq 0, \forall y }}~\sum_{y\in\calY} q_yL_y^\top \Gamma_{y}.
    \end{equation*}
    The result follows as $(1L_y^\top - L)\Gamma_y\preceq 0$ if and only if $(1L_y^\top - L)Q_y\preceq 0$ whenever $q_y>0$.
\end{enumerate}
\end{proof}

\subsection{Extreme points of a polytope}
\label{sec:extremepoints}
We will need to analyse the extreme points of the polytope $\calP = \{(q, u)\in\Rspace{k + 1}~|~q\in\Delta, L_y^\top q \geq u, \forall y\in\calY\}\subseteq\Rspace{k+1}$ in the proof of the sufficient condition for consistency of Max-Margin in \cref{sec:sufficientcondition}.

\textbf{Algebraic characterization of extreme points of a polyhedron. } The following \cref{prop:characterizationextreme} provides us with an algebraic characterization of the extreme points of a polyhedron $\calQ=\{x\in\Rspace{n}~|~Ax\succeq b\}$.
\begin{proposition}[Theorem 3.17 of \cite{andreasson2020introduction}]\label{prop:characterizationextreme} Let $x\in\calQ = \{x\in\Rspace{n}~|~Ax\succeq b\}$, where $A\in\Rspace{m\times n}$ has $\rank(A) = n$ and $b\in\Rspace{m}$.
Let $I\subseteq[m]$ be a set of indexes for which the subsystem is an equality, i.e., $A_Ix_I = b_I$ with~$Ax_I\succeq b$. Then $x_I$ is an extreme point of $\calQ$ if and only if $\rank(A_S) = n$.
\end{proposition}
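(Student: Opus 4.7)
The plan is to use the standard characterization of extreme points: a feasible point $x\in\calQ$ fails to be extreme if and only if there exist $y\neq z$ in $\calQ$ with $x = \tfrac{1}{2}(y+z)$. I will split the proof into the two directions and exploit the partition of the row indices into the active set $I$ (where $A_i x = b_i$) and its complement (where $A_i x > b_i$ strictly).

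For the direction ($\Leftarrow$), assume $\rank(A_I) = n$ and suppose $x = \tfrac{1}{2}(y+z)$ with $y,z\in\calQ$. For each $i\in I$, feasibility gives $A_i y \ge b_i$ and $A_i z \ge b_i$, while averaging yields $A_i x = b_i = \tfrac{1}{2}(A_i y + A_i z)$. Since each summand is at least $b_i$ and their mean equals $b_i$, both must equal $b_i$. Hence $A_I y = A_I z = b_I$. Because $A_I$ has full column rank $n$, this linear system has a unique solution, so $y = z = x$, proving $x$ is extreme.

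For the direction ($\Rightarrow$), I argue by contrapositive: assume $\rank(A_I) < n$. Then the null space of $A_I$ is nontrivial, so pick $d\in\Rspace{n}\setminus\{0\}$ with $A_I d = 0$. Define $y = x + \varepsilon d$ and $z = x - \varepsilon d$. Then $A_i y = A_i z = b_i$ for all $i\in I$, and for every $i\notin I$ the strict inequality $A_i x > b_i$ allows us to choose $\varepsilon > 0$ small enough that both $A_i y \ge b_i$ and $A_i z \ge b_i$ (taking $\varepsilon$ smaller than $(A_i x - b_i)/|A_i d|$ over the finitely many such $i$ with $A_i d \ne 0$). Hence $y,z\in\calQ$, $y\ne z$, and $x = \tfrac{1}{2}(y+z)$, so $x$ is not extreme.

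The main ingredients are therefore completely elementary: the preservation of tight inequality constraints under convex combinations (direction $\Leftarrow$) and a compactness-free perturbation argument along a kernel direction (direction $\Rightarrow$). The only subtle point is the existence of a uniformly small $\varepsilon$ in the second part, which is immediate since $[m]\setminus I$ is finite and each inactive constraint leaves a strictly positive slack at $x$; no other step presents a real obstacle.
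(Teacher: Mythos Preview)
Your proof is correct and is exactly the standard argument for this classical result. The paper does not supply its own proof of this proposition---it merely cites Theorem~3.17 of \cite{andreasson2020introduction}---so there is no alternative approach to compare against; your two-direction argument (tight constraints are preserved under averaging, and a kernel perturbation witnesses non-extremality when the active rows drop rank) is precisely what one finds in that reference.
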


Let $\calP\subseteq\Rspace{k+1}$ be the polyhedron defined as 
\begin{equation*}
    \calP = \{(q, u)\in\Rspace{k + 1}~|~q\in\Delta, L_y^\top q \geq u, \forall y\in\calY\}\subseteq\Rspace{k+1}.
\end{equation*}
The polyhedron $\calP$ can be written as $\calP = \{x=(q, u)\in\Rspace{k+1}~|~Ax\succeq b\}$ where
\begin{equation}\label{eq:inequalitiespolyhedron}
\begin{array}{l}
     \begin{array}{c}
         \\ \\ S \\ \\
     \end{array}  \\
     \begin{array}{c}
         \\ \\ T \\ \\
     \end{array} \\
     \begin{array}{c}
         \\ \\
     \end{array}
\end{array}
\begin{array}{l}
    \left\{
     \begin{array}{c}
         \\ \\ \\ \\
     \end{array}\right.  \\
     \left\{
     \begin{array}{c}
         \\ \\ \\ \\
     \end{array}\right. \\
     \begin{array}{c}
         \\ \\
     \end{array}
\end{array}
    \underbrace{\left(
    \begin{array}{c | c}
        \begin{array}{ccc}
         & &  \\
         & \text{\Huge{L}} & \\ 
         & &  \\
    \end{array} & \begin{array}{c}
         \vdots  \\
         -1 \\ 
         \vdots \\
    \end{array}  \\ \hline
    \begin{array}{ccc}
         & &  \\
         & \text{\Huge{Id}} & \\
         & & \\
    \end{array} & \begin{array}{c}
         \vdots  \\
         0 \\ 
         \vdots \\
    \end{array} \\ \hline 
        \begin{array}{ccc}
            \cdots & 1 & \cdots  \\
        \end{array} & 0 \\
        \begin{array}{ccc}
            \cdots & -1 & \cdots  \\
        \end{array} & 0 \\
    \end{array}
    \right)}_{A}
    \left(\begin{array}{c}
        \\
         \\
         q  \\
         \\ 
         \\ \hline 
         u
    \end{array}\right) \succeq
    \underbrace{\left(\begin{array}{c}
          \vdots\\
          0 \\
          \vdots \\ \hline
          \vdots \\
          0 \\
          \vdots
          \\ \hline 
         1 \\
         -1 \
    \end{array}\right)}_{b},
\end{equation}
with $A\in\Rspace{(2k+2)\times (k+1)}$ and $b\in\Rspace{k+1}$.
Note that $\operatorname{rank}(A) = k + 1$.
Given $x=(q, u)$, define~$S,T\subseteq\calY$ as the subsets of outputs such that 
\begin{equation}\label{eq:definitionSandT}
    y\in S \iff L_y^\top q = u, \hspace{0.5cm} y\in T \iff q_y = 0,
\end{equation}
i.e., $S$ and $T$ correspond to the indexes of the first and second block of the matrix $A$ for which the inequality holds as an equality, respectively. More concretely, if $I$ are the indices of $A$ for which~$A_Ix_I = b_I$, we have that 
$I = S \cup( k + T )\cup \{2k+1\}\cup\{2k + 1\}$, because the last two inequalities must be an equality as $q\in\Delta$. Moreover, the sets $S$ have the following properties:
\begin{itemize}
    \item[-] We necessarily have $|S|\geq 1$: if $S = \emptyset$, then $\operatorname{rank}(A_I) = k$ and so the rank is not maximal, thus $x = (q,u)$ cannot be an extreme point. 
    \item[-] We necessarily have $|S| + |T| \geq k$ (using the fact that $\operatorname{rank}(A) = k + 1$).
\end{itemize}





\section{RESULTS ON MAX-MARGIN LOSS}\label{secapp:B}
\subsection{Bayes Risk of Max-Margin for Symmetric Losses}
The following \cref{propapp:maxmarginbayesrisks} gives another expression of the Bayes risk of~$S_{\M}$ and its Fenchel conjugate assuming the loss~$L$ is symmetric.

\begin{proposition}\label{propapp:maxmarginbayesrisks}
Let $H_{\M}(q) = \max_{Q\in U(q, q)}~\langle L, Q\rangle_{\F}$ and assume $L$ symmetric. Then, the following identities hold:
\begin{align}
    H_{\M}(q) &= \min_{\frac{1}{2}(a_y+a_{y'})\geq L(y, y')}~a^\top q, \hspace{0.5cm}\forall q\in\Delta, \\
    (-H_{\M})^*(v) &= \max_{y, y'\in\calY}~L(y, y') + \frac{v_{y} + v_{y'}}{2}, \hspace{0.5cm} \forall v\in\Rspace{k}.
\end{align}
\end{proposition}
\begin{proof} The first part corresponds to the dual of the maximization problem defining the Bayes risk $H_{\M}$ when $L$ is symmetric:
\begin{align*}
    -\min_{Q\in U(q, q)}-\langle L, Q\rangle_{\F} &= \min_{Q\succeq 0}\max_{a,b\in\Rspace{k}}~a^\top (Q1 - q) + b^\top(Q^\top 1 - q) - \langle L, Q\rangle_{\F} \\
    &= -\max_{a,b\in\Rspace{k}}-a^\top q - b^\top q + \min_{Q\succeq 0}~a^\top Q 1 + b^\top Q^\top 1 - \langle L, Q\rangle_{\F}
\end{align*}
We can now re-write the minimization objective as a matrix scalar product with $Q$ as $a^\top Q 1 = \operatorname{Tr}(a^\top Q 1) = \operatorname{Tr(Q1 a^\top)} = \langle Q, a1^\top\rangle_{\F}$ and analogously $b^\top Q 1 = \langle Q, 1b^\top\rangle_{\F}$. Hence, the objective of the minimum becomes $\langle a1^\top + 1b^\top - L, Q\rangle_{\F}$, which gives
\begin{equation*}
    \min_{Q\succeq 0}~\langle a1^\top + 1b^\top - L, Q\rangle_{\F} = \left\{\begin{array}{ll}
        0 & \text{if}~ a1^\top + 1b^\top - L \succeq 0 \\
        -\infty &  \text{otherwise}
    \end{array}\right. .
\end{equation*}
We obtain the following minimization problem in $a,b\in\Rspace{k}$
\begin{equation*}
    -\max_{a1^\top + 1b^\top\succeq L}-(a+b)^\top q = \min_{a1^\top + 1b^\top\succeq L}~(a+b)^\top q.
\end{equation*}
Using that $L$ is symmetric, we can add the constraint $a = b$. In order to see this, let $(a^\star, b^\star)$ be a solution of the linear problem. If $L$ is symmetric, then $(b^\star, a^\star)$ is also a solution, which implies that $\frac{1}{2}(a^\star + b^\star, a^\star + b^\star)$ too. Hence, we can assume $a = b$ and we obtain the desired result.

For the second part, note that if $L$ is symmetric, the matrix $Q$ can be assumed also symmetric. To see this, let $Q^\star = \argmax_{Q\in U(q, q)}\langle L, Q\rangle_{\F}$. Then if $L$ symmetric $(Q^\star)^\top$ is also a solution, which means that $\frac{1}{2}(Q^\star + (Q^\star)^\top)$ too, which is symmetric. Hence, we can write 
\begin{align*}
    H_{\M}(q) &= \max_{\substack{Q=Q^\top \\ Q1=q \\ Q\succeq 0}}~\langle L, Q\rangle_{\F} \\
            &= \min_{v\in\Rspace{k}}\max_{\substack{Q = Q^\top \\ Q\in \operatorname{Prob}(\calY\times\calY)}}~\langle L, Q\rangle_{\F} - v^\top (Q1 - q) \\
            &= \min_{v\in\Rspace{k}}\Big\{\underbrace{\max_{\substack{ Q = Q^\top \\ Q\in \operatorname{Prob}(\calY\times\calY) }}~\langle L + v1^\top, Q\rangle_{\F}}_{(-H_{\M})^*(v)}\Big\} - v^\top q,
\end{align*}
where at the last step we have used that $q\in\Delta$ and so $1^\top Q1=1$, which together with $Q\succeq 0$ implies $Q\in \operatorname{Prob}(\calY\times\calY)$. The extreme points of the problem domain $\{Q\in\operatorname{Prob}(\calY\times\calY)\}$ where the maximization of the linear objective is achieved are precisely the points $\{\frac{1}{2}(e_y + e_{y'})\}_{y,y'\in\calY}$.
\end{proof}

\subsection{Necessary Conditions for Consistency}






Recall that $S$ is consistent to $L$ if there exists a decoding $d:\Rspace{k}\xrightarrow[]{}\calY$ such that if $v\in v^\star(q)$, then necessarily~$d(v)\in y^\star(q)$ for all $q\in\Delta$. A necessary condition for this to hold is that every level set of $v^\star$ must be included in a level set of $y^\star$, which are precisely the prediction sets. 

\begin{lemma}
If $S$ is consistent to $L$, then for every $v\in\Rspace{k}$ there must exist a $y\in\calY$ such that
\begin{equation}\label{eq:01}
    (v^\star)^{-1}(v) \subseteq (y^\star)^{-1}(y) = \Delta(y).
\end{equation}
\end{lemma}
\begin{proof}
If \eqref{eq:01} does not hold, then there exists $q_1,q_2\in (v^\star)^{-1}(v)$ with $y^\star(q_1)\cap y^\star(q_2) = \emptyset$. However, Fisher consistency means
that $v\in v^\star(q_1)$ implies $d(v)\in y^\star(q_1)$ and $v\in v^\star(q_2)$ implies $d(v)\in y^\star(q_2)$, which is not possible because~$y^\star(q_1)\cap y^\star(q_2) = \emptyset$.
\end{proof}

The following \cref{prop:necessaryconsistency} re-writes \eqref{eq:01} in terms of the Bayes risk $H_S$.

\begin{proposition}\label{prop:vstarlevelsets}
The level sets of $v^\star$ are the image of $-\partial(-H_S)^*:\Rspace{k}\xrightarrow[]{}2^{\Delta}$, i.e., 
\begin{equation}
    \operatorname{Im}((v^\star)^{-1}) = \operatorname{Im}(-\partial(-H_S)^*).
\end{equation}
\end{proposition}
\begin{proof}
First of all, note that $-\partial(-H_S)^* = (\partial H_{\M})^{-1}$ \cite{rockafellar1997convex}. We have
\begin{equation*}
H_S(q) = \min_{v\in\Rspace{k}}~S(v)^\top q + i_{\Delta}(q), \hspace{0.5cm} \partial H_S(q) = S(\bar{v}) + \langle 1\rangle, ~\bar{v}\in v^\star(q).
\end{equation*}
Let's now prove the two inclusions.

($\subseteq$): Let $Q\in \operatorname{Im}((v^\star)^{-1})$. This means that there exists $V\in\Rspace{k}$ such that $V = \argmin_{v\in\Rspace{k}}~S(v)^\top q$ for all $q\in Q$. If we define $T = S(V) + \langle 1\rangle$, then $T = \partial H_{\M}(q)$ for all $q\in Q$.

($\supseteq$): Let $Q\in \operatorname{Im}((\partial H_{\M})^{-1})$. This means that there exists $T$ such that $T = \partial H_{\M}(q)$ for all $q\in Q$. For every $q\in Q$, the set $T$ can be written as $T = S(v^\star(q)) + \langle 1\rangle$. To show that $v^\star(q) = v^\star(q')$ for all $q, q'\in Q$, we need to show that if $S(v) = S(v') + c1, v\in v^\star(q),v'\in v^\star(q')$ for some $q,q'\in Q$, then necessarily $c=0$. This is because $S(v(q))^\top q'\geq S(v(q'))^\top q'\implies c\geq 0$ and $S(v(q))^\top q\leq S(v(q'))^\top q\implies c \leq 0$.



\end{proof}
\begin{corollary}[Necessary condition for consistency]\label{prop:necessaryconsistency}
If $S$ is Fisher consistent to $L$, then for every~$v\in\Rspace{k}$, there exists~$y\in\calY$ such that
\begin{equation}\label{eq:bayesrisksconsistency}
    -\partial(-H_{S})^*(v)\subseteq\Delta(y).
\end{equation}
\end{corollary}
\begin{proof}
This follows directly from \cref{eq:01} and \cref{prop:vstarlevelsets}.
\end{proof}





\begin{proposition}[Weaker necessary condition for consistency]\label{prop:weakerconsistency} If $S$ is consistent to $L$, then every extreme point of $\Delta(y)$ for some $y\in\calY$ must be a 0-dimensional image of~$-\partial(-H_{S})^*$.
\end{proposition}
\begin{proof}
Let $\Delta_S(v) = -\partial(-H_{S})^*(v)$. There exists a finite set $\calV\subseteq\Rspace{k}$ such that $\bigcup_{v\in\calV}\Delta_S(v) = \Delta(y)$. In particular, if $q$ is an extreme point of $\Delta(y)$, then there exists $v\in\calV$ such that $q\in \Delta_S(v)$. We need to show that $q$ is also an extreme point of $\Delta_S(v)$. Indeed, if $\Delta_S(v)\subseteq\Delta(y)$ are polyhedrons and $q\in \Delta_S(v), \Delta(y)$ is an extreme point of $\Delta(y)$, then it is also necessarily an extreme point of~$\Delta_S(v)$.
\end{proof}

\begin{theorem}\label{th:mainresultmaxmarginapp}
Let $L$ be a symmetric loss with $k>2$. If the Max-Margin loss is consistent to $L$, then $L$ is a distance, and for every three outputs $y_1,y_2,y_3\in\calY$, there exists $z\in\calY$ for which these the following three identities are satisfied:
\begin{align*}
    L(y_1, y_2) &= L(y_1, z) + L(z, y_2), \\
    L(y_1, y_3) &= L(y_1, z) + L(z, y_3), \\
    L(y_2, y_3) &= L(y_2, z) + L(z, y_3).
\end{align*}
\end{theorem}

\begin{proof}
\begin{figure}[ht!]
    \centering
    \includegraphics[width=0.21\textwidth]{figures/points1.pdf}
    \includegraphics[width=0.24\textwidth]{figures/points2.pdf}
    \includegraphics[width=0.24\textwidth]{figures/points3.pdf}
    \includegraphics[width=0.24\textwidth]{figures/points4.pdf}
    \includegraphics[width=0.24\textwidth]{figures/3sets1.pdf}
    \includegraphics[width=0.24\textwidth]{figures/3sets2.pdf}
    \includegraphics[width=0.24\textwidth]{figures/3sets3.pdf}
    \includegraphics[width=0.26\textwidth]{figures/3sets4.pdf}
    \caption{These are the only possible possibilities of the prediction sets in a three-dimensional face of the simplex. The equations associated to each configuration is written below the corresponding simplex and all together can be compactly written as the necessary condition given by the theorem. An edge from a corner of the simplex to the middle point of the opposite side is not possible as $L(y,y')=0$ if and only if $y=y'$ by assumption.}
    \label{fig:facetpossibilities}
\end{figure}

From \cref{propapp:maxmarginbayesrisks} and \cref{prop:weakerconsistency}, we obtain that if the Max-Margin loss is consistent to $L$, then the extreme points of the prediction sets $\Delta(y)$'s have to be of the form $1/2(e_y + e_{y'})$. Hence, the projection of the sets $\Delta(y)$'s into a three-dimensional simplex can only be of the form depicted in \cref{fig:facetpossibilities}. The necessary condition follows directly from these possibilities (see caption of \cref{fig:facetpossibilities}). Moreover, note that if the three identities of the theorem hold, then $L$ is a distance. To see that, note that the triangle inequality holds for any triplet $y_1, y_2, y_3\in\calY$ as:
\begin{align*}
    L(y_1,y_2) &= L(y_1, z) + L(z, y_2) \\
    &= L(y_1, y_3) - L(z, y_3) + L(y_3, y_2) - L(y_3, z) \\
    &= L(y_1, y_3) + L(y_3, y_1) - 2L(y_3, z) \\
    &\leq L(y_1, y_3) + L(y_3, y_1).
\end{align*}
\end{proof}

\paragraph{Examples of losses not satisfying the necessary condition. } We now show that the examples exposed in Section 2 do not satisfy the necessary condition of \cref{th:mainresultmaxmarginapp}.

\begin{lemma}
Let $L$ such with full rank loss matrix $L$ and existing $q\in\operatorname{int}(\Delta)$ for which all outputs optimal, i.e., $y^\star(q)=\calY$. Then, the Max-Margin loss is \emph{not} consistent to $L$.
\end{lemma}
\begin{proof}
The point $q$, which is not of the form $1/2(e_y+e_{y'})$ for some $y,y'\in\calY$, is an extreme point of the polytope $\Delta(y) = \{q\in\Delta~|~L_z^\top q \geq L_y^\top q, \forall z\in\calY\}$ for every $y\in\calY$. This is because $q\in\Delta$ is the unique solution of $Lq=u$ with $u=L_y^\top q$ for all $y\in\calY$. Hence, by \cref{prop:weakerconsistency}, the Max-Margin loss is not consistent to $L$.
\end{proof}

\begin{lemma}
The Max-Margin loss is not consistent to the the Hamming loss on permutations $L(\sigma,\sigma')=\frac{1}{M}\sum_{m=1}^M1(\sigma(m)\neq\sigma'(m))$ where $\sigma,\sigma'$ permutations of size $M$.
\end{lemma}
\begin{proof}
Take the transpositions $\sigma_1 = (3, 2), \sigma_2=(2, 1), \sigma_3= (3, 1)$. We have that $L(\sigma_i, \sigma_j) = 2/M$ for $i\neq j$ and $L(\sigma, \sigma')>\frac{2}{M}$ for all permutations $\sigma\neq \sigma'$. Hence, the necessary condition can't be satisfied. 
\end{proof}

\paragraph{The Hamming loss with $\boldsymbol{M=k=2}$ is consistent and it is not defined in a tree.} The Hamming loss $L(y,y') = \frac{1}{2}(1(y_1\neq y_1') + 1(y_2\neq y_2'))$ is consistent as it decomposes additively and each term is consistent as it is the binary 0-1 loss. However, it can't be described as the shortest path distance in a tree, but rather the shortest path distance in a cycle of size four with all weights equal to $1/2$.

\begin{figure}[ht!]
    \centering
    \includegraphics[width=0.45\textwidth]{figures/alignedS.pdf}
    \includegraphics[width=0.5\textwidth]{figures/shortestpathS.pdf}
    \caption{\textbf{Left:} The 4-partition of the output space where $y_1,y_2,y_3\in S$. The different sets only communicate through the edges $z_i-y_i$ for $i=1,2,3$ respectively. \textbf{Right:} The 3-partition of the output space where $s_1,s_2\in S$ and the depicted path is the path between these two points.}
    \label{fig:treeproof}
\end{figure}

\subsection{Sufficient condition on the discrete loss L}
\label{sec:sufficientcondition}

\begin{theorem}
If $L$ is a distance defined in a tree, then the Max-Margin loss $S_{\M}$ embeds $L$ with embedding $\psi(y) = -L_y$ and it is consistent under the argmax decoding. 
\end{theorem}
\begin{proof}

We just have to prove that the extremes points of the polytope $\calP$ defined as
\begin{equation*}
    \calP = \{(q, u)\in\Rspace{k + 1}~|~q\in\Delta, L_y^\top q \geq u, \forall y\in\calY\}\subseteq\Rspace{k+1}
\end{equation*}

are of the form $(\frac{1}{2}(e_y + e_{y'}), \frac{1}{2}L(y, y'))$, where $y,y'\in\calY$. Using this, we obtain
\begin{align*}
    (-H_{2L})^*(v) &= \max_{q\in\Delta}\min_{z\in\calY}~2L_z^\top q + v^\top q \\
    &= \max_{(q, u)\in\calP}~2u + v^\top q \\
    &= \max_{(q, u)\in\operatorname{ext}(\calP)}~2u + v^\top q \\
    &= \max_{y,y'\in\calY}~L(y,y') + \frac{v_y + v_{y'}}{2} = (-H_{\M})^*(v),
\end{align*}
for all $v\in\Rspace{k}$. This implies $H_{2L} = 2H_L = H_{\M}$. We will use the algebraic framework introduced in \cref{sec:extremepoints}.

Let $x\in\operatorname{ext}(\calP)$ and $S$ are $T$ the sets of indices for which $L_y^\top q=u$ and $q_{y'} = 0$ for $y\in S$ and $y'\in T$ (as defined in \eqref{eq:definitionSandT}).

If $|S|=1$, then we necessarily. have $|T| = k-1$ because $|T|\geq k-|S| = k-1$ and $|T|=k$ is not possible because $q$ is in the simplex. In this case, the extreme point is equal to $q = (e_y, 0)$, which is of the desired form.

\textbf{First part of the proof. } If $|S|\geq 2$, let's prove that the elements in $S$ must be necessarily aligned, i.e., contained in a chain (always true for $|S|=2$). If we denote by $\operatorname{SP}(s, s')\subseteq\calY$ the elements in the shortest path between $s, s'$, this means that there exists $s_1, s_2\in S$ such that $s\in \operatorname{SP}(s_1, s_2)$ for all $s\in S$. 

If the elements in $S$ are not aligned, then there exist pairwise different elements $y_1,y_2,y_3\in S$ and $z_1,z_2,z_3\in\calY$ (possibly repeated) such that the tree defining the loss $L$ can be partitioned into four sets $\I, \II, \III, \IV$ of the form depicted in the left \cref{fig:treeproof}, where the edges $y_i-z_i$ belong to the tree for $i=1,2,3$.

\begin{align*}
    L_{z_1}^\top q &= \sum_{y\in\I}~L(z_1, y)q_y + \sum_{\substack{y\notin\I \\y\neq z_1}}~L(z_1, y)q_y \\
    &= \sum_{y\in\I}~(L(z_1, y_1) + L(y_1, y))q_y + \sum_{\substack{y\notin\I \\y\neq z_1}}~(L(y_1, y) - L(z_1, y_1))q_y \\
    &= \sum_{y\neq z_1}~L(y_1, y)q_y + \Big(\sum_{y\in\I}q_i - \sum_{\substack{y\notin\I \\i\neq z_1}}q_y\Big)L(z_1, y_1) \\
    &= \sum_{y\in\calY}~L(y_1, y)q_y + \Big(\sum_{y\in\I}q_i - \sum_{y\notin\I}q_y\Big)L(z_1, y_1)\\
    &= u + \Big(\sum_{y\in\I}q_y - \sum_{y\notin\I}q_y\Big)L(z_1, y_1).
\end{align*}

If we repeat the same procedure for $\II$ and $\III$, we obtain that 
\begin{equation*}
    \left\{
    \begin{array}{ll}
        L_{z_1}^\top q &= u + \Big(\sum_{y\in\I}q_y - \sum_{y\notin\I}q_y\Big)L(z_1, y_1) \\
        L_{z_2}^\top q &= u + \Big(\sum_{y\in\II}q_y - \sum_{y\notin\II}q_y\Big)L(z_2, y_2) \\
        L_{z_3}^\top q &= u + \Big(\sum_{y\in\III}q_y - \sum_{y\notin\III}q_y\Big)L(z_3, y_3)
    \end{array}\right.
\end{equation*}
However, note that $\sum_{y\in\I}q_y +\sum_{y\in\II}q_y +\sum_{y\in\III}q_y +\sum_{y\in\IV}q_y =1$, which implies that 
\begin{equation*}
    \min_{\calA\in\{\I,\II,\III\}}~\Big(\sum_{y\in \calA}q_y - \sum_{y\notin\calA}q_y\Big) < 0.
\end{equation*}
Hence, there exists $i\in\{1,2,3\}$ for which $L_{z_i}^\top q < u$, which leads to a contradiction as $L_y^\top q \geq u$ for all $y\in\calY$.

\textbf{Second part of the proof.} Now that we have that the elements in $S$ must be aligned, let's proceed with the proof by analyzing separately particular cases:
\begin{itemize}
    \item[-] ($\boldsymbol{S\cap T=\emptyset}$): This means that $q_s>0$ for all $s\in S$. Let $x=(\frac{1}{2}(e_{s_1} + e_{s_2}), \frac{1}{2}L(s_1, s_2))$, where $S\subseteq\operatorname{SP}(s_1, s_2)$. Then, it satisfies the equality constraints as $L_sq = 1/2(L(s_1, s) + L(s, s_2)) = 1/2L(s_1, s_2)$ because $s\in\operatorname{SP}(s_1,s_2)$ for all $s\in S$. Hence, it has to be equal to the unique solution of the linear system of equations.
    \item[-] ($\boldsymbol{S\cap T\neq\emptyset}$): Let's separate into two more cases.
    \begin{itemize}
        \item ($\boldsymbol{\exists r_1,r_2\in[k]\setminus T}$ \textbf{such that} $\boldsymbol{S\subseteq\operatorname{SP}(r_1,r_2)}$): Let $x=(\frac{1}{2}(e_{r_1} + e_{r_2}), \frac{1}{2}L(r_1, r_2))$. Then, it satisfies the equality constraints as $L_sq = 1/2(L(r_1, s) + L(s, r_2)) = 1/2L(r_1, r_2)$ because $s\in\operatorname{SP}(r_1,r_2)$ for all $s\in S$. Hence, it has to be equal to the unique solution of the linear system of equations.
        \item ($\boldsymbol{\nexists r_1,r_2\in[k]\setminus T}$ \textbf{such that} $\boldsymbol{S\subseteq\operatorname{SP}(r_1,r_2)}$): We will show that this case is not possible. Consider the shortest path between $s_1$ and $s_2$ in $S$ and the partition of the vertices of the tree into the sets $\I,\II,\III$ depicted in the right \cref{fig:treeproof}. We know that 
        \begin{equation*}
            \{\I\cap([k]\setminus T)=\emptyset\}\vee\{\II\cap([k]\setminus T)=\emptyset\}.
        \end{equation*}
        If this is not true, then taking $r_1\in\I$ and $r_2\in\II$ we obtain $S\subseteq\operatorname{SP}(r_1, r_2)$. Assume that $\I\cap([k]\setminus T)=\emptyset$. We have that $L(s_1, y) > L(\bar{s}_1, y)$ for all $y\in[k]\setminus T$, which means that \begin{equation*}
            L_{s_1}^\top q > L_{\bar{s}_1}^\top q.
        \end{equation*}
        This is a contradiction because $L_{y}^\top q \geq u = L_{s_1}^\top q$ as $s_1\in S$. The case $\II\cap([k]\setminus T)=\emptyset$ can be done analogously. 
    \end{itemize}
\end{itemize}

\textbf{Third part of the proof.} By \cref{prop:bayesriskembedding}, to prove that $S_{\M}$ embeds $2L$ with embedding $\psi(y) = -L_y$, we only need to show that $S_{\M}(-L_y) = 2L_y$. For every $z\in\calY$, we have
\begin{align*}
    S_{\M}(-L_y, z) &= \max_{y'\in\calY}~L(z, y') + (-L_y)^\top e_{y'} - (-L_y)^\top e_z \\
    &= \max_{y'\in\calY}~\{L(z, y') - L(y, y')\} + L(y, z) \\
    &= 2L(y, z),
\end{align*}
where at the last step we have used $L(z, y') - L(y, y') \leq L(y, y')$ as $L$ is a distance and so the maximization is achieved at $y'=y$. 

Finally, the argmax decoding is consistent as it is an inverse of the embedding $\psi(y) = -L_y$ as 
\begin{equation*}
d(\psi(y)) = \argmax_{y'\in\calY}~-L(y, y') = \argmin_{y'\in\calY}~L(y, y') = y.
\end{equation*}
\end{proof}






    






    



    
    
    
    
    
    




\subsection{Partial Consistency through dominant label condition}

\begin{lemma}\label{lemapp:distancecorners}
Let $q\in\Delta$ such that $q_1\geq 1/2\geq p_y$ for all $y\neq 1$. If $L$ is a distance, then $L_1^\top q \leq L_{y}^\top q$ for all $y\in\calY$.
\end{lemma}
\begin{proof}
\begin{align*}
    L_z^\top q &= q_1L_{z1} + \sum_{y\neq 1, z}L_{zy}q_y \\
    &\geq \frac{1}{2}L_{z1} + \sum_{y\neq 1, z}L_{zy}q_y \\
    &= \Big(\frac{1}{2} - \sum_{y\neq 1, z}q_y\Big)L_{z1} + \sum_{y\neq 1, z}(L_{z1} + L_{zy})q_y \\
    &\geq \Big(\frac{1}{2} - \sum_{y\neq 1, z}q_y\Big)L_{z1} + \sum_{y\neq 1, z}L_{1y}q_y \\
    &\geq L_{z1}q_z + \sum_{y\neq 1, z}L_{1y}q_y \\
    &= \sum_{y\neq 1}L_{1y}q_y = L_1^\top q.
\end{align*}
\end{proof}

\begin{lemma}\label{lemapp:ineqbayesrisks}
If $L$ is a distance, then $H_{\M} \leq 2H_{L}$.
\end{lemma}
\begin{proof}
In particular, $L$ is also symmetric. Recall that 
\begin{align}
    H_L(q) &= \min_{y\in\calY}L_y^\top q = \min_{\substack{a\succeq L p, \\ p\in\Delta.}}~a^\top q = \min_{a\in\calP_{L}}~a^\top q \\
    \frac{1}{2}H_{\M}(q) &= \min_{1a^\top + a1^\top \succeq L}~a^\top q = \min_{a\in\calP_{L}^{\M}}~a^\top q,
\end{align}
where the second expression is given by \cref{propapp:bayesrisks}. To show that $2H_{\M}\leq H_{L}$ we will show that $\calP_L\subseteq\calP_L^{\M}$. If $a\in\calP_L$, then there exists $p\in\Delta$ such that $a\succeq Lp$.
Moreover, if $L$ is a distance, it means that the triangle inequality $L(y, y') \leq L(y, z) + L(z, y')$, which is equivalent to $L \preceq 1L_z^\top + L_z1^\top$ for all $z\in\calZ$. This can also be written as
\begin{equation}
    L\preceq 1p^\top L + Lp1^\top, \hspace{0.5cm} \forall p\in\Delta.
\end{equation}
Finally, note that if $Lp\preceq a$, then $Lp1^\top \preceq a1^\top$, and the same holds for its transpose $1p^\top L \preceq 1a^\top$. Hence, we obtain that~$L\preceq 1a^\top + a1^\top$, which is equivalent to $a\in\calP_L^{\M}$.
\end{proof}

\begin{proposition}
If $L$ is a distance, then $\frac{1}{2}H_{\M}(q) = H_L(q)$, for all $q\in\Delta$ such that $\|q\|_{\infty} \geq 1/2$. Moreover, under this condition on $q$, it is calibrated with the argmax decoding.
\end{proposition}
\begin{proof}
Combining \cref{lemapp:distancecorners} and \cref{lemapp:ineqbayesrisks} gives 
\begin{equation*}
    H_{\M}(q) \leq 2H_{L}(q) = 2L_y^\top q,
\end{equation*}
for all $q\in\Delta$ such that $p_y\geq 1/2 \geq p_{z}$ for all $z\neq y$. Hence, in order to prove the equality at these dominant label points, we just need to find a matrix $Q\in U(q, q)$ such that $\langle L, Q\rangle_{\F} = 2L_y^\top q$.
We define the matrix $Q$ as
\begin{equation*}
    Q_{ij} = \left\{\begin{array}{lr}
        q_i & \text{if } (i=y)\wedge (i\neq j) \\
        q_j & \text{if }(j=y)\wedge (j\neq i) \\
        2q_{y}-1 & \text{if }i=j=y \\
        0 & \text{otherwise.}
    \end{array}\right. .
\end{equation*}
The matrix $Q$ has $q$ at the $y$-th row and $y$-th column and $2q_y - 1$ at the crossing point, instead of $2q_y$. The matrix is in $U(q, q)$ as the sum of the rows and columns gives $q$ and it is non-negative because $2q_y-1\geq 0$ by assumption. Moreover, the objective satisfies
\begin{align*}
    \langle L, Q\rangle_{\F} = \sum_{y\in\calY}~L_{y'}^\top Q_{y'} &= L_y^\top Q_y + \sum_{y'\neq y}L_{y'}Q_{y'} \\
    &= L_y^\top q + \sum_{y'\neq y}q_{y'}L_{y'}e_{y} \\ 
    &= L_y^\top q + \sum_{y'\neq y}q_{y'}L(y, y') = 2L_y^\top q.
\end{align*}
The first part of the result follows. For the second part, we show that $v_{\M}^\star(q) = -L_{y}$ at the points $q$ satisfying the assumption. Note that we have $H_{\M}(q) = 2L_y^\top q = S(v^\star(q))^\top q$, so we only need to show $S_{\M}(-L_{y}) = 2L_y$. For every $z\in\calY$, we have
\begin{align*}
    S_{\M}(-L_y, z) &= \max_{y'\in\calY}~L(z, y') + (-L_y)^\top e_{y'} - (-L_y)^\top e_z \\
    &= \max_{y'\in\calY}~\{L(z, y') - L(y, y')\} + L(y, z) \\
    &= 2L(y, z),
\end{align*}
where at the last step we have used $L(z, y') - L(y, y') \leq L(y, y')$ as $L$ is a distance and so the maximization is achieved at $y'=y$.
\end{proof}


\section{PROOFS ON RESTRICTED-MAX-MARGIN LOSS}\label{secapp:C}
The following assumption \textbf{A1} will be key to prove our consistency results. \\
\textbf{Assumption A1:} If $q$ is an extreme point of $\Delta(y')$ for some $y'\in\calY$, then 
\begin{equation}
    \{q\in\Delta(y)\}\vee\{q_y = 0\}, \hspace{0.5cm} \forall y\in\calY.
\end{equation}
The following \cref{lem:intersection} will be useful for the results below.
\begin{lemma}\label{lem:intersection}
If \textbf{A1} is satisfied, then $\Delta(y)\cap\Delta(y')\neq\emptyset$ for all $y,y'\in\calY$.
\end{lemma}
\begin{proof}
If $\Delta(y)\cap\Delta(y')=\emptyset$ and \textbf{A1} is satisfied, then for every $q$ extreme point of $\Delta(y')$ we have that $q_y=0$. Hence, the prediction set $\Delta(y)$ is included in the non full-dimensional polyhedron~$\Delta\cap\{e_y=0\}$. As $\Delta = \cup_{y\in\calY}~\Delta(y)$, this implies that the point $e_{y'}\in\Delta(y')$ must be necessarily included in another $\Delta(z)$, which can only be possible if $L(z, y) = 0$. However, by assumption $L(z, y) = 0$ if and only if $z=y$.
\end{proof}

The following \cref{lem-th-5.1} shows that under \textbf{A1}, the Restricted-Max-Margin loss embeds the loss $L$, which in turn implies consistency. 
\begin{lemma}\label{lem-th-5.1} Assume \textbf{A1}. If $q$ is an extreme point of $\Delta(y)$ for some $y\in\calY$, then
\begin{equation*}
    qq^\top \in \underset{Q\in U(q, q)\cap\calC_L}{\argmax}\langle L, Q\rangle_{\F} \hspace{0.5cm}\text{and}\hspace{0.5cm}\Omega_{\MM}(q) = \Omega_{\RM}(q).
\end{equation*}
\end{lemma}
\begin{proof}
The matrix $qq^\top$ belongs to $U(q, q)$ as $qq^\top 1 = q$ and $qq^\top\succeq 0$ and to $\calC_L$ by assumption. Let $z\in y^\star(q)$. We have that 
\begin{align*}
    -\Omega_{\RM}(q) &= \underset{Q\in U(q, q)\cap\calC_L}{\max}\langle L, Q\rangle_{\F} \\
    &\geq \langle L, qq^\top\rangle_{\F} = \sum_{y}q_yL_y^\top q = \sum_yq_yL_z^\top q \\
    &= L_z^\top(1^\top qq^\top) = L_z^\top q = -\Omega_{\MM}(q)
\end{align*}
We have shown that $\Omega_{\RM}(q)\leq \Omega_{\MM}(q)$. Combining with Proposition 3.4 that states $\Omega_{\RM}\geq \Omega_{\MM}$, we obtain that~$\Omega_{\RM}(q) = \Omega_{\MM}(q)$.
\end{proof}
\begin{theorem}
If \textbf{A1} is satisfied, the Restricted-Max-Margin loss embeds~$L$ with embedding~$\psi(y) = -L_y$ and the loss is consistent to $L$ under the argmax decoding.
\end{theorem}
\begin{proof}
We split the proof into two parts.

\paragraph{First part: $\boldsymbol{S_{\RM}}$ embeds $\boldsymbol{L}$.} Let $z\in y^\star(q)$, so that $q\in\Delta(z)$. We can write $q$ as a convex combination of extreme points of the polytope $\Delta(z)$ as
\begin{equation*}
    q = \sum_{i=1}^m\alpha_iq_i,
\end{equation*}
where $\alpha\in\Delta_m$ and $q_i$ is an extreme point of $\Delta(z)$. The matrix $Q = \sum_{i=1}^m\alpha_iq_iq_i^\top$ belongs to $U(q, q)\cap \calC_L$ as:
\begin{itemize}
    \item $Q\in U(q, q)$: We have $Q1 = \sum_{i=1}^m\alpha_iq_iq_i^\top 1 = \sum_{i=1}^m\alpha_iq_i = q$, the same holds for $Q^\top$ and $\sum_{i=1}^m\alpha_iq_iq_i^\top \succeq 0$.
    \item $Q\in\calC_L$: For all $y\in\calY$, we have
    $(1L_y^\top - L)Q_y = \sum_{i=1}^m \alpha_i\underbrace{q_{i,y}(1L_y^\top - L)q_i}_{\preceq 0} \preceq 0$.
\end{itemize}
Moreover, we obtain:
\begin{align*}
    -\Omega_{\RM}(q) &= \underset{Q\in U(q, q)\cap\calC_L}{\max}\langle L, Q\rangle_{\F} \\
    &\geq \langle L, Q\rangle_{\F} = \sum_{i=1}^m\alpha_i\langle L, q_iq_i^\top\rangle_{\F} \\
    &= -\sum_{i=1}^m\alpha_i\Omega_{\RM}(q_i) =-\sum_{i=1}^m\alpha_i\Omega_{\MM}(q_i) \\
    &= \sum_{i=1}^m\alpha_iL_z^\top q_i = L_z^\top q = -\Omega_{\MM}(q)
\end{align*}
We have shown $\Omega_{\RM}\leq \Omega_{\MM}$. Combining with Proposition 3.4 that states $\Omega_{\RM}\geq \Omega_{\MM}$, we obtain $\Omega_{\RM} = \Omega_{\MM}$.

\paragraph{Second part: the embedding is $\boldsymbol{\psi(y) = -L_y}$.} By \cref{prop:bayesriskembedding}, we only need to show that $\psi(z) = -L_z$, i.e.,
\begin{equation*}
    S_{\RM}(-L_z, y) = \sup_{q\in\Delta(y)}~L_y^\top q + (-L_z)^\top q - (-L_z)^\top e_y = L(y, z),
\end{equation*}
which holds whenever
\begin{equation}
    \max_{q\in\Delta(y)}~(L_y - L_z)^\top q = 0,
\end{equation}
for all $y,z\in\calY$. Note that by construction $(L_y - L_z)^\top q \leq 0$ for all $q\in\Delta(y)$. Moreover, by \cref{lem:intersection} we have that $\Delta(z)\cap\Delta(y)\neq\emptyset$, so there exists $q\in\Delta(y)$ with $L_y^\top q = L_z^\top q$.

Finally, the argmax decoding is consistent as it is an inverse of the embedding $\psi(y) = -L_y$ as 
\begin{equation*}
d(\psi(y)) = \argmax_{y'\in\calY}~-L(y, y') = \argmin_{y'\in\calY}~L(y, y') = y.
\end{equation*}
\end{proof}

\begin{proposition}\label{prop:sufficientsimple} Assume that $q\in\Delta(y) \implies q_y > 0$ for all $q\in\Delta$. Then \textbf{A1} is satisfied. 
\end{proposition}
\begin{proof}
We will prove that if the Assumption is not satisfied then it exists a vertex of $\Delta(y)$ for some $y\in\calY$ such that $S\cap T\neq\emptyset$.
If the Assumption is not satisfied at vertex $q$, then $\{q\notin \Delta(y)\}\wedge\{q_y>0\}$, which means in particular that $S\cup T \subsetneq[k]$. This necessarily means that $S\cap T\neq\emptyset$ because we must have $|S|+|T|\geq k$ to have maximal rank as $q$ is a vertex. 
\end{proof}

\begin{proposition}
Consistency of the Max-Margin implies consistency of Restricted-Max-Margin. 
\end{proposition}
\begin{proof}
From \cref{propapp:maxmarginbayesrisks} and \cref{prop:weakerconsistency}, we know that if the Max-Margin loss is consistent to $L$, then the extreme points of the prediction sets $\Delta(y)$'s have to be of the form $1/2(e_y + e_{y'})$. We will see that in this case \textbf{(A1)} is always satisfied. Indeed, if $q$ is an extreme point of a prediction set, then is of the form $q=1/2(e_y + e_{y'})$, which satisfies $\{q\in\Delta(z)\}\vee\{q_z = 0\}$ for all $z\in\calY$, because $q\in\Delta(z)$ if $z\in\{y,y'\}$ and $q_z=0$ otherwise. 
\end{proof}



    
    
    
    
    
    
    


    







\bibliography{references}
\bibliographystyle{abbrv}